\title{Robust Reinforcement Learning using Offline Data}
\newtheorem{lemma}{Lemma}
\newtheorem{assumption}{Assumption}
\newtheorem{proposition}{Proposition}
\newtheorem{theorem}{Theorem}
\newtheorem{definition}{Definition}
\theoremstyle{remark} 
\newtheorem{remark}{Remark}
\newcommand{\dd}{\mathrm{d}\,}
\newcommand{\E}{\mathbb{E}}
\newcommand{\pr}{\mathbb{P}}
\newcommand{\R}{\mathbb{R}}
\newcommand{\cS}{\mathcal{S}} 
\newcommand{\cA}{\mathcal{A}}
\newcommand{\ScA}{\cS\times\cA}
\newcommand{\cP}{\mathcal{P}}
\newcommand{\cN}{\mathcal{N}}
\newcommand{\cF}{\mathcal{F}}
\newcommand{\cG}{\mathcal{G}}
\newcommand{\cD}{\mathcal{D}}
\newcommand{\cO}{\mathcal{O}}
\newcommand{\cX}{\mathcal{X}}
\newcommand{\cM}{\mathcal{M}}
\newcommand{\cH}{\mathcal{H}}
\newcommand{\cC}{\mathcal{C}}
\renewcommand{\epsilon}{\varepsilon}
\renewcommand{\phi}{\varphi}
\newcommand{\norm}[1]{\left\| #1 \right\|}%  
\DeclareMathOperator*{\argmax}{arg\,max}
\DeclareMathOperator*{\argmin}{arg\,min}
\newcommand{\bluetext}[1]{{\leavevmode\color{blue}#1}}
\author{Kishan Panaganti$^{1}$, Zaiyan Xu$^{1}$, Dileep Kalathil$^{1}$, Mohammad Ghavamzadeh$^{2}$ \\
  $^{1}$Texas A\&M University, $^{2}$Google Research. \\Emails:  \texttt{\{kpb, zxu43 ,dileep.kalathil\}@tamu.edu}, \texttt{ghavamza@google.com}
  % examples of more authors
  % \And
  % Coauthor \\
  % Affiliation \\
  % Address \\
  % \texttt{email} \\
  % \AND
  % Coauthor \\
  % Affiliation \\
  % Address \\
  % \texttt{email} \\
  % \And
  % Coauthor \\
  % Affiliation \\
  % Address \\
  % \texttt{email} \\
  % \And
  % Coauthor \\
  % Affiliation \\
  % Address \\
  % \texttt{email} \\
}
\begin{document}

\maketitle

\begin{abstract}
  The  goal of robust reinforcement learning (RL)  is to learn a policy that is robust against the uncertainty in model parameters. Parameter uncertainty commonly occurs in many real-world RL applications due to  simulator modeling errors,  changes in the real-world system dynamics over time, and  adversarial disturbances. Robust RL is typically formulated as a max-min problem, where the objective is to learn the policy that maximizes the value  against the worst possible models that lie in an uncertainty set. In this work, we propose a  robust RL algorithm called Robust Fitted Q-Iteration (RFQI), which uses only an offline dataset to learn the optimal robust policy.  Robust RL with offline data is significantly more challenging than its non-robust counterpart because of the minimization over all models present in the robust Bellman operator. This poses challenges in offline data collection,  optimization over the models, and unbiased estimation. In this work, we propose a systematic approach to overcome these challenges, resulting in our RFQI algorithm. We prove that RFQI learns a near-optimal robust policy under standard assumptions and demonstrate its superior performance on standard benchmark problems. %We prove that the proposed algorithm learns a near-optimal robust policy under standard assumptions. We also demonstrate the superior performance of RFQI on standard benchmark problems.
\end{abstract}

\section{Introduction}
\label{sec:introduction}

Reinforcement learning (RL) algorithms often require a large number of data samples to learn a control policy. As a result, training them directly on the real-world systems is expensive and potentially dangerous. This problem is typically overcome by training them on a simulator (online RL) or using a pre-collected offline dataset (offline RL). The offline dataset is usually collected either from a sophisticated simulator of the real-world system or from the historical measurements. The trained RL policy is then deployed assuming that the training environment, the simulator or the offline data, faithfully represents the model of the real-world system. This assumption is often incorrect due to multiple factors such as the approximation errors incurred while modeling, changes in the real-world parameters over time and possible adversarial disturbances in the real-world. For example, the standard simulator settings of the sensor noise, action delay, friction, and mass of a mobile robot can be different from that of the actual real-world robot, in addition to changes in the terrain, weather conditions, lighting, and obstacle densities of the testing environment. 
Unfortunately, the current RL control policies can fail dramatically when faced with even mild changes in the training and testing environments \citep{sunderhauf2018limits, tobin2017domain, peng2018sim}.

The goal in robust RL is to learn a policy that is robust against the model parameter mismatches between the training and testing environments. The robust planning problem is formalized using the framework of Robust Markov Decision Process (RMDP) \citep{iyengar2005robust,nilim2005robust}. Unlike the standard MDP which considers a single model (transition probability function), the RMDP formulation considers a set of models which is called the \textit{uncertainty set}. The goal is to find an optimal robust policy that performs the best under the worst possible model in this uncertainty set.  
The minimization over the uncertainty set makes the robust MDP and robust RL problems significantly more challenging than their non-robust counterparts.

In this work, we study the problem of developing a robust RL algorithm with provably optimal performance for an RMDP with arbitrarily large state spaces, using only offline data with function approximation. Before stating the contributions of our work, we provide a brief overview of the results in offline and robust RL that are directly related to ours. We leave a more thorough discussion on related works to Appendix~\ref{sec:app:related-works}.

\looseness=-1\textbf{Offline RL:} Offline RL  considers the problem of learning the optimal policy only using a pre-collected (offline) dataset. Offline RL problem has been addressed extensively in the literature \citep{antos2008learning, bertsekas2011approximate,  lange2012batch, chen2019information, xie2020q, levine2020offline, xie2021bellman}. Many recent works develop deep RL algorithms and heuristics for the offline RL problem, focusing on the algorithmic and empirical aspects \citep{fujimoto2019off, kumar2019stabilizing, kumar2020conservative, yu2020mopo, zhang2021towards}. A number of theoretical work focus on analyzing the variations of Fitted Q-Iteration (FQI) algorithm \citep{gordon1995stable, ernst2005tree}, by   identifying the necessary and sufficient conditions for the learned policy to be approximately optimal and characterizing the performance in terms of sample complexity \citep{munos08a, farahmand2010error, lazaric2012finite, chen2019information, liu2020provably, xie2021bellman}. All these works assume that the offline data is generated according to a single model and the goal is to find the optimal policy for the MDP with the same model. In particular, none of these works consider the \textit{offline robust RL problem} where the offline data is generated according to a (training) model which can be different from the one in testing, and the goal is to learn a policy that is robust w.r.t.~an uncertainty set.

\textbf{Robust RL:} The RMDP framework was first introduced in~\citet{iyengar2005robust, nilim2005robust}. The RMDP problem has been analyzed extensively in the literature \citep{xu2010distributionally, wiesemann2013robust, yu2015distributionally,   mannor2016robust, russel2019beyond} providing computationally efficient algorithms,   but these works are limited to the {planning} problem.  Robust RL algorithms with provable guarantees have also been proposed \citep{lim2013reinforcement,tamar2014scaling,roy2017reinforcement,panaganti2020robust, wang2021online}, but they are limited to tabular or linear function approximation settings and only provide asymptotic convergence guarantees. Robust RL problem has also been addressed using deep RL methods \citep{pinto2017robust,  derman2018soft, derman2020bayesian, Mankowitz2020Robust, zhang2020robust}. However, these works  do not provide any theoretical guarantees on the performance of the learned policies.

The works that are closest to ours are by \citet{zhou2021finite,yang2021towards,panaganti22a} that address the robust RL problem in a tabular setting under the generative model assumption. Due to the generative model assumption, the offline data has the same uniform number of samples corresponding to each and every state-action pair, and tabular setting allows the estimation of the uncertainty set followed by solving the planning problem. Our work is significantly different from these in the following way: $(i)$ we consider a robust RL problem with arbitrary large state space, instead of the small tabular setting, $(ii)$ we consider a true offline RL setting where the state-action pairs are sampled according to an arbitrary distribution, instead of using the generative model assumption, $(iii)$ we focus on a function approximation approach where the goal is to directly learn optimal robust value/policy using function approximation techniques, instead of solving the  tabular planning problem with the estimated model. \textit{To the best of our knowledge, this is the first work that addresses the offline robust RL problem with arbitrary large state space  using function approximation, with provable guarantees on the performance of the learned policy.}

\textbf{Offline Robust RL: Challenges and Our Contributions:} Offline robust RL is significantly more challenging than its non-robust counterpart mainly because of the following key difficulties. \\
$(i)$ Data generation: The optimal robust policy is computed by taking the infimum over all models in the uncertainty set $\cP$. However, generating data according to all models in $\cP$ is clearly infeasible. It may only be possible to get the data from a nominal (training) model $P^{o}$. \textit{How do we use the data  from a  nominal model to account for the behavior of all the models in the uncertainty set $\cP$?} \\
$(ii)$ Optimization over the uncertainty set $\cP$: The robust Bellman operator (defined in \eqref{eq:robust-bellman-primal}) involves a minimization over $\cP$, which is a significant computational challenge. Moreover, the uncertainty set $\cP$ itself is unknown in the RL setting. \textit{How do we solve the optimization over $\cP$?} \\
$(iii)$  Function approximation: Approximation of the robust Bellman update requires a modified target function which also depends on the approximate solution of the optimization over the uncertainty set. \textit{How do we perform the offline RL update accounting for both approximations? }

\looseness=-1 As the \textit{key technical contributions} of this work, we first derive a dual reformulation of the robust Bellman operator which replaces the expectation w.r.t.~all models in the uncertainty set $\cP$ with an expectation  only w.r.t.  the nominal (training) model $P^o$. This enables using the offline data generated by $P^o$ for learning, without relying on high variance importance sampling techniques to account for all models in $\cP$. Following the same reformulation, we then show that the  optimization problem over $\cP$ can be further reformulated as functional optimization. We solve this functional optimization problem using empirical risk minimization and obtain  performance guarantees using the Rademacher complexity based bounds. We then use the approximate solution obtained from the empirical risk minimization to generate modified target samples that are then used to approximate robust Bellman update through a generalized least squares approach with provably bounded errors. Performing these operations iteratively results in our proposed Robust Fitted Q-Iteration (RFQI) algorithm, for which we prove that its learned policy achieves non-asymptotic and approximately optimal performance guarantees.

\textbf{Notations:} For a set $\cX$, we denote its cardinality as $|\cX|$. The set of probability distribution over $\cX$  is denoted  as $\Delta(\cX)$, and its power set sigma algebra as $\Sigma(\cX)$. For any $x \in \mathbb{R}$, we denote $\max\{x, 0\}$ as $(x)_{+}$.
For any function $f:\ScA\rightarrow \R$, state-action distribution $\nu\in\Delta({\ScA})$, and real number $p\geq 1$, the $\nu$-weighted $p$-norm of $f$ is defined as $\|f\|_{p,\nu} = \E_{s,a\sim\nu}[|f(s,a)|^p]^{1/p}$.

\section{Preliminaries}
\label{sec:preliminaries}

A Markov Decision Process (MDP) is a tuple $(\cS, \cA, r, P, \gamma, d_0)$, where $\cS$ is the  state space, $\cA$ is the  action space,  $r: \cS\times \cA\rightarrow \R$ is the reward function, $\gamma \in (0, 1)$ is the discount factor, and $d_0\in \Delta({\cS})$ is the initial state distribution. The transition probability function $P_{s,a}(s')$ is the probability of transitioning to state $s'$ when  action $a$ is taken at state $s$. In the literature, $P$ is also called the \textit{model} of the MDP. We consider a setting where  $|\cS|$ and $|\cA|$ are finite but can be arbitrarily large. We will also assume that $r(s,a) \in [0,1]$, for all $(s,a) \in \ScA$,  without loss of generality.  A policy $\pi : \cS \rightarrow \Delta({\cA})$  is a conditional distribution over actions given a state. The value function $V_{\pi, P}$ and the state-action value function $Q_{\pi,P}$ of a policy $\pi$ for an MDP with model $P$ are defined as
\vspace{-0.1cm}
\begin{align*}
% \label{eq:non-robust-Vpi}
V_{\pi, P}(s) = \E_{\pi, P}[\sum^{\infty}_{t=0} \gamma^{t} r(s_{t}, a_{t}) ~|~ s_{0} = s], \quad  Q_{\pi, P}(s, a) = \E_{\pi, P}[\sum^{\infty}_{t=0} \gamma^{t} r(s_{t}, a_{t}) ~|~ s_{0} = s, a_{0} = a], 
\end{align*}
where the expectation is over the randomness induced by the policy $\pi$ and model $P$. 
The optimal value function $V^{*}_{P}$ and the optimal policy  $\pi^{*}_{P}$ of an MDP with the model $P$ are defined as $V^{*}_{P} = \max_{\pi} V_{\pi, P}$ and $\pi^{*}_{P} = \argmax_{\pi} V_{\pi, P}$. 
The optimal state-action value function is given by $Q^{*}_{P} = \max_{\pi} Q_{\pi, P}$. The optimal policy can  be obtained as  $\pi^{*}_{P}(s) = \argmax_{a} Q_P^{*}(s, a)$. The discounted state-action occupancy of a policy $\pi$ for an MDP with model $P$, denoted as $d_{\pi, P} \in \Delta({\ScA})$, is defined as $d_{\pi,P}(s,a)=(1-\gamma) \E_{\pi,P}[\sum_{t=0}^\infty \gamma^{t} \mathbbm{1}(s_t=s,a_t=a)]$.

\textbf{Robust Markov Decision Process (RMDP):} Unlike the standard MDP which considers a single model (transition probability function), the RMDP formulation considers a set of models. We refer to this set as the \textit{uncertainty set} and denote it as $\cP$. We consider  $\cP$ that satisfies the standard \textit{$(s,a)$-rectangularity condition} \citep{iyengar2005robust}. We note that a similar {uncertainty set} can be considered for the reward function at the expense of additional notations. However, since the analysis will be similar and the sample complexity guarantee will be identical up to a constant factor, without loss of generality,  we assume that the reward function  is known and deterministic.  

We specify an RMDP as $M = (\cS, \cA, r, \cP, \gamma, d_0)$, where the uncertainty set $\mathcal{P}$ is typically defined as
\begin{align}
    \label{eq:uncertainty-set}
    \mathcal{P} = \otimes_{(s,a) \in \ScA }\, \mathcal{P}_{s,a}, \quad \text{where} \;\;
    \mathcal{P}_{s,a} =  \{ P_{s,a} \in \Delta(\cS)~:~ D(P_{s,a}, P^o_{s,a}) \leq \rho  \},
\end{align}
$P^{o} = (P^o_{s,a}, (s, a) \in \ScA)$ is the \textit{nominal model}, $D(\cdot, \cdot)$ is a distance metric between two probability distributions, and $\rho>0$ is the radius of the uncertainty set that indicates the level of robustness. The nominal model $P^{o}$ can be thought as the model of  the training environment. It is either the model of the simulator on which the (online) RL algorithm is trained, or in our setting, it is the model according to which the offline data is generated. The uncertainty set  $\cP$ \eqref{eq:uncertainty-set} is the set of all valid transition probability functions (valid testing models) in the neighborhood of the nominal model $P^{o}$, which by definition satisfies $(s,a)$-rectangularity condition \citep{iyengar2005robust}, where the neighborhood is defined using the distance metric $D(\cdot, \cdot)$ and radius $\rho$. In this work, we consider the \textit{Total Variation (TV) uncertainty set} defined using the TV distance, i.e.,  $D(P_{s,a}, P^{o}_{s,a}) = (1/2) \|P_{s,a} - P^{o}_{s,a} \|_{1}$.

The RMDP problem is to find the optimal robust policy which maximizes the value against the worst possible model in the uncertainty set $\mathcal P$. The \textit{robust value function} $V^{\pi}$ corresponding to a policy $\pi$ and the \textit{optimal robust value function} $V^{*}$ are defined as \citep{iyengar2005robust,nilim2005robust}
\begin{align}
\label{eq:robust-value-function}
V^{\pi} = \inf_{P \in \mathcal{P}} ~V_{\pi, P},\qquad V^{*} = \sup_{\pi} \inf_{P \in \mathcal{P}} ~V_{\pi, P} . 
\end{align} 
The \textit{optimal robust policy} $\pi^{*}$ is such that the robust value function corresponding to it matches the optimal robust value function, i.e., $V^{\pi^*} =V^{*} $. It is known that there exists a deterministic optimal policy \citep{iyengar2005robust} for the RMDP. The \textit{robust Bellman operator} is defined  as \citep{iyengar2005robust}
\begin{align}
\label{eq:robust-bellman-primal}
    (T Q)(s, a) = r(s, a) + \gamma \inf_{P_{s,a} \in \mathcal{P}_{s,a}} \mathbb{E}_{s' \sim P_{s,a}} [ \max_{b} Q(s', b)]. 
\end{align}
It is known that $T$ is a contraction mapping in the infinity norm and hence it has a unique fixed point $Q^{*}$ with $V^{*}(s) = \max_{a} Q^{*}(s,a)$ and $\pi^{*}(s) = \argmax_{a} Q^{*}(s,a)$ \citep{iyengar2005robust}. The \textit{Robust Q-Iteration (RQI)}  can now be defined using the robust Bellman operator as  $Q_{k+1} = T Q_{k}$. Since $T$ is a contraction, it follows that $Q_{k} \rightarrow Q^{*}$. So, RQI can be used to compute (solving the planning problem) $Q^{*}$ and $\pi^{*}$ in the tabular setting with a known $\cP$. Due to the  optimization over the uncertainty set $\cP_{s,a}$ for each $(s,a)$ pair, solving the planning problem in RMDP using RQI is much more computationally intensive than solving it in MDP using Q-Iteration.

\textbf{Offline RL:}  Offline RL considers the problem of learning the optimal policy of an MDP when the  algorithm does not have direct access to the environment  and  cannot generate  data samples in an online manner. For learning the optimal policy $\pi^{*}_{P}$ of an MDP with model $P$, the  algorithm will only have  access to an offline   dataset $\cD_{P}=\{(s_i,a_i,r_{i}, s'_{i})\}_{i=1}^N$, where $(s_{i}, a_{i}) \sim \mu$, $\mu \in \Delta (\ScA)$ is some  distribution, and $s'_{i} \sim P_{s_{i},a_{i}}$. \textit{Fitted Q-Iteration (FQI)} is a popular offline  RL approach which is amenable to theoretical analysis while  achieving impressive empirical performance. In addition to the dataset $\cD_{P}$, FQI uses a function class $\cF=\{f:\ScA\to [0, 1/(1-\gamma)] \}$ to approximate $Q^{*}_{P}$. The typical FQI update is given by $ f_{k+1} = \argmin_{f \in \cF}  \sum^{N}_{i=1}  (r(s_{i}, a_{i}) + \gamma \max_{b} f_{k}(s'_{i}, b) - f(s_{i}, a_{i}) )^{2}$, which aims to approximate the non-robust Bellman update using offline data with function approximation.  Under suitable assumptions, it is possible to obtain provable performance guarantees for FQI \citep{szepesvari2005finite, chen2019information, liu2020provably}.

\section{Offline Robust Reinforcement Learning}
\label{sec:offline-robust-RL-basics}

The goal of an offline robust  RL algorithm is to learn the optimal robust policy $\pi^{*}$ using a pre-collected offline dataset $\cD$. The data is typically generated according to a nominal (training) model $P^{o}$, i.e., $\cD =\{(s_i,a_i,r_{i}, s'_{i})\}_{i=1}^N$, where $(s_{i}, a_{i}) \sim \mu, \mu  \in \Delta (\ScA)$ is some data generating distribution, and $s'_{i} \sim P^{o}_{s_{i},a_{i}}$. The uncertainty set $\cP$ is defined around this nominal model $P^{o}$ as given in \eqref{eq:uncertainty-set} w.r.t. the total variation distance metric. We emphasize that the learning algorithm does not know the nominal model $P^{o}$ as it has only access to $\cD$, and hence it also does not know $\cP$. Moreover, the learning algorithm does not have data generated according to any other models in $\cP$ and has to rely only on $\cD$ to account for the behavior w.r.t. all models in $\cP$.

Learning policies for RL problems with large state-action spaces is computationally  intractable. RL algorithms typically overcome this issue by using function approximation. In this paper, we consider two function classes $\cF=\{f:\ScA\to [0, 1/(1-\gamma)] \}$ and $\cG=\{g :\ScA\to [0, 2/(\rho(1-\gamma))]\}$. We use $\cF$ to approximate $Q^{*}$ and $\cG$ to approximate the dual variable functions which we will introduce in the next section. For simplicity, we will first  assume that these function classes are finite but exponentially large, and we will use the standard log-cardinality  to characterize the sample complexity results, as given in Theorem \ref{thm:tv-guarantee}. We note that, at the cost of additional notations and analysis, infinite function classes can also be considered where the log-cardinalities are replaced by the appropriate notions of covering number.

Similar to the non-robust offline RL, we make the following standard assumptions about the data generating distribution $\mu$ and  the representation power of $\cF$.

\begin{assumption}[Concentratability] 
\label{assum-concentra-condition}
There exists a finite constant $C>0$ such that for any $\nu\in \{ d_{\pi,P^o} ~|\text{ any policy}~ \pi\} \subseteq \Delta(\ScA)$, we have $\norm{\nu/\mu}_{\infty} \leq \sqrt{C}$.
\end{assumption}

Assumption \ref{assum-concentra-condition} states that the ratio of the distribution $\nu$ and the data generating distribution $\mu$, $\nu(s,a)/\mu(s,a)$, is uniformly bounded. This assumption is widely used in the offline RL literature \citep{munos2003error,agarwal2019reinforcement,chen2019information,wang2021what,xie2021bellman} in many different forms. We borrow this assumption from \citet{chen2019information}, where they used it for non-robust offline RL. In particular, we note that the distribution $\nu$ is in the collection of discounted state-action occupancies on model $P^o$ alone for the robust RL.

\begin{assumption}[Approximate completeness] 
\label{assum-bellman-completion}
Let $\mu\in\Delta(\ScA)$ be the data  distribution. Then, $\sup_{f\in\cF} \inf_{f'\in\cF} \| f' - T f \|_{2,\mu}^2 \leq \epsilon_{\mathrm{c}}.$
\end{assumption}
Assumption \ref{assum-bellman-completion} states that the function class $\cF$ is approximately closed under the robust Bellman operator $T$. This assumption has also been widely used in the offline RL literature~\citep{agarwal2019reinforcement,chen2019information,wang2021what,xie2021bellman}.

One of the most important properties that the function class $\cF$ should have is that there must exist a function $f' \in \cF$ which well-approximates $Q^{*}$.  This assumption is typically called \textit{approximate realizability} in the offline RL literature. This is typically formalized by assuming $\inf_{f \in \cF} \norm{f - Tf}^{2}_{2, \mu} \leq \epsilon_{\mathrm{r}}$ \citep{chen2019information}. It is known that the approximate completeness assumption and the concentratability assumption imply the realizability assumption \citep{chen2019information,xie2021bellman}.

\section{Robust Fitted Q-Iteration: Algorithm and Main Results}
\label{sec:algorithm-results}

In this section, we give a step-by-step approach to overcome the challenges of the offline robust RL outlined in Section \ref{sec:introduction}. We then combine these intermediate steps to obtain our proposed RFQI algorithm. We then present our main result about the performance guarantee of the RFQI algorithm, followed by a brief description about the  proof approach.

\subsection{Dual Reformulation of Robust Bellman Operator}  
\label{sec:rob-Bellman-tv}

One key challenge in directly using the standard definition of the optimal robust value function given in \eqref{eq:robust-value-function} or of the robust Bellman operator  given in \eqref{eq:robust-bellman-primal} for developing and analyzing robust RL algorithms  is that both involve computing an expectation w.r.t. each model $P \in \cP$. Given that the data is generated only according to the nominal model $P^{o}$, estimating these expectation values is really challenging. We show that we can overcome this difficulty through the dual reformulation of the robust Bellman operator, as given below.

\begin{proposition}
\label{prop:robust-bellman-dual}
Let $M$ be an RMDP with the uncertainty set $\cP$ specified by~\eqref{eq:uncertainty-set} using the total variation distance $D(P_{s,a}, P^{o}_{s,a}) = (1/2) \|P_{s,a} - P^{o}_{s,a} \|_{1}$. Then, for any $Q: \ScA\to [0, 1/(1-\gamma)]$, the robust Bellman operator $T$ given in \eqref{eq:robust-bellman-primal} can be equivalently written as
\begin{align}
\label{eq:robust-bellman-dual}
    (T Q)(s, a) = r(s, a) - \gamma ~ \inf_{\eta\in[0, \frac{2}{\rho(1-\gamma)}]} (\E_{s' \sim P^{o}_{s,a}}[(\eta -V(s'))_+] - \eta + \rho (\eta - \inf_{s''} V(s''))_+ ), 
\end{align} 
where $V(s)=\max_{a\in\cA} Q(s,a)$. Moreover, the  inner optimization problem in \eqref{eq:robust-bellman-dual}  is convex in $\eta$.
\end{proposition}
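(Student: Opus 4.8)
The plan is to start from the primal definition of the robust Bellman operator in \eqref{eq:robust-bellman-primal} and isolate the inner minimization
\[
\inf_{P_{s,a}\in\cP_{s,a}} \E_{s'\sim P_{s,a}}[V(s')]
= \inf_{\substack{q\in\Delta(\cS)\\ (1/2)\|q - P^o_{s,a}\|_1\le\rho}} \sum_{s'} q(s') V(s'),
\]
which is a linear program over the probability simplex intersected with a TV-ball. First I would write the TV constraint $(1/2)\|q-P^o_{s,a}\|_1\le\rho$ in a form amenable to Lagrangian duality — either directly, or by the standard trick of splitting $q-P^o_{s,a}$ into positive and negative parts and using $\sum_{s'}(q(s')-P^o_{s,a}(s'))=0$ so that the TV distance equals the sum of the positive part. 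The key fact to invoke is strong duality for this LP (Slater's condition holds since $P^o_{s,a}$ itself is strictly feasible for $\rho>0$), which lets me pass to the dual without a gap.

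Next I would carry out the Lagrangian dualization, introducing a scalar multiplier $\eta\in\R$ for a suitably chosen equality/normalization constraint and a nonnegative multiplier for the TV inequality, and then solve the resulting inner minimization over $q$ in closed form. The simplex structure makes the inner problem separable across $s'$, so the minimizing $q$ is a thresholded/clipped version of $P^o_{s,a}$ depending on whether $V(s')$ lies above or below the multiplier $\eta$; this is exactly what produces the $(\eta - V(s'))_+$ term in the expectation and the $(\eta - \inf_{s''}V(s''))_+$ term coming from the mass that gets pushed onto the state with the smallest value. I expect to recognize this as an instance of the known dual formula for TV-distributionally-robust expectations (the ``robust expectation'' / CVaR-flavored identity), so I would either cite that or reproduce the few lines of calculation. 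Tracking the range of $\eta$ carefully gives the constraint $\eta\in[0,\tfrac{2}{\rho(1-\gamma)}]$: since $V$ takes values in $[0,1/(1-\gamma)]$, the optimal $\eta$ need not exceed $1/(1-\gamma)$ in the unconstrained sense, but the factor $2/\rho$ appears because of how the multiplier for the TV constraint is normalized against $\rho$; I would bound the optimal $\eta$ explicitly using the boundedness of $V$ and the fact that $\rho$ multiplies the $(\eta-\inf V)_+$ term, so that values of $\eta$ outside this interval are strictly suboptimal and can be dropped without changing the infimum.

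Finally, convexity of the inner objective in $\eta$ is the easy part: the map $\eta\mapsto\E_{s'\sim P^o_{s,a}}[(\eta-V(s'))_+]$ is an expectation of convex functions of $\eta$ hence convex, $\eta\mapsto -\eta$ is linear (hence convex), and $\eta\mapsto\rho(\eta-\inf_{s''}V(s''))_+$ is a nonnegative multiple of a convex function; the sum of convex functions is convex, and restricting to the interval $[0,\tfrac{2}{\rho(1-\gamma)}]$ preserves convexity. The main obstacle I anticipate is the bookkeeping in the dualization step — correctly setting up the Lagrangian so that the closed-form inner minimizer cleanly yields both the $(\eta-V(s'))_+$ expectation term and the separate $\inf_{s''}V(s'')$ term, and then pinning down the precise interval for $\eta$ rather than just ``some compact interval.'' Once the closed form for the inner minimizer is in hand, substituting back and simplifying to match \eqref{eq:robust-bellman-dual} should be routine, and the convexity claim follows immediately.
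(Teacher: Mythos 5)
Your plan is correct, and it reaches \eqref{eq:robust-bellman-dual} by a somewhat different route than the paper. The paper does not set up the Lagrangian for the TV-constrained LP from scratch: it invokes the general $f$-divergence DRO duality (stated as a cited proposition, from Shapiro and Duchi--Namkoong), computes the Fenchel conjugate of $f(t)=|t-1|/2$ explicitly, eliminates the multiplier $\lambda$ by setting $\lambda=(\sup_x l(x)-\eta)_+$, and then converts the resulting formula for the \emph{supremum} into one for the \emph{infimum} by replacing $l$ with $-l$ and re-substituting $\eta\mapsto-\eta$; your direct dualization of the minimization over $q\in\Delta(\cS)$ with $(1/2)\|q-P^o_{s,a}\|_1\le\rho$ avoids both the conjugate computation and this sign-flip bookkeeping, at the cost of having to carry out the closed-form inner minimization (mass removed above the threshold $\eta$, pushed onto the lowest-value state) yourself — strong duality is immediate here since the problem is a finite LP, so Slater is not even needed. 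Your treatment of the interval for $\eta$ also differs slightly but harmlessly: you propose to show the unconstrained optimizer lies in the interval (indeed the optimal $\eta$ never exceeds $\max_{s}V(s)\le 1/(1-\gamma)\le 2/(\rho(1-\gamma))$, and $\eta\le 0$ is dominated by $\eta=0$), whereas the paper argues via convexity of $h(\eta)$ together with $h(\eta)\ge 0$ for $\eta\le 0$ and $h\bigl(2/(\rho(1-\gamma))\bigr)\ge 0=h(0)$; either argument justifies restricting to $[0,2/(\rho(1-\gamma))]$. The convexity claim is handled identically in both (the paper inherits it from the cited dual, you observe it directly as a sum of convex functions), so the only caution for your write-up is to do the dual bookkeeping carefully so that the $(\eta-V(s'))_+$ expectation and the separate $\rho(\eta-\inf_{s''}V(s''))_+$ term both emerge, which is exactly the obstacle you already flagged.
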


This result mainly relies on \citet[Section 3.2]{shapiro2017distributionally} and~\citet[Proposition 1]{duchi2018learning}.
Note that in \eqref{eq:robust-bellman-dual}, the expectation is now only w.r.t.~the nominal model $P^{o}$, which opens up the possibility of using  empirical estimates obtained from the data generated according to $P^{o}$. This avoids the need to use importance sampling based techniques to account for all models in $\cP$, which often have high variance, and thus, are not desirable.

While \eqref{eq:robust-bellman-dual} provides a form that is amenable to estimation using  offline data, it involves finding $ \inf_{s''} V(s'')$. Though this computation is straightforward in a  tabular setting, it is infeasible in a function approximation setting. In order to overcome this issue, we make the following assumption. 

\begin{assumption}[Fail-state] 
\label{assum-fail-state}
The RMDP $M$ has a `fail-state' $s_{f}$, such that $r(s_{f}, a) = 0$ and $P_{s_{f}, a}(s_{f}) = 1,\;\forall a \in \cA,\;\forall P \in \cP$. 
\end{assumption}
We note that this is not a very restrictive assumption because  such a `fail-state' is quite natural in most simulated or real-world systems. For example, a state where a robot collapses and is not able to get up, either in a simulation environment like MuJoCo or in real-world setting, is such a fail state.

Assumption~\ref{assum-fail-state} immediately implies that $V_{\pi, P}(s_{f}) = 0,\;\forall P \in \cP$, and hence $V^{*}(s_{f}) = 0$ and $Q^{*}(s_{f}, a) = 0,\;\forall a \in \cA$. It is also straightforward to see that $Q_{k+1}(s_{f}, a) = 0,\;\forall a \in \cA$, where $Q_{k}$'s are the RQI iterates given by the robust Bellman update $Q_{k+1} = T Q_{k}$ with the initialization $Q_{0}=0$. By the contraction property of $T$, we have $Q_{k} \rightarrow Q^{*}$. So, under Assumption \ref{assum-fail-state}, without loss of generality, we can always keep $Q_{k}(s_{f}, a) = 0,\;\forall a \in \cA$ and for all $k$ in RQI (and later in  RFQI). So, in the light of the above description,  for the rest of the paper we will use the robust Bellman operator $T$ by setting $ \inf_{s''} V(s'') = 0$.  In particular, for any function $f : \ScA \rightarrow [0,1/(1-\gamma)]$ with $f(s_{f}, a) = 0$, the robust Bellman operator $T$  is now given by
\begin{align}
\label{eq:robust-bellman-dual-2} 
    (T f) (s, a) = r(s, a) - \gamma   \inf_{\eta\in[0, \frac{2}{(\rho(1-\gamma))}]} ~ (\E_{s' \sim P^{o}_{s,a}}[(\eta -\max_{a'}f(s',a'))_+] - (1-\rho)\eta   ).
\end{align}

\subsection{Approximately Solving the Dual Optimization using Empirical Risk Minimization}

Another key  challenge in directly using the standard definition of the optimal robust value function given in \eqref{eq:robust-value-function} or of the robust Bellman operator  given in \eqref{eq:robust-bellman-primal} for developing and analyzing robust RL algorithms  is that both involve an optimization over $\cP$. The dual reformulation given in \eqref{eq:robust-bellman-dual-2} partially overcomes this challenge also, as the optimization over $\cP$ is now  replaced by a {convex} optimization over a scalar $\eta\in[0, 2/(\rho (1-\gamma))]$. However, this still requires solving an optimization for each $(s, a)\in\ScA$, which is  clearly infeasible even for moderately sized state-action spaces, not to mention the function approximation setting. Our key idea to overcome this difficulty is to reformulate this as a functional optimization problem instead of solving it as multiple scalar optimization problems. This functional optimization method will make it amenable to approximately solving the dual problem using an empirical risk minimization approach with offline data.

Consider the probability (measure) space $(\ScA,\Sigma(\ScA),\mu)$ and let $L^{1}(\ScA,\Sigma(\ScA),\mu)$ be the set of all absolutely integrable functions defined on this space.\footnote{In the following, we will simply denote $L^{1}(\ScA,\Sigma(\ScA),\mu)$ as $L^{1}$ for conciseness. } In other words, $L^{1}$ is the set of all functions $g : \ScA  \rightarrow  \cC \subset \R$, such that $\norm{g}_{1,\mu}$ is finite. We set $\mathcal{C} = [0, 2/\rho (1 - \gamma)]$, anticipating the solution of the dual optimization problem~\eqref{eq:robust-bellman-dual-2}. We also note $\mu$ is the data generating distribution which is a $\sigma$-finite measure. 

For any given function $f : \ScA \rightarrow [0,1/(1-\gamma)]$, we define the loss function $L_{\mathrm{dual}}(\cdot; f)$ as 
\begin{align}
    \label{L-f-eta-loss}
    L_{\mathrm{dual}}(g; f) = \E_{s,a\sim\mu} [  \E_{s'\sim P^o_{s,a}} [(g(s,a) - \max_{a'} f(s',a'))_+]  - (1-\rho) g(s,a)], \quad \forall g\in L^1.
\end{align}
In the following lemma, we show  that the scalar optimization over $\eta$ for each $(s,a)$ pair in \eqref{eq:robust-bellman-dual-2} can be replaced by a single functional optimization w.r.t.~the loss function $L_{\mathrm{dual}}$. 

\begin{lemma}
\label{lem:interchanging-lemma-1}
Let $L_{\mathrm{dual}}$ be the loss function defined in \eqref{L-f-eta-loss}. Then, for any function $f : \ScA \rightarrow [0,1/(1-\gamma)]$, we have
\begin{align}
\label{eq:interchanging-lemma-1}
    \inf_{g \in L^{1}}  L_{\mathrm{dual}}(g; f)  = \E_{s,a\sim\mu} \Big[ \inf_{\eta\in[0, \frac{2}{(\rho(1-\gamma))}]} \Big(\E_{s'\sim P^o_{s,a}} \big[\big(\eta - \max_{a'} f(s',a')\big)_+\big]  - (1-\rho)\eta\Big)\Big].
\end{align}
\end{lemma}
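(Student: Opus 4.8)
The plan is to prove the two inequalities separately, the key point being the interchange of the infimum and the expectation. First I would establish the ``$\leq$'' direction, which is the easy one: for any fixed measurable selector, i.e. any function $g \in L^1$ taking values in $\mathcal{C} = [0, 2/(\rho(1-\gamma))]$, the pointwise integrand $\E_{s'\sim P^o_{s,a}}[(g(s,a) - \max_{a'} f(s',a'))_+] - (1-\rho)g(s,a)$ is, for each $(s,a)$, at least the pointwise infimum over $\eta \in \mathcal{C}$ of the same expression (with $g(s,a)$ replaced by $\eta$). Integrating this pointwise inequality against $\mu$ and then taking the infimum over all such $g$ gives $\inf_{g \in L^1} L_{\mathrm{dual}}(g;f) \geq \E_{s,a\sim\mu}[\inf_{\eta} (\cdots)]$. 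Wait --- I need the reverse; let me restate: the integrand for any admissible $g$ dominates the pointwise infimum, so $L_{\mathrm{dual}}(g;f) \geq \E_{s,a\sim\mu}[\inf_\eta(\cdots)]$ for every $g$, hence $\inf_g L_{\mathrm{dual}}(g;f) \geq \E_{s,a\sim\mu}[\inf_\eta(\cdots)]$. That is one direction.

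For the reverse inequality ``$\leq$'', I would construct (or invoke the existence of) a near-optimal measurable selector. Define $h(s,a) := \inf_{\eta \in \mathcal{C}} \big(\E_{s'\sim P^o_{s,a}}[(\eta - \max_{a'}f(s',a'))_+] - (1-\rho)\eta\big)$, and for each $\delta > 0$ let $g_\delta(s,a)$ be a point in $\mathcal{C}$ achieving the infimum within $\delta$; since $\mathcal{C}$ is compact and, by Proposition~\ref{prop:robust-bellman-dual}, the inner objective is convex (hence continuous) in $\eta$, the infimum is in fact attained, so one may take $\delta = 0$. The substantive issue is \emph{measurability} of $(s,a) \mapsto g_\delta(s,a)$: this is where a measurable selection theorem is needed (e.g. the Kuratowski--Ryll-Nardzewski selection theorem, or the standard fact that the $\argmin$ of a Carath\'eodory function over a compact set admits a measurable selector). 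Granting a measurable selector $g^* \colon \ScA \to \mathcal{C}$ with $g^*(s,a) \in \argmin_{\eta \in \mathcal C}(\cdots)$, it lies in $L^1$ because it is bounded by $2/(\rho(1-\gamma))$ and $\mu$ is a probability measure, so $\|g^*\|_{1,\mu} < \infty$. Then $L_{\mathrm{dual}}(g^*;f) = \E_{s,a\sim\mu}[h(s,a)] = \E_{s,a\sim\mu}[\inf_\eta(\cdots)]$, giving $\inf_{g\in L^1} L_{\mathrm{dual}}(g;f) \leq \E_{s,a\sim\mu}[\inf_\eta(\cdots)]$, and combining the two inequalities yields the claimed equality.

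Two technical wrinkles I would address along the way. First, one must argue that restricting to $g$ valued in $\mathcal{C}$ does not change $\inf_{g \in L^1} L_{\mathrm{dual}}(g;f)$: for any $g \in L^1$ and any $(s,a)$, clipping $g(s,a)$ to $\mathcal{C}$ can only decrease (or leave unchanged) the integrand, by the same pointwise convexity/monotonicity argument used for Proposition~\ref{prop:robust-bellman-dual} --- so the unconstrained functional infimum equals the constrained one, and the latter is what matches the right-hand side. Second, I would note that the map $(s,a) \mapsto h(s,a)$ is measurable (as an infimum over a separable parameter set of jointly measurable functions, or directly from the selection theorem), so $\E_{s,a\sim\mu}[h(s,a)]$ is well-defined; boundedness of $h$ (it lies in $[-2/(\rho(1-\gamma)), 1/(1-\gamma)]$ roughly) ensures all expectations are finite and Fubini/Tonelli applies to the nested expectation $\E_{s,a\sim\mu}\E_{s'\sim P^o_{s,a}}[\cdots]$.

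The main obstacle is the measurable selection step: making rigorous that the pointwise minimizer $\eta^*(s,a)$ can be chosen to depend measurably on $(s,a)$. Everything else is a routine pointwise inequality plus integration. In the write-up I would cite a standard measurable selection result and verify its hypotheses (compact-valued, closed-valued $\argmin$ correspondence with Carath\'eodory integrand), keeping that part brief.
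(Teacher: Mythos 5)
Your proposal is correct, but it takes a different route from the paper. The paper's proof is a one-step appeal to the interchange theorem of Rockafellar and Wets (Lemma~\ref{lem:integral-min-exchange-rockafellar}): it checks that the integrand $h((s,a),\eta)=\E_{s'\sim P^o_{s,a}}[(\eta-\max_{a'}f(s',a'))_+]-(1-\rho)\eta$ is a normal integrand (measurable in $(s,a)$, continuous in $\eta$) and that $L^{1}(\ScA,\Sigma(\ScA),\mu)$ is decomposable, after which the equality of the functional infimum and the integral of the pointwise infimum is immediate. You instead prove the two inequalities by hand: the easy direction by pointwise domination and integration, and the hard direction by exhibiting a measurable selector of the pointwise $\argmin$ over the compact interval $\cC$, invoking a measurable selection theorem for Carath\'eodory integrands. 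This is essentially an unpacking of what the cited interchange theorem does internally (decomposability is precisely what lets one patch near-minimizers into an admissible $g$), so your argument is sound and more self-contained, at the cost of importing a selection theorem that the paper avoids citing; note also that measurability is not a real obstacle here since $\cS$ and $\cA$ are finite in the paper's setting. One small point: your worry about whether restricting $g$ to be $\cC$-valued changes the infimum is moot, because the paper defines $L^{1}$ as the set of functions $g:\ScA\to\cC$ with finite $\norm{g}_{1,\mu}$; your clipping argument is nevertheless correct and harmless, and your self-corrected direction of the first inequality ends up stated correctly.
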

Note that the RHS of \eqref{eq:interchanging-lemma-1} has minimization over $\eta$ for each $(s, a)$ pair and  minimization is inside the expectation $\E_{s,a\sim\mu}[\cdot]$. However, the LHS of \eqref{eq:interchanging-lemma-1} has a single functional minimization over $g \in L^{1}$ and this minimization is outside the expectation. For interchanging the expectation and minimization, and for moving from point-wise optimization to functional optimization, we use the result from \citet[Theorem 14.60]{rockafellar2009variational}, along with the fact that $L^{1}$ is a decomposable space. We also note that this result has been used in many  recent works on distributionally robust optimization  \citep{shapiro2017distributionally,duchi2018learning} (see Appendix \ref{appendix:useful-tech-results} for more details). 

We can now define the empirical loss function $\widehat{L}_{\mathrm{dual}}$ corresponding to the true loss $L_{\mathrm{dual}}$ as
\begin{align} \label{eq:L-dual-empirical-loss}
    \widehat{L}_{\mathrm{dual}}(g;f) = \frac{1}{N} \sum^{N}_{i=1} (g(s_{i},a_{i}) - \max_{a'} f(s'_{i},a'))_+  - (1-\rho) g(s_{i},a_{i}).
\end{align}
Now, for any given $f$, we can  find an approximately optimal dual function through the \textit{empirical risk minimization} approach as  $\inf_{g \in L^{1}} \widehat{L}_{\mathrm{dual}}(g;f)$.

As we mentioned in Section \ref{sec:offline-robust-RL-basics}, our offline robust RL algorithm is given an input function class  $\cG=\{g :\ScA\to [0, 2/(\rho(1-\gamma))]\}$  to approximate the dual variable functions. So, in the empirical risk minimization, instead of taking the infimum over all the functions in $L^{1}$, we can only take the infimum over all the functions in $\cG$. For this to be meaningful, $\cG$ should have sufficient representation power. In particular, the result in Lemma \ref{lem:interchanging-lemma-1} should hold approximately even if we replace the infimum over $L^{1}$ with infimum over $\cG$. 
One can see that this is similar to the realizability requirement for the function class $\cF$ as described in Section \ref{sec:offline-robust-RL-basics}. We formalize the representation power of $\cG$ in the following assumption. 

\begin{assumption}[Approximate dual realizability] \label{assum-dual-realizability}
For all $f \in \cF$, there exists a uniform constant $\epsilon_{\textrm{dual}}$ such that $ \inf_{g \in \cG}  L_{\mathrm{dual}}(g; f) - \inf_{g \in L^{1}}   L_{\mathrm{dual}}(g; f) \leq \epsilon_{\textrm{dual}}$
\end{assumption}

Using the above assumption, for any given $f\in\mathcal F$, we can  find an approximately optimal dual function $\widehat{g}_{f} \in \cG$  through the \textit{empirical risk minimization} approach as  $\widehat{g}_{f} = \argmin_{g \in \cG} \widehat{L}_{\mathrm{dual}}(g;f)$. 

In order to characterize the performance of this approach, consider the operator $T_g$ for any $g\in\cG$ as 
\begin{align}
\label{eq:Tg}
    (T_g f) (s, a) = r(s, a) - \gamma (\E_{s' \sim P^{o}_{s,a}}[(g(s,a) -\max_{a'}f(s',a'))_+] - (1-\rho)g(s,a)   ),
\end{align}  for all  $f\in\cF$ and $(s,a)\in\ScA$. We will show in Lemma \ref{lem:erm-high-prob-bound} in Appendix \ref{appendix:thm:tv-guarantee} that the error $\sup_{f\in\cF} \|T f -  T_{\widehat{g}_f} f\|_{1,\mu}$ is $\cO(\log(|\cF|/\delta)/\sqrt{N})$ with probability at least $1-\delta$.

\subsection{Robust Fitted Q-iteration}
\label{sec:rofqi-algorithm}

The intuitive idea behind our robust fitted Q-iteration (RFQI) algorithm is to approximate the exact RQI update step $Q_{k+1} = T Q_{k}$  with function approximation using offline data. The exact RQI step requires updating each $(s,a)$-pair separately, which is not scalable to large state-action spaces. So, this is replaced by the function approximation as $Q_{k+1} = \argmin_{f \in \cF} \norm{T Q_{k} - f}^{2}_{2,\mu}$. It is still infeasible to perform this update as it requires to exactly compute the expectation (w.r.t. $P^{o}$ and $\mu$) and to solve the dual problem accurately. We overcome these issues by replacing both these exact computations with empirical estimates using the offline data. We note that this intuitive idea is similar to that of the FQI algorithm in the non-robust case. However, RFQI has unique challenges due to the nature of the robust Bellman operator $T$ and the presence of the dual optimization problem within $T$. 

Given a dataset $\cD$, we also follow the standard non-robust offline RL choice of least-squares residual minimization \citep{chen2019information,xie2021bellman,wang2021what}. Define the empirical loss of $f$ given $f'$ (which represents the $Q$-function from the last iteration) and dual variable function $g$ as
\begin{align}
\label{eq:loss-rfqi-1}
     \widehat{L}_{\mathrm{RFQI}}(f;f', g) =  \frac{1}{N} \sum^{N}_{i=1} \bigg( \begin{array}{ll} r(s_i, a_i) + \gamma    \big(-(g(s_{i},a_{i}) -\max_{a'}f'(s'_{i},a'))_+ \\ \hspace{2cm}+ \;(1-\rho) g(s_{i},a_{i})\big)  - f(s_{i}, a_{i}) \end{array}    \bigg)^{2}.
\end{align} 
The correct dual variable function to be used in \eqref{eq:loss-rfqi-1} is the optimal dual variable  $g^{*}_{f'} = \argmin_{g \in \cG} L_{\mathrm{dual}}(g; f')$ corresponding to the last iterate $f'$, which we will approximate it by  $\widehat{g}_{f'} = \argmin_{g \in \cG} \widehat{L}_{\mathrm{dual}}(g; f')$. The  RFQI update is then obtained as  $\argmin_{f \in \cF} \widehat{L}_{\mathrm{RFQI}}(f;f',  \widehat{g}_{f'})$. 

Summarizing the individual steps described above, we formally give our RFQI algorithm below. 

\begin{algorithm}[H]
	\caption{Robust Fitted Q-Iteration (RFQI) Algorithm}	
	\label{alg:RFQI-Algorithm}
	\begin{algorithmic}[1]
		\STATE \textbf{Input:} Offline dataset $\cD =(s_i,a_i,r_{i}, s'_{i})_{i=1}^N$, function classes $\cF$ and $\cG$. 
		\STATE \textbf{Initialize:} $Q_{0}\equiv 0 \in \cF$.
		\FOR {$k=0,\cdots,K-1$ } 
        \STATE \textbf{Dual variable function optimization:} Compute the dual variable function corresponding to $Q_{k}$ through empirical risk minimization as $g_{k} = \widehat{g}_{Q_{k}} = \argmin_{g \in \cG} \widehat{L}_{\mathrm{dual}}(g; Q_{k})\;\;$ (see \eqref{eq:L-dual-empirical-loss}).
        \STATE \textbf{Robust Q-update:} Compute the next iterate $Q_{k+1}$ through least-squares regression as
        $\quad Q_{k+1}=\argmin_{Q\in\cF}\widehat{L}_{\mathrm{RFQI}}(Q;Q_{k}, g_{k})\;\;$ (see \eqref{eq:loss-rfqi-1}).
		\ENDFOR
		
		\STATE \textbf{Output:} $\pi_{K}  = \argmax_a Q_{K}(s,a)$
	\end{algorithmic}
\end{algorithm}

Now we state our main theoretical result on the performance of the RFQI algorithm. 

\begin{theorem}
\label{thm:tv-guarantee}
Let Assumptions \ref{assum-concentra-condition}-\ref{assum-dual-realizability} hold. Let $\pi_{K}$ be the output of the RFQI algorithm after $K$ iterations. Denote $J^{\pi}=\E_{s\sim d_0}[V^{\pi}(s)]$ where $d_0$ is initial state distribution. Then, for any $\delta\in(0,1)$, with probability at least  $1 - 2 \delta$,  we have 
\begin{align*}
	  J^{\pi^*} - J^{\pi_{K}} \leq \frac{\gamma^K}{(1-\gamma)^2} &+ \frac{\sqrt{C}(\sqrt{6\epsilon_{\textrm{c}}} + \gamma \epsilon_{\textrm{dual}}) }{(1-\gamma)^2}  +   \frac{16}{\rho(1-\gamma)^3} \sqrt{\frac{18 C \log(2|\cF||\cG|/\delta)}{N}}. \end{align*}
\end{theorem}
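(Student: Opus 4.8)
The plan is the usual two-stage FQI template, suitably modified for the robust setting. First, a \emph{robust error-propagation} step reduces the suboptimality $J^{\pi^*}-J^{\pi_K}$ to a discounted sum of the per-iteration errors $\varepsilon_k:=Q_{k+1}-TQ_k$, measured in the data norm $\|\cdot\|_{1,\mu}$ via Assumption~\ref{assum-concentra-condition}; second, a \emph{statistical} step bounds $\sup_k\|\varepsilon_k\|_{1,\mu}$ with high probability using Assumptions~\ref{assum-bellman-completion} and~\ref{assum-dual-realizability} together with uniform concentration over $\cF$ and $\cG$. Substituting the uniform per-iteration bound into the propagation inequality, using $\sum_{j\ge0}\gamma^{j}\le 1/(1-\gamma)$, and a union bound over the two failure events (hence $2\delta$) produces the stated three terms.

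For the error-propagation step I would prove a robust analogue of the standard Munos/Chen--Jiang lemma: since $T$ is a $\gamma$-contraction in $\|\cdot\|_\infty$ with fixed point $Q^*$ and $Q_0\equiv0$, telescoping $Q^*-Q_{k+1}=(TQ^*-TQ_k)-\varepsilon_k$ writes $Q^*-Q_K$ as $\gamma^K(Q^*-Q_0)$ plus a geometric combination of the $\varepsilon_k$; the term $\gamma^K\|Q^*-Q_0\|_\infty\le\gamma^K/(1-\gamma)$ together with a final ``$Q$-error to value-gap'' conversion (another $1/(1-\gamma)$) yields the $\gamma^K/(1-\gamma)^2$ term. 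The nontrivial part is carrying this out in a change-of-measure-aware fashion: I would bound $J^{\pi^*}-J^{\pi_K}$ by pushing the per-step errors through the discounted state-action occupancies of $\pi^*$ and of the greedy policy $\pi_K$ \emph{under their respective worst-case models in $\cP$}, using $(s,a)$-rectangularity to keep the one-step terms decoupled so the errors still collapse into a geometric series; the key point is that after the dual reformulation (Proposition~\ref{prop:robust-bellman-dual}) every expectation is w.r.t.\ the nominal model $P^o$, so Assumption~\ref{assum-concentra-condition} — which bounds $\|\nu/\mu\|_\infty\le\sqrt C$ only for $P^o$-occupancies $\nu$ — suffices to convert each $\|\varepsilon_k\|_{1,\nu}$ into $\sqrt C\,\|\varepsilon_k\|_{1,\mu}$. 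This gives $J^{\pi^*}-J^{\pi_K}\le \gamma^K/(1-\gamma)^2 + \tfrac{c\sqrt C}{(1-\gamma)^2}\sup_k\|\varepsilon_k\|_{1,\mu}$.

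For the statistical step, fix $k$ and write $\varepsilon_k=(Q_{k+1}-T_{g_k}Q_k)+(T_{g_k}Q_k-TQ_k)$ with $g_k=\widehat g_{Q_k}$ and $T_{g}$ as in~\eqref{eq:Tg}. For the second summand I would use the identity $\|Tf-T_gf\|_{1,\mu}=\gamma\big(L_{\mathrm{dual}}(g;f)-\inf_{g'\in L^1}L_{\mathrm{dual}}(g';f)\big)$, which follows from the pointwise inequality $T_gf\le Tf$ and Lemma~\ref{lem:interchanging-lemma-1}; decomposing the bracket through the $\cG$-minimizer $g^*_{Q_k}=\argmin_{g\in\cG}L_{\mathrm{dual}}(g;Q_k)$ bounds it by $\epsilon_{\mathrm{dual}}$ (Assumption~\ref{assum-dual-realizability}) plus twice the uniform deviation $\sup_{f\in\cF,g\in\cG}|\widehat L_{\mathrm{dual}}(g;f)-L_{\mathrm{dual}}(g;f)|$, which a Rademacher/union-bound argument controls at order $\tfrac{1}{\rho(1-\gamma)}\sqrt{\log(|\cF||\cG|/\delta)/N}$ because the relevant functions are bounded by $2/(\rho(1-\gamma))$ — this is the ERM bound of Lemma~\ref{lem:erm-high-prob-bound}. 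For the first summand I would use the standard least-squares argument: the target in $\widehat L_{\mathrm{RFQI}}(\cdot;Q_k,g_k)$ of~\eqref{eq:loss-rfqi-1} has conditional mean $(T_{g_k}Q_k)(\cdot)$, so a uniform concentration of $\widehat L_{\mathrm{RFQI}}$ around its population version over $\cF\times\cG$ (the source of the $\log(|\cF||\cG|/\delta)$ in the final bound) combined with Assumption~\ref{assum-bellman-completion} — applied to $TQ_k$ and then a triangle inequality to $T_{g_k}Q_k$, the residual being absorbed into the quantities already controlled — yields $\|Q_{k+1}-T_{g_k}Q_k\|_{2,\mu}^2\le 6\epsilon_{\mathrm{c}}+\tilde{\cO}(1/N)$, hence via $\|\cdot\|_{1,\mu}\le\|\cdot\|_{2,\mu}$ also $\|Q_{k+1}-T_{g_k}Q_k\|_{1,\mu}\le\sqrt{6\epsilon_{\mathrm{c}}}+\tilde{\cO}(1/\sqrt N)$. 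A union bound over $k\in\{0,\dots,K-1\}$ (harmless since all bounds are uniform over $\cF\times\cG$) then gives $\sup_k\|\varepsilon_k\|_{1,\mu}\le\sqrt{6\epsilon_{\mathrm{c}}}+\gamma\epsilon_{\mathrm{dual}}+\tfrac{16}{\rho(1-\gamma)}\sqrt{18\log(2|\cF||\cG|/\delta)/N}$; plugging into the propagation bound and tracking the $1/(1-\gamma)$ factors gives the theorem.

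I expect the \textbf{main obstacle} to be the robust error-propagation lemma: unlike non-robust FQI, the kernel attaining the infimum in $T$ depends on the current value function, so one cannot telescope against a single transition operator. One must construct, iteration by iteration, the appropriate worst-case Markov chains for $\pi^*$ and $\pi_K$, use $(s,a)$-rectangularity to decouple the one-step contributions, and verify that the accumulated errors still form a geometric series whose change-of-measure coefficient is the \emph{nominal}-model concentratability $C$ — which is exactly why the dual reformulation of Section~\ref{sec:rob-Bellman-tv} must precede the statistical analysis. A secondary but important subtlety is that $g_k$ is a function of the dataset, so the regression target in $\widehat L_{\mathrm{RFQI}}$ is random; this forces all concentration inequalities to be uniform over $\cF\times\cG$ and is the source of the $\log(|\cF||\cG|/\delta)$ (rather than $\log(|\cF|/\delta)$) dependence.
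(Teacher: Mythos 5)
Your proposal follows essentially the same route as the paper's proof. The per-iteration error is split exactly as in the paper into a dual/ERM term $\|TQ_k - T_{g_k}Q_k\|_{1,\mu}$ — controlled via the identity you state (which is precisely steps $(a)$–$(b)$ in the proof of Lemma~\ref{lem:erm-high-prob-bound}), Assumption~\ref{assum-dual-realizability}, and a Rademacher/union-bound argument — and a least-squares term $\|T_{g_k}Q_k - Q_{k+1}\|_{2,\mu}$, controlled by Bernstein's inequality with a union bound over $\cF\times\cG$ and Assumption~\ref{assum-bellman-completion}, as in Lemma~\ref{lem:least-squares-generalization-bound}; the two failure events give the $2\delta$, and the propagation accumulates these errors uniformly, picking up one $1/(1-\gamma)$ from the outer performance-difference step and one from the inner recursion, exactly as in \eqref{eq:thm-bound-part-1} and \eqref{eq:thm-bound-part-2}.

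The one place where your plan, taken literally, would go wrong is the propagation step: you propose to push the per-step errors through occupancies of $\pi^*$ and $\pi_K$ \emph{under their respective worst-case models}, but Assumption~\ref{assum-concentra-condition} only covers occupancies under the nominal model $P^o$, so that change of measure would be unjustified. Your own ``key point'' is the correct fix, and it is how the paper actually argues: no worst-case kernel is ever constructed. Using the dual form together with $|\sup_\eta h_1-\sup_\eta h_2|\le \sup_\eta |h_1-h_2|$ and the $1$-Lipschitzness of $x\mapsto(\eta-x)_+$ (i.e., $|(\eta-x)_+-(\eta-y)_+|\le |x-y|$), the difference of two robust backups is dominated by an expectation under $P^o$ of the difference of the corresponding value functions. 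Consequently the outer decomposition telescopes along $P^o$-rollouts of $\pi_K$ (the distributions $d_{h,\pi_K}$ composed with $\pi^*$ or $\pi_K$), and the inner recursion replaces $\nu$ by a distribution $\nu'$ obtained by pushing $\nu$ through $P^o$ and a greedy action selection — all distributions that appear are nominal-model ones, which is exactly what makes Assumption~\ref{assum-concentra-condition} sufficient; rectangularity enters only through the validity of the robust Bellman equation and its dual form, not through any explicit worst-case Markov chain. With that correction your argument coincides with the paper's (the paper obtains the $\gamma^K/(1-\gamma)^2$ term from $\|Q^{\pi^*}-Q_0\|_{1,\cdot}\le 1/(1-\gamma)$ after $K$ steps of the recursion rather than from a sup-norm contraction, which amounts to the same bound). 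A minor further caution: in the least-squares step the completeness error must be taken against $T_{g}f$ (as in Lemma~\ref{lem:least-squares-generalization-bound}); your plan to reduce it to $Tf$ by a triangle inequality runs into a norm mismatch, since the ERM bound controls $\|Tf-T_{g}f\|_{1,\mu}$ rather than the $2$-norm.
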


\begin{remark}
Theorem \ref{thm:tv-guarantee} states that the RFQI algorithm can achieve approximate optimality. To see this, note that with $K\geq\cO(\frac{1}{ \log(1/\gamma)}  \log(\frac{1}{\epsilon (1-\gamma)}))$, and neglecting the second term corresponding to (inevitable) approximation errors  $\epsilon_{\textrm{c}}$ and  $\epsilon_{\textrm{dual}}$, we get $J^{\pi^*} - J^{\pi_{K}} \leq {\epsilon}/{(1-\gamma)}$ with probability greater than $1-2\delta$ for any $\epsilon, \delta \in (0,1)$, as long as the number of samples  $N\geq\cO(\frac{1}{(\rho\epsilon)^{2} (1-\gamma)^{4}} \log \frac{|\cF| |\cG|}{\delta })$. So, the above theorem can also be interpreted as a \textbf{sample complexity} result. 
\end{remark}

\begin{remark}
The known sample complexity of robust-RL in the tabular setting is $\widetilde{\cO}(\frac{|\cS|^2 |\cA|}{(\rho\epsilon)^{2} (1-\gamma)^{4}})$ \citep{yang2021towards,panaganti22a}.  Considering $\widetilde{\cO}(\log(|\cF| | \cG|))$ to be $\widetilde{\cO}(|\cS| |\cA|)$, we  can recover the same bound as in the tabular setting (we save $|\cS|$  due to the use of Bernstein inequality). 
\end{remark}

\begin{remark}
Under similar Bellman completeness and concentratability assumptions, RFQI sample complexity is comparable to that of a non-robust offline RL algorithm, i.e.,~$\cO(\frac{1}{\epsilon^{2} (1-\gamma)^{4}} \log \frac{|\cF|}{\delta })$~\citep{chen2019information}. As a consequence of robustness, we have  $\rho^{-2}$ and $\log(|\cG|)$ factors in our bound. 
\end{remark}

\subsection{Proof Sketch} \label{sec:proof-sketch}
Here  we briefly explain the key ideas used in the analysis of RFQI for obtaining the optimality gap bound in Theorem \ref{thm:tv-guarantee}. The complete proof is provided in Appendix \ref{appendix:thm:tv-guarantee}.

\textit{Step 1:} To bound $J^{\pi^*} - J^{\pi_K}$, we  connect it to the error  $\|Q^{\pi^*} -  Q_{K}\|_{1,\nu }$  for any state-action distribution $\nu$.  While the similar step follows almost immediately using the well-known performance lemma in the analysis of non-robust FQI, such a result is not known in the robust RL setting. So, we derive the basic inequalities to get a recursive form and to  obtain the bound $J^{\pi^*} - J^{\pi_K} \leq 2\|Q^{\pi^*} -  Q_{K}\|_{1,\nu }/(1-\gamma)$ (see~\eqref{eq:thm-bound-part-1} and the steps before in Appendix~\ref{appendix:thm:tv-guarantee}). \\
\textit{Step 2:} To bound $\|Q^{\pi^*} -  Q_{K}\|_{1,\nu }$ for any state-action distribution $\nu$ such that $\norm{\nu/\mu}_{\infty}\leq\sqrt{C}$, we decompose it to get a recursion,  with  approximation terms based on the least-squares regression and empirical risk minimization.  Recall that   $\widehat{g}_f$ is the dual variable function from the algorithm for state-action value function $f\in\cF$. Denote $\widehat{f}_{g}$ as the least squares solution from the algorithm for the state-action value function $f\in\cF$ and dual variable function $g\in\cG$, i.e., $\widehat{f}_{g}=\argmin_{Q\in\cF}\widehat{L}_{\mathrm{RFQI}}(Q;f, g)$. By recursive use of the obtained inequality  \eqref{eq:pf-sketech-stp4-1} (see Appendix  \ref{appendix:thm:tv-guarantee}) and  using uniform bound,  we  get 
\begin{align*}
    \|Q^{\pi^*} -  Q_{K}\|_{1,\nu } \leq \frac{\gamma^K }{1-\gamma} + \frac{\sqrt{C} }{1-\gamma} \sup_{f\in\cF} \|T f -  T_{\widehat{g}_f} f\|_{1,\mu}
+ \frac{\sqrt{C} }{1-\gamma} \sup_{f\in\cF} \sup_{g\in\cG} \| T_{g} f -  \widehat{f}_{g}\|_{2,\mu}.
\end{align*}
\textit{Step 3:} We recognize that $\sup_{f\in\cF} \|T f -  T_{\widehat{g}_f} f\|_{1,\mu}$ is an empirical risk minimization error term.  Using Rademacher
complexity based bounds, we show in Lemma~\ref{lem:erm-high-prob-bound} that this error is $\cO(\log(|\cF|/\delta)/\sqrt{N})$ with high probability.  \\
% So, for any $f\in\cF$, $T_{\widehat{g}_f} f$ well approximates $T f$. \\
\textit{Step 4:} Similarly, we also recognize that $ \sup_{f\in\cF} \sup_{g\in\cG} \| T_{g} f -  \widehat{f}_{g}\|_{2,\mu}$ is a least-squares regression error term. We also show that this error is $\cO(\log(|\cF||\cG|/\delta)/\sqrt{N})$ with high probability. We adapt the generalized least squares regression result to accommodate the modified target functions resulting from the robust Bellman operator to obtain this bound (see Lemma~\ref{lem:least-squares-generalization-bound}). \\
The proof is complete after combining steps 1-4 above.

\section{Experiments}
\label{sec:main-experiments}

\begin{figure*}[h]
	\centering
	\begin{minipage}{.32\textwidth}
		\centering
		\includegraphics[width=\linewidth]{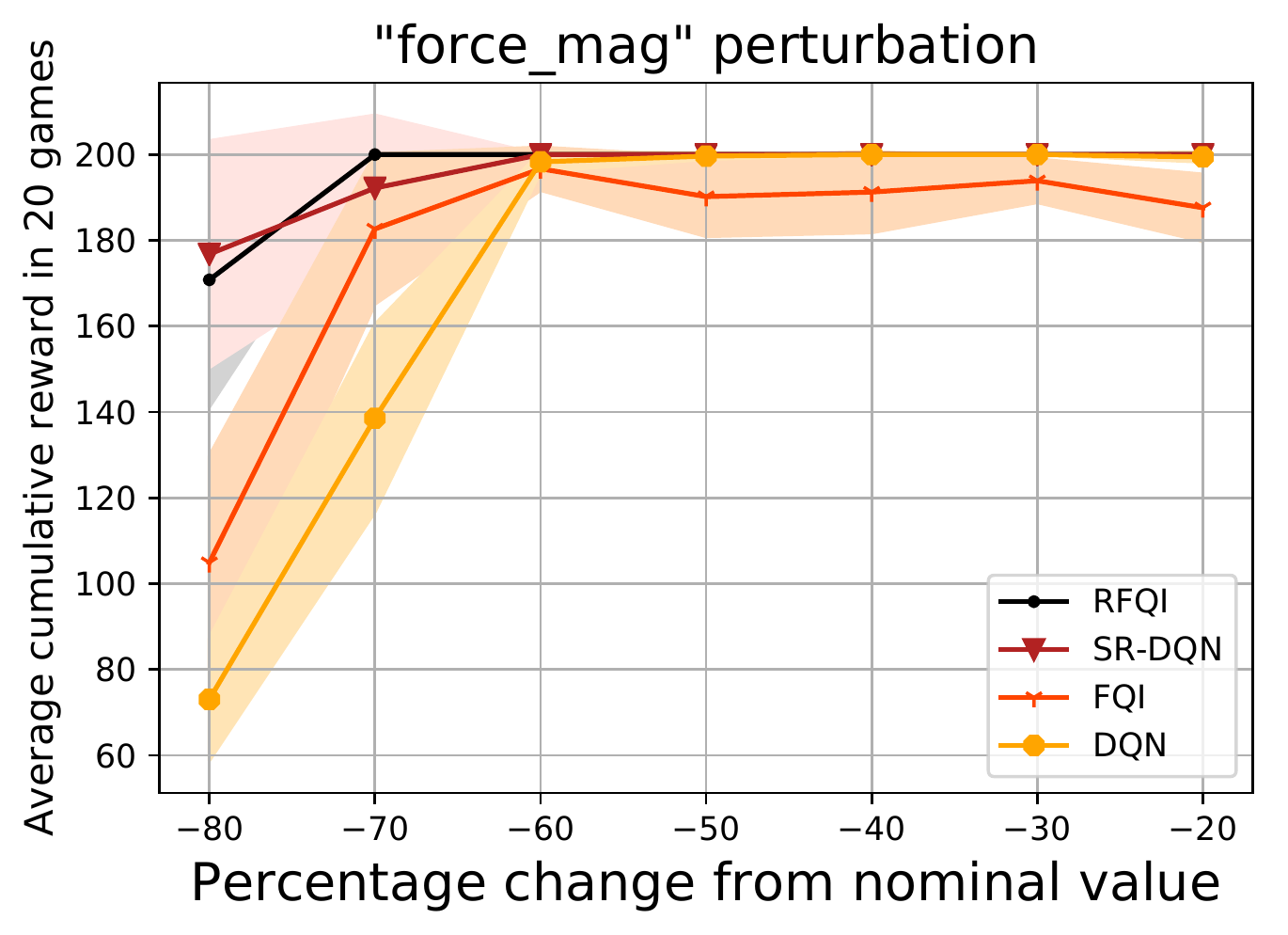}
		\captionof{figure}{CartPole}
		\label{fig:cart_force-in}
	\end{minipage}
	\begin{minipage}{.32\textwidth}
		\centering
		\includegraphics[width=\linewidth]{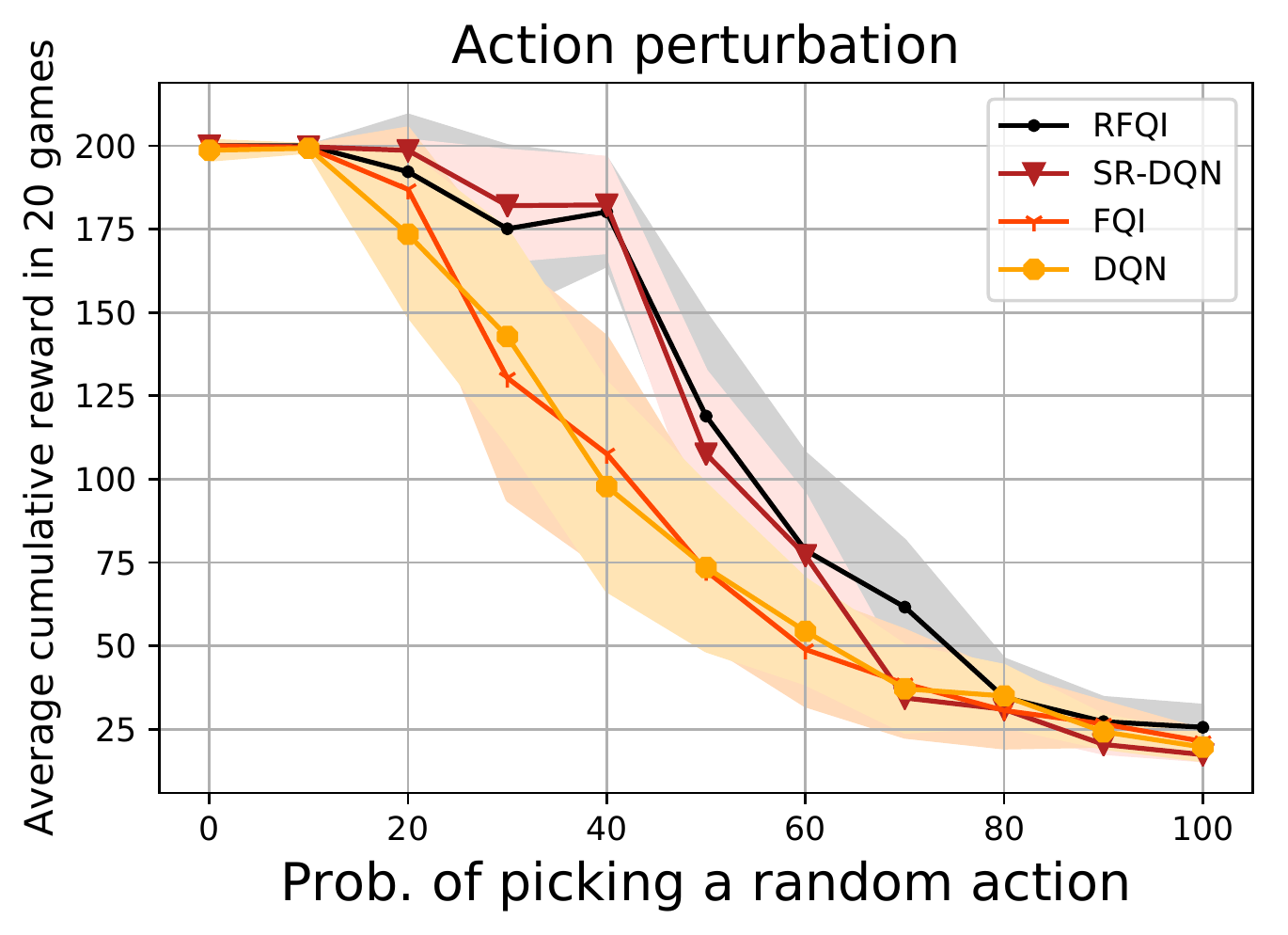}
		\captionof{figure}{CartPole}
		\label{fig:cart_action-in}
	\end{minipage}
	\begin{minipage}{.32\textwidth}
		\centering
		\includegraphics[width=\linewidth]{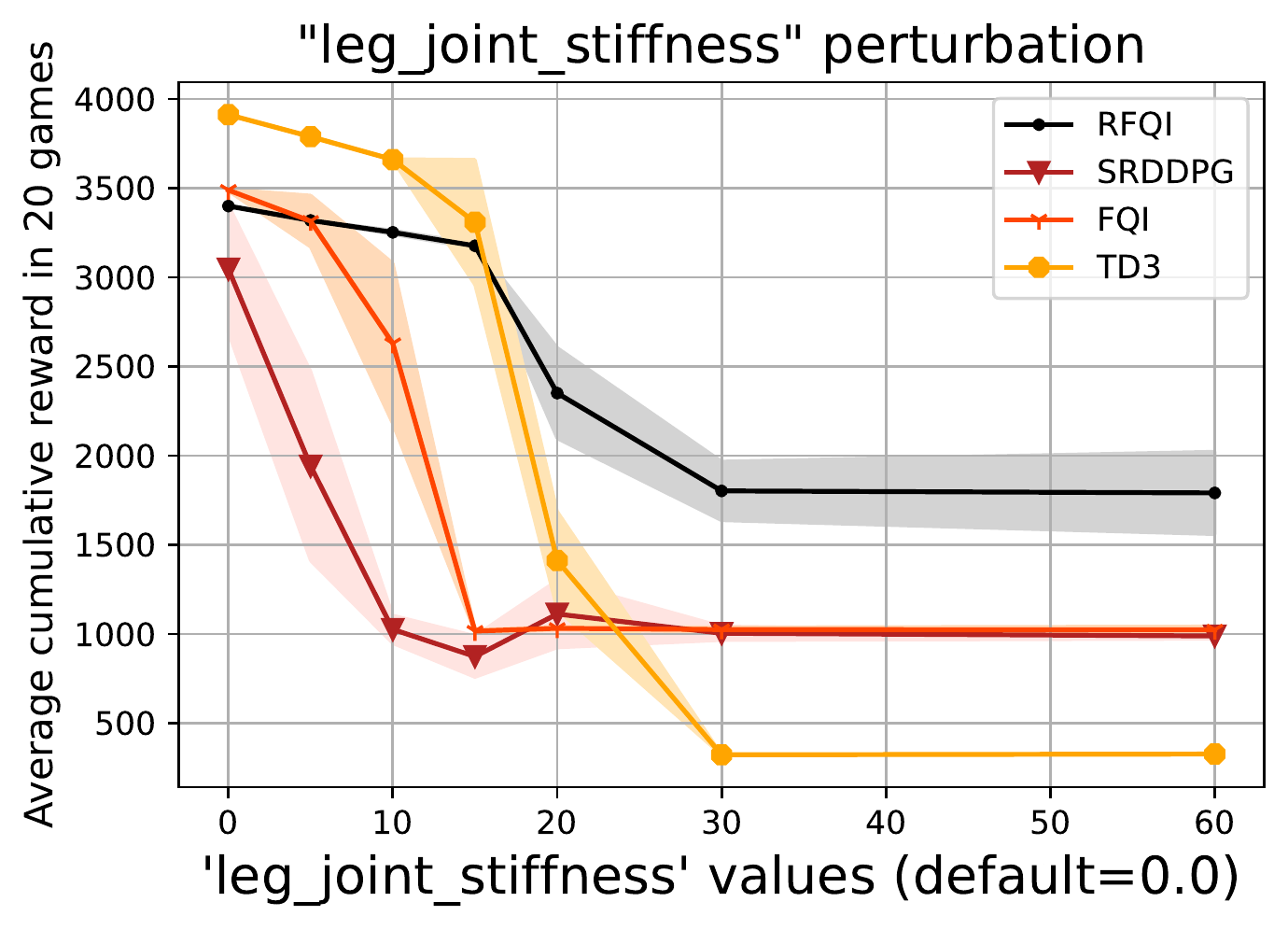}
		\captionof{figure}{Hopper}
		\label{fig:hopper_leg}
	\end{minipage}
\end{figure*}

\looseness=-1 Here, we demonstrate the robust performance of our RFQI algorithm by evaluating it on  \textit{Cartpole} and \textit{Hopper} environments in OpenAI Gym \citep{brockman2016openai}. In all the figures shown, the quantity in the vertical axis is averaged over $20$ different seeded runs depicted by the  thick line and the band around it is the $\pm 0.5$ standard deviation. \textit{A more detailed description of the experiments, and results on additional experiments, are deferred to Appendix \ref{appendix:simulations}.} We provide our code in \textbf{github webpage} \url{https://github.com/zaiyan-x/RFQI} containing instructions to reproduce all results in this paper.

\looseness=-1 For the \textit{Cartpole}, we compare RFQI algorithm against the non-robust RL algorithms FQI and DQN, and the soft-robust RL algorithm proposed in \cite{derman2018soft}.  We test the robustness of the algorithms by changing the parameter \textit{force\_mag} (to model external force disturbance),  and also by introducing action perturbations (to model actuator noise). Fig. \ref{fig:cart_force-in} and  Fig. \ref{fig:cart_action-in} shows superior robust performance of RFQI compared to the non-robust FQI and DQN. The RFQI performance is similar to that of  soft-robust DQN. We note that soft-robust RL algorithm (here soft-robust DQN) is an online deep RL algorithm (and not an offline RL algorithm) and has no provable performance guarantee. Moreover,  soft-robust RL algorithm requires generating  online data according a number of models in the uncertainty set, whereas RFQI only requires offline  data according to a single nominal training model. 

For the \textit{Hopper}, we compare RFQI algorithm against the non-robust RL algorithms FQI and TD3 \citep{fujimoto2018addressing}, and the soft-robust RL (here soft-robust DDPG) algorithm proposed in \cite{derman2018soft}. We test the robustness of the algorithms by changing the parameter \textit{leg\_joint\_stiffness}. Fig. \ref{fig:hopper_leg} shows the superior performance of our RFQI algorithm against the non-robust algorithms and soft-robust DDPG algorithm. The  average episodic reward of RFQI remains almost the same initially, and later decays much less and gracefully   when compared to the non-robust FQI and TD3.

\section{Conclusion}

In this work, we presented a novel robust RL algorithm called Robust Fitted Q-Iteration algorithm with provably optimal performance for an RMDP with arbitrarily large state space, using only offline data with function approximation. We also demonstrated the  superior performance of the proposed algorithm on standard benchmark problems.

\looseness=-1 One limitation of our present work is that, we considered only the uncertainty set defined with respect to the total variation distance. In future work, we will consider uncertainty sets defined with respect to other $f$-divergences such as KL-divergence and Chi-square divergence.  Finding a lower bound for the sample complexity and relaxing the assumptions used are also important and challenging problems. 

\section{Acknowledgements}

This work was supported in part by the National Science Foundation (NSF) grants NSF-CAREER- EPCN-2045783 and NSF ECCS 2038963. Any opinions, findings, and conclusions or recommendations expressed in this material are those of the authors and do not necessarily reflect the views of the sponsoring agencies.

\bibliography{References-RFQI}

\newpage
\section*{Checklist}

\begin{enumerate}

\item For all authors...
\begin{enumerate}
  \item Do the main claims made in the abstract and introduction accurately reflect the paper's contributions and scope?
    \answerYes{See \textbf{contributions} in the Introduction.}
  \item Did you describe the limitations of your work?
    \answerYes{The discussions on the assumptions describes the limitations.}
  \item Did you discuss any potential negative societal impacts of your work?
    \answerNA{}
  \item Have you read the ethics review guidelines and ensured that your paper conforms to them?
    \answerYes{}
\end{enumerate}

\item If you are including theoretical results...
\begin{enumerate}
  \item Did you state the full set of assumptions of all theoretical results?
    \answerYes{See Sections \ref{sec:offline-robust-RL-basics}-\ref{sec:rofqi-algorithm} }
        \item Did you include complete proofs of all theoretical results?
    \answerYes{We provide proof sketch \ref{sec:proof-sketch} in main paper and the complete proof in Appendix with self-contained material.}
\end{enumerate}

\item If you ran experiments...
\begin{enumerate}
  \item Did you include the code, data, and instructions needed to reproduce the main experimental results (either in the supplemental material or as a URL)?
    \answerYes{}
  \item Did you specify all the training details (e.g., data splits, hyperparameters, how they were chosen)?
    \answerYes{Described in the Appendix.}
        \item Did you report error bars (e.g., with respect to the random seed after running experiments multiple times)?
    \answerYes{Described in the main paper and the Appendix.}
        \item Did you include the total amount of compute and the type of resources used (e.g., type of GPUs, internal cluster, or cloud provider)?
    \answerYes{Mentioned in the Appendix.}
\end{enumerate}

\item If you are using existing assets (e.g., code, data, models) or curating/releasing new assets...
\begin{enumerate}
  \item If your work uses existing assets, did you cite the creators?
    \answerYes{}
  \item Did you mention the license of the assets?
    \answerYes{}
  \item Did you include any new assets either in the supplemental material or as a URL?
    \answerNA{}
  \item Did you discuss whether and how consent was obtained from people whose data you're using/curating?
    \answerNA{}
  \item Did you discuss whether the data you are using/curating contains personally identifiable information or offensive content?
    \answerNA{}
\end{enumerate}

\item If you used crowdsourcing or conducted research with human subjects...
\begin{enumerate}
  \item Did you include the full text of instructions given to participants and screenshots, if applicable?
    \answerNA{}
  \item Did you describe any potential participant risks, with links to Institutional Review Board (IRB) approvals, if applicable?
    \answerNA{}
  \item Did you include the estimated hourly wage paid to participants and the total amount spent on participant compensation?
    \answerNA{}
\end{enumerate}

\end{enumerate}

\newpage
\appendix

\section*{Appendix}

\section{Useful Technical Results}
\label{appendix:useful-tech-results}
In this section, we state some existing results from concentration inequalities, generalization bounds, and optimization theory that we will use later in our analysis. We first state the Berstein's inequality that utilizes second-moment to get a tighter concentration inequality.

\begin{lemma}[Bernstein's inequality \hspace{-0.1cm} \text{\citep[Theorem 2.8.4]{vershynin2018high}}] \label{lem:bernstein-ineq}
Let $X_1,\cdots, X_T$ be independent random variables. Assume that $|X_t - \E[X_t]| \leq M$, for all $t$. Then, for any $\epsilon > 0$, we have 
\[ 
\pr \left( \Big|\frac{1}{T} \sum_{t=1}^T (X_t - \E[X_t])\Big| \geq \epsilon \right) \leq 2\exp\left( -\frac{T^2\epsilon^2}{2\sigma^2 + \frac{2MT\epsilon}{3}} \right), 
\] 
where $\sigma^2 = \sum_{t=1}^T \E[X_t^2]$. Furthermore, if $X_1,\cdots, X_T$ are independent and identically distributed random variables, then   for any $\delta\in(0,1)$, we have 
\[ 
\Big|\E[X_1] - \frac{1}{T} \sum_{t=1}^T X_t \Big| \leq \sqrt{\frac{2 \E[X_1^2] \log(2/\delta)}{T}} +\frac{M\log(2/\delta)}{3T},
\] 
with probability at least $1-\delta$.
\end{lemma}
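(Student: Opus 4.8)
The plan is to prove this as the classical Bernstein inequality via the Chernoff (exponential Markov) bounding method. First I would center the summands: set $Y_t = X_t - \E[X_t]$, so that $\E[Y_t]=0$ and $|Y_t|\leq M$, and write $S=\sum_{t=1}^T Y_t$. For any $\lambda>0$, Markov's inequality applied to $e^{\lambda S}$ together with independence gives $\pr(S\geq T\epsilon)\leq e^{-\lambda T\epsilon}\prod_{t=1}^T\E[e^{\lambda Y_t}]$, which reduces the whole problem to controlling the moment generating function of each centered, bounded summand.

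The technical heart is the per-term MGF bound, and this is the step I expect to be the main obstacle, since it is the only place where both the boundedness hypothesis and the second-moment quantity genuinely enter. I would expand $\E[e^{\lambda Y_t}] = 1 + \sum_{k\geq 2}\lambda^k\E[Y_t^k]/k!$, where the $k=1$ term vanishes because $\E[Y_t]=0$, and control the high moments using boundedness through $|\E[Y_t^k]|\leq M^{k-2}\E[Y_t^2]$. Combining this with the elementary bound $k!\geq 2\cdot 3^{k-2}$ and summing the resulting geometric series for $\lambda M<3$ yields $\E[e^{\lambda Y_t}]\leq 1 + \frac{\lambda^2\E[Y_t^2]/2}{1-\lambda M/3}\leq \exp\!\big(\frac{\lambda^2\E[Y_t^2]/2}{1-\lambda M/3}\big)$, where the last step uses $1+x\leq e^x$.

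Taking the product over $t$ and using $\sum_t\E[Y_t^2]=\sum_t\mathrm{Var}(X_t)\leq\sum_t\E[X_t^2]=\sigma^2$ (which justifies using the definition of $\sigma^2$ as a valid variance proxy) gives $\pr(S\geq T\epsilon)\leq\exp\!\big(-\lambda T\epsilon + \frac{\lambda^2\sigma^2/2}{1-\lambda M/3}\big)$. I would then choose $\lambda = T\epsilon/(\sigma^2 + MT\epsilon/3)$, which lies in $(0,3/M)$ whenever $\sigma^2>0$; a short computation shows this collapses the exponent to exactly $-\frac{T^2\epsilon^2}{2\sigma^2 + 2MT\epsilon/3}$, giving the claimed one-sided bound. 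Applying the identical argument to $-Y_t$ controls the lower tail, and a union bound over the two tails produces the factor of $2$ and the absolute value in the statement.

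Finally, for the i.i.d. ``furthermore'' part I would substitute $\sigma^2 = T\E[X_1^2]$ and equate the two-sided tail bound to $\delta$, i.e. solve $\frac{T\epsilon^2}{2\E[X_1^2] + 2M\epsilon/3}=\log(2/\delta)$ for the deviation $\epsilon$. This is a quadratic of the form $\epsilon^2\leq c_1\epsilon + c_2$ with $c_1,c_2\geq 0$, so the elementary implication $\epsilon\leq c_1 + \sqrt{c_2}$ (which follows from $\sqrt{c_1^2+4c_2}\leq c_1 + 2\sqrt{c_2}$) yields a high-probability deviation bound of the stated form, with dominant term $\sqrt{2\E[X_1^2]\log(2/\delta)/T}$ and a lower-order term of order $M\log(2/\delta)/T$. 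Since the statement is quoted from \citet[Theorem 2.8.4]{vershynin2018high}, this Chernoff-plus-MGF argument reproduces it directly, and the remaining steps after the MGF bound are routine algebra.
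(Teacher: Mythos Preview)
The paper does not actually prove this lemma: it is stated in the appendix as a known technical tool and simply cited to \citet[Theorem~2.8.4]{vershynin2018high}. Your Chernoff/MGF argument is the standard textbook proof and is correct, so in effect you are supplying what the paper deliberately omits.

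One small remark on the ``furthermore'' part: if you invert the two-sided tail bound exactly as you describe, the quadratic $\epsilon^{2}=c_{1}\epsilon+c_{2}$ has $c_{1}=\tfrac{2M\log(2/\delta)}{3T}$, so the elementary bound $\epsilon\leq c_{1}+\sqrt{c_{2}}$ yields a lower-order term $\tfrac{2M\log(2/\delta)}{3T}$ rather than the $\tfrac{M\log(2/\delta)}{3T}$ written in the lemma. You were careful to say only ``of order $M\log(2/\delta)/T$'', which is fine; the factor-of-two discrepancy with the displayed constant is immaterial for every place the lemma is used downstream (Lemma~\ref{lem:least-squares-generalization-bound} only needs the shape of the bound, not this specific constant).
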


We now state a result %from \cite{shalev2014understanding} 
for the generalization bounds on empirical risk minimization (ERM) problems. This result is adapted from \citet[Theorem 26.5, Lemma 26.8, Lemma 26.9]{shalev2014understanding}.

\begin{lemma}[ERM generalization bound] \label{lem:erm-generalization-bound}
Let $P$ be the data generating distribution on the space $\cX$ and let $\cH$ be a given hypothesis class of functions. Assume that for all $x\in\cX$ and $h \in \cH$ we have that $|l(h,x)| \leq c_1$ for some positive constant $c_1>0$. Given a dataset $\cD=\{X_i\}_{i=1}^N$, generated independently from $P$, denote $\hat{h}$ as the ERM solution, i.e. $\hat{h}=\argmin_{h\in\cH} (1/N) \sum_{i=1}^N l(h, X_i)$. For any fixed $\delta\in(0,1)$ and $h^*\in\argmin_{h\in\cH} \E_{X\sim P}[l(h,X)]$, we have 
\begin{align} \label{eq:erm-generalization-bound-part-1} \E_{X\sim P}[l(\hat{h},X)] - \E_{X\sim P}[l(h^*,X)] \leq 2 R(l\circ \cH \circ \cD) + 5 c_1 \sqrt{\frac{2\log(8/\delta)}{N}}, 
\end{align} 
with probability at least $1-\delta$, where $R(\cdot)$ is the Rademacher complexity of $l\circ \cH$ given by 
\[ 
R(l\circ \cH \circ \cD) = \frac{1}{N} \E_{\{\sigma_i\}_{i=1}^N} \left( \sup_{g\,\in\, l\circ \cH}~ \sum_{i=1}^N \sigma_i g(X_i) \right), 
\]
in which $\sigma_i$'s are independent from $X_i$'s and are independently and identically distributed according to the Rademacher random variable $\sigma$, i.e. $\pr(\sigma=1) = 0.5 = \pr(\sigma=-1)$.

Furthermore, if $\cH$ is a finite hypothesis class, i.e. $|\cH|<\infty$,  with $|h\circ x| \leq c_2$ for all $h\in\cH$ and $x\in\cX$, and $l(h,x)$ is $c_3$-Lipschitz in $h$, then we have 
\begin{align} \label{eq:erm-generalization-bound-part-2}
    \E_{X\sim P}[l(\hat{h},X)] - \E_{X\sim P}[l(h^*,X)] \leq 2 c_2 c_3 \sqrt{\frac{2\log(|\cH|)}{N}} + 5 c_1 \sqrt{\frac{2\log(8/\delta)}{N}},
\end{align}  
with probability at least $1-\delta$.
\end{lemma}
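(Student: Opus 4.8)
The plan is to follow the classical uniform-convergence route via symmetrization and bounded-differences concentration, specialized so as to land on the stated constants. Throughout, write $L(h) = \E_{X\sim P}[l(h,X)]$ for the true risk, $\hat{L}(h) = \frac{1}{N}\sum_{i=1}^N l(h,X_i)$ for the empirical risk, and define the representativeness $\mathrm{Rep}(\cD) = \sup_{h\in\cH} (L(h) - \hat{L}(h))$. First I would reduce the excess risk to $\mathrm{Rep}(\cD)$ plus a single fixed-hypothesis fluctuation. Using the telescoping decomposition $L(\hat{h}) - L(h^*) = (L(\hat{h}) - \hat{L}(\hat{h})) + (\hat{L}(\hat{h}) - \hat{L}(h^*)) + (\hat{L}(h^*) - L(h^*))$, the middle term is nonpositive because $\hat{h}$ minimizes $\hat{L}$ over $\cH$, the first term is at most $\mathrm{Rep}(\cD)$, and the last term is $\hat{L}(h^*) - L(h^*)$ for the \emph{fixed} comparator $h^*$, which I would control by Hoeffding's inequality since $l(h^*,X_i)\in[-c_1,c_1]$. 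This isolates the one genuinely uniform quantity, $\mathrm{Rep}(\cD)$.

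Then comes the crux: bounding $\mathrm{Rep}$ by the Rademacher complexity. I would first show $\E_\cD[\mathrm{Rep}(\cD)] \le 2\,\E_\cD R(l\circ\cH\circ\cD)$ by the standard symmetrization argument. Introduce an independent ghost sample $\cD'=\{X_i'\}_{i=1}^N$, write $L(h) = \E_{\cD'}\hat{L}_{\cD'}(h)$, and use Jensen's inequality to pull $\E_{\cD'}$ outside the supremum, giving $\E_\cD[\mathrm{Rep}(\cD)] \le \E_{\cD,\cD'}\sup_{h}(\hat{L}_{\cD'}(h) - \hat{L}_\cD(h))$. Since $X_i$ and $X_i'$ are exchangeable, inserting i.i.d. Rademacher signs $\sigma_i$ that swap the roles of $X_i,X_i'$ leaves the law unchanged, and splitting the paired difference across the two samples produces the factor $2$ and the empirical-process form defining $R$.

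Next I would upgrade both $\mathrm{Rep}(\cD)$ and the empirical Rademacher complexity $R(l\circ\cH\circ\cD)$ from their expectations to high-probability statements via McDiarmid's bounded-differences inequality: changing a single $X_i$ alters each of these quantities by at most $2c_1/N$, so each concentrates within roughly $c_1\sqrt{2\log(2/\delta)/N}$ of its mean. Combining the symmetrization bound with these two concentration statements and the fixed-$h^*$ Hoeffding term through a union bound — which merges the three failure budgets into the stated $\log(8/\delta)$ and accumulates the constant to $5c_1$ — yields part~1, namely $L(\hat{h}) - L(h^*) \le 2R(l\circ\cH\circ\cD) + 5c_1\sqrt{2\log(8/\delta)/N}$.

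Finally, for part~2 I would bound the empirical Rademacher complexity of a finite class. The $c_3$-Lipschitz assumption on $h\mapsto l(h,x)$ lets me apply the Ledoux--Talagrand contraction lemma to strip the loss, giving $R(l\circ\cH\circ\cD)\le c_3\,R(\cH\circ\cD)$, and then Massart's finite-class lemma applied to the at-most-$|\cH|$ vectors $(h(X_i))_{i=1}^N$, each of Euclidean norm at most $c_2\sqrt{N}$, gives $R(\cH\circ\cD)\le c_2\sqrt{2\log|\cH|/N}$. Substituting into part~1 produces the claimed $2c_2c_3\sqrt{2\log|\cH|/N}$ term. I expect the symmetrization step to be the conceptual heart of the argument, while the fiddliest part is the bookkeeping of the union bounds and bounded-differences constants needed to land exactly on the factors $5c_1$ and $\log(8/\delta)$.
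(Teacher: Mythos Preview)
Your proposal is correct and follows precisely the route the paper relies on: the paper does not give its own proof but cites \citet[Theorem 26.5, Lemma 26.8, Lemma 26.9]{shalev2014understanding}, and what you have sketched is exactly that textbook argument---the $\mathrm{Rep}$ decomposition, symmetrization with a ghost sample, McDiarmid for both $\mathrm{Rep}$ and the empirical Rademacher complexity, plus Hoeffding for the fixed comparator, and then contraction followed by Massart for the finite-class refinement. Your caveat about the constant bookkeeping is well placed; otherwise there is nothing to add.
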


We now mention two important concepts from variational analysis \citep{rockafellar2009variational} literature that is useful to relate minimization of integrals and the integrals of pointwise minimization under special class of functions.

\begin{definition}[Decomposable spaces and Normal integrands \text{\citep[Definition 14.59, Example 14.29]{rockafellar2009variational}}] \label{def:decomposable-normal-integrand}
A space $\cX$ of measurable functions is a {decomposable space} relative to an underlying measure space $(\Omega, \cA, \mu)$, if for every function $x_0 \in \cX$, every set $A \in \cA$ with $\mu(A) < \infty$, and any bounded measurable function $x_1\,:\,A\to\R$, the function $x(\omega) = x_0(\omega)\mathbbm{1}(\omega \notin A) + x_1(\omega)\mathbbm{1}(\omega \in A)$ belongs to $\cX$. A function $f : \Omega \times \R \to \R$ (finite-valued) is a {normal integrand}, if and only if $f(\omega,x)$ is $\cA$-measurable in $\omega$ for each $x$ and is continuous in $x$ for each $\omega$.
\end{definition}

\begin{remark} \label{remark:decomposable}
A few examples of decomposable spaces are $L^p(\ScA,\Sigma(\ScA),\mu)$ for any $p\geq 1$ and $\cM(\ScA,\Sigma(\ScA))$, the space of all $\Sigma(\ScA)$-measurable functions. \end{remark}

\begin{lemma}[\text{\citealp[Theorem 14.60]{rockafellar2009variational}}] \label{lem:integral-min-exchange-rockafellar}
Let $\cX$ be a space of measurable functions from $\Omega$ to $\R$ that is decomposable relative to a $\sigma$-finite measure $\mu$ on the $\sigma$-algebra $\cA$. Let $f : \Omega \times \R \to \R$ (finite-valued) be a normal integrand. Then, we have 
\[ 
\inf_{x\in\cX} \int_{\omega\in\Omega} f(\omega,x(\omega)) \mu(\dd\omega) = \int_{\omega\in\Omega} \left( \inf_{x\in\R} f(\omega,x) \right) \mu(\dd\omega). 
\]
Moreover, as long as the above infimum is not $-\infty$, we have that
\begin{align*}
 x' \in \argmin_{x\in\cX} \int_{\omega\in\Omega} f(\omega,x(\omega)) \mu(\dd\omega), 
 \end{align*}
if and only if $\;x'(\omega) \in \argmin_{x\in\R} f(\omega,x)\cdot\mu\;$ almost surely.
\end{lemma}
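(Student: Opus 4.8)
The plan is to prove the infimal equality in both directions and then read off the minimizer characterization as a corollary. Throughout write $p(\omega)=\inf_{t\in\R}f(\omega,t)$ for the pointwise infimal value. A preliminary I would establish first is that $p$ is $\cA$-measurable: since $f$ is a normal integrand, $f(\cdot,t)$ is measurable for each fixed $t$ and $f(\omega,\cdot)$ is continuous, so for every $c\in\R$ one has $\{\omega:p(\omega)<c\}=\bigcup_{t\in\mathbb{Q}}\{\omega:f(\omega,t)<c\}$, a countable union of measurable sets, whence $p$ is measurable.

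First I would dispose of the easy inequality $\inf_{x\in\cX}\int f(\omega,x(\omega))\,\mu(\dd\omega)\geq\int p(\omega)\,\mu(\dd\omega)$. For any admissible $x\in\cX$ we have $f(\omega,x(\omega))\geq p(\omega)$ pointwise, so monotonicity of the integral gives $\int f(\omega,x(\omega))\,\mu(\dd\omega)\geq\int p(\omega)\,\mu(\dd\omega)$ for every feasible $x$, and taking the infimum over $\cX$ yields the bound.

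The substance is the reverse inequality, for which I would construct, for each $\epsilon>0$, a measurable \emph{approximate minimizer} $\widehat{x}_\epsilon:\Omega\to\R$ with $f(\omega,\widehat{x}_\epsilon(\omega))\leq p(\omega)+\epsilon$ on $\{p>-\infty\}$ and $f(\omega,\widehat{x}_\epsilon(\omega))\leq-1/\epsilon$ on $\{p=-\infty\}$. The continuity of $f(\omega,\cdot)$ makes the sublevel sets $S_\epsilon(\omega)=\{t:f(\omega,t)\leq\max\{p(\omega)+\epsilon,-1/\epsilon\}\}$ nonempty and closed, and the normal-integrand structure makes $\omega\mapsto S_\epsilon(\omega)$ a measurable closed-valued multifunction, so a measurable-selection theorem furnishes $\widehat{x}_\epsilon$. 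This selection need not lie in $\cX$, and this is exactly where decomposability enters: fixing a reference $x_0\in\cX$ and, using $\sigma$-finiteness, an increasing exhaustion $\Omega=\bigcup_n A_n$ with $\mu(A_n)<\infty$, the spliced function $x_0\mathbbm{1}_{\Omega\setminus A_n}+\widehat{x}_\epsilon\mathbbm{1}_{A_n}$ belongs to $\cX$ by the very definition of a decomposable space. Bounding the $A_n$-part of its integral by $\int_{A_n}(p+\epsilon)\,\mu(\dd\omega)$ (or driving it to $-\infty$ on the $\{p=-\infty\}$ portion) and letting $n\to\infty$ and then $\epsilon\downarrow 0$ drives $\inf_{x\in\cX}\int f(\omega,x(\omega))\,\mu(\dd\omega)$ down to $\int p\,\mu(\dd\omega)$, completing the equality.

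Finally I would deduce the minimizer characterization from the established equality in the case $\int p\,\mu(\dd\omega)>-\infty$. For any $x'\in\cX$ the integrand $f(\omega,x'(\omega))-p(\omega)$ is nonnegative and has integral $\int f(\omega,x'(\omega))\,\mu(\dd\omega)-\inf_{x\in\cX}\int f$, so $x'$ minimizes over $\cX$ if and only if this integral vanishes, which for a nonnegative integrand is equivalent to $f(\omega,x'(\omega))=p(\omega)$ for $\mu$-almost every $\omega$, i.e.\ $x'(\omega)\in\argmin_{t\in\R}f(\omega,t)$ $\mu$-a.e. The main obstacle is the measurable-selection step, because it is what genuinely requires the normal-integrand hypothesis (joint measurability in $\omega$ together with continuity in $t$) to guarantee simultaneously the measurability of $p$ and of the sublevel multifunction; decomposability is then the comparatively mechanical device that transports each pointwise-optimal selection into the constrained class $\cX$ without disturbing its integral.
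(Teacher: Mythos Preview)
The paper does not supply its own proof of this lemma: it is stated in the ``Useful Technical Results'' appendix as a direct citation of \cite[Theorem 14.60]{rockafellar2009variational} and invoked as a black box (chiefly in the proof of Lemma~\ref{lem:interchanging-lemma-1}). So there is no paper proof to compare against; your sketch is essentially the standard Rockafellar--Wets argument.

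That said, there is one genuine gap in your splicing step. The definition of decomposability reproduced in the paper (Definition~\ref{def:decomposable-normal-integrand}) requires the patch $x_1:A\to\R$ on the finite-measure set $A$ to be a \emph{bounded} measurable function. Your measurable selection $\widehat{x}_\epsilon$ restricted to $A_n$ need not be bounded, so the spliced function $x_0\mathbbm{1}_{\Omega\setminus A_n}+\widehat{x}_\epsilon\mathbbm{1}_{A_n}$ is not automatically in $\cX$. The repair is easy---replace $A_n$ by $A_n\cap\{|\widehat{x}_\epsilon|\leq n\}$, which still exhausts $\Omega$ and makes the patch bounded---but as written the appeal to decomposability does not go through. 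A second, smaller loose end is the $n\to\infty$ limit: you need either that $\int f(\omega,x_0(\omega))\,\mu(\dd\omega)<\infty$ for some $x_0\in\cX$, or a separate argument when no such $x_0$ exists (in which case both sides are $+\infty$); Rockafellar--Wets handle this by their conventions on the integral functional $I_f$, and you should say a word about it. Otherwise the measurable-selection construction and the minimizer characterization are correct.
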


We now give one result from distributioanlly robust optimization. The $f$-divergence between the distributions $P$ and $P^o$ is defined as 
\begin{align}
    \label{eq:f-divergence-defn}
    D_{f}(P \| P^o) = \int f(\frac{\dd P}{\dd P^o})\dd P^o,
\end{align}
where $f$ is a convex function \citep{csiszar1967information,moses2011further}. We obtain different divergences for different forms of the function $f$, including some well-known divergences. For example,  
$f(t) = |t-1|/2$ gives Total Variation (TV), $f(t) = t\log t$ gives  Kullback-Liebler (KL), $f(t) = (t-1)^2$ gives Chi-square, and $f(t) = (\sqrt{t}-1)^2$ gives squared Hellinger divergences.

Let $P^o$ be a distribution on the space $\cX$ and let $l: \cX \to \R$ be a loss function.  We have the following result from the \textit{distributionally robust optimization} literature, see e.g.,~\citet[Section 3.2]{shapiro2017distributionally} and~\citet[Proposition 1]{duchi2018learning}.

\begin{proposition}
\label{prop:dro-inner-problem-solution}
Let $D_{f}$ be the $f$-divergence as defined in \eqref{eq:f-divergence-defn}. Then, 
\begin{align}
\label{eq:dro-inner-problem-solution}
   \sup_{D_f(P\|P^o) \leq \rho} \E_{P}[l(X)] = \inf_{\lambda>0, \eta\in\R} ~~ \E_{P^o} \left[ \lambda f^* \left(\frac{l(X)-\eta}{\lambda}\right)\right] + \lambda\rho + \eta,
\end{align}
where $f^*(s) = \sup_{t\geq 0} \{st - f(t)\}$ is the Fenchel conjugate.
\end{proposition}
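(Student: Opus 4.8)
The plan is to establish \eqref{eq:dro-inner-problem-solution} by Lagrangian duality applied to the worst-case expectation, written as an optimization over likelihood ratios. First I would substitute $p = \mathrm{d}P/\mathrm{d}P^o$, so that the constraint $D_f(P\|P^o)\le\rho$ becomes $p\ge 0$, $\E_{P^o}[p]=1$, and $\E_{P^o}[f(p)]\le\rho$, giving
\begin{align*}
\sup_{D_f(P\|P^o)\le\rho}\E_P[l(X)] = \sup_{p\ge 0}\big\{\E_{P^o}[p\,l(X)] \,:\, \E_{P^o}[p]=1,\ \E_{P^o}[f(p)]\le\rho\big\}.
\end{align*}
This is a convex program (the objective is linear in $p$ and, since $f$ is convex, the feasible set is convex). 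I would then introduce a multiplier $\lambda\ge 0$ for the divergence constraint and $\eta\in\R$ for the normalization constraint, forming
\begin{align*}
L(p,\lambda,\eta) = \E_{P^o}[p\,l(X)] - \lambda\big(\E_{P^o}[f(p)]-\rho\big) - \eta\big(\E_{P^o}[p]-1\big),
\end{align*}
and invoke strong duality to obtain $\text{LHS} = \inf_{\lambda\ge 0,\,\eta}\sup_{p\ge 0}L(p,\lambda,\eta)$. Strong duality holds because the primal is convex and Slater's condition is satisfied: since $\rho>0$ and $f(1)=0$, the ratio $p\equiv 1$ (i.e.\ $P=P^o$) is strictly feasible.

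For each fixed $(\lambda,\eta)$ with $\lambda>0$, I would next interchange the supremum over $p$ with the integral --- this is exactly the decomposability/normal-integrand machinery recorded in Lemma~\ref{lem:integral-min-exchange-rockafellar} (applied, after a sign change, to the integrand $t\mapsto\lambda f(t)+\eta t-t\,l(X(\omega))$, with the nonnegativity of $p$ absorbed into the effective domain of $f$), or alternatively a direct measurable-selection argument. This gives
\begin{align*}
\sup_{p\ge 0}L(p,\lambda,\eta) = \E_{P^o}\Big[\sup_{t\ge 0}\big\{t\,(l(X)-\eta)-\lambda f(t)\big\}\Big] + \lambda\rho + \eta = \E_{P^o}\Big[\lambda f^*\Big(\frac{l(X)-\eta}{\lambda}\Big)\Big] + \lambda\rho + \eta,
\end{align*}
where the last equality factors $\lambda$ out of the inner supremum and uses the definition $f^*(s)=\sup_{t\ge 0}\{st-f(t)\}$.

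Finally I would verify that the outer infimum may be restricted to $\lambda>0$ without loss: at $\lambda=0$ the inner supremum equals $\sup_{p\ge 0}\E_{P^o}[p(l(X)-\eta)]+\eta$, which is $+\infty$ unless $l\le\eta$ almost surely, and that boundary regime is recovered as the limit $\lambda\downarrow 0$ of the $\lambda>0$ expressions, so that $\inf_{\lambda>0,\eta}=\inf_{\lambda\ge 0,\eta}$; this yields \eqref{eq:dro-inner-problem-solution}. The step I expect to be the main obstacle is the rigorous justification of strong duality for this infinite-dimensional primal: one needs either a Slater-type strong-duality theorem valid on $L^1$ (or on $L^\infty$ with the weak-$*$ topology), or a reduction to finite-dimensional approximations followed by a limiting argument that controls the interchange of limits with the suprema and with the divergence functional. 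By comparison, the $\sup$--integral interchange and the pointwise Fenchel-conjugate computation are routine given the tools already cited in the paper.
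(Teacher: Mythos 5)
The paper never proves Proposition~\ref{prop:dro-inner-problem-solution}: it is imported as a known result from the distributionally robust optimization literature, with pointers to \citet[Section 3.2]{shapiro2017distributionally} and \citet[Proposition 1]{duchi2018learning}; what the paper itself proves is only the TV specialization in Lemma~\ref{lem:TV-inner-problem}. Your sketch is therefore not comparable to an in-paper proof, but it is a faithful reconstruction of the standard argument behind the cited result: pass to likelihood ratios, dualize the two scalar constraints, exchange the supremum over $p$ with the expectation via the decomposability/normal-integrand machinery (the same Rockafellar--Wets result the paper records as Lemma~\ref{lem:integral-min-exchange-rockafellar}), and evaluate the pointwise Fenchel conjugate, which correctly produces $\lambda f^*((l(X)-\eta)/\lambda)+\lambda\rho+\eta$. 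Two remarks on the steps you flag. The strong-duality step is less of an obstacle than you fear: although the primal variable is infinite-dimensional, there are only \emph{two scalar} constraints, so strong Lagrangian duality with the Slater point $p\equiv 1$ (using $f(1)=0$ and $\rho>0$) follows from the classical separating-hyperplane argument applied to the convex image set in $\R^{3}$ of (objective, inequality constraint, equality constraint) values; no weak-$*$ topology on $L^\infty$ or finite-dimensional approximation is needed. Conversely, the reduction from $\inf_{\lambda\ge 0}$ to $\inf_{\lambda>0}$ does need the limiting argument you only gesture at, namely that $\lambda f^*(c/\lambda)=\sup_{t\ge 0}\{ct-\lambda f(t)\}\to 0$ as $\lambda\downarrow 0$ for $c\le 0$ (valid when $\inf_{t\ge 0}f(t)>-\infty$, which holds for the divergences considered here) together with dominated convergence, harmless in this paper since $l$ is bounded; that step deserves a line of justification rather than being waved through. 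With those two points tightened, your outline is a correct, self-contained proof of the statement the paper simply cites.
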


Note that on the right hand side of \eqref{eq:dro-inner-problem-solution}, the expectation is taken only with respect to $P^{o}$. We will use the above result to derive the dual reformulation of the robust Bellman operator.

\section{Proof of the Proposition \ref{prop:robust-bellman-dual}}
\label{appendix:prop:robust-bellman-dual}

As the first step, we adapt the result given in Proposition~\ref{prop:dro-inner-problem-solution} in two ways: $(i)$ Since Proposition~\ref{prop:robust-bellman-dual} considers the TV uncertainty set, we will derive  the specific form of this result for the TV uncertainty set, $(ii)$ Since Proposition~\ref{prop:robust-bellman-dual} considers the minimization problem instead of the maximization problem, unlike in Proposition~\ref{prop:dro-inner-problem-solution}, we will derive the specific form of this result for minimization.

\begin{lemma} 
\label{lem:TV-inner-problem}
Let $D_{f}$ be as defined in \eqref{eq:f-divergence-defn} with $f(t)=|t-1|/2$ corresponding to the TV uncertainty set. Then, 
\begin{align*}
      \inf_{D_f(P\|P^o) \leq \rho} \E_{P}[l(X)] =  -\inf_{\eta \in\R}\;\E_{P^o}[(\eta-l(X))_+] + (\eta - \inf_{x\in\cX} l(x))_+ \times \rho - \eta, 
\end{align*}
\end{lemma}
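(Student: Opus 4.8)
The plan is to specialize Proposition~\ref{prop:dro-inner-problem-solution} to the TV case and convert the resulting supremum problem into the desired infimum problem. First I would handle the conversion from $\sup$ to $\inf$: writing $\inf_{D_f(P\|P^o)\le\rho}\E_P[l(X)] = -\sup_{D_f(P\|P^o)\le\rho}\E_P[-l(X)]$, I can apply Proposition~\ref{prop:dro-inner-problem-solution} with the loss $-l$ in place of $l$. This reduces everything to computing the right-hand side of \eqref{eq:dro-inner-problem-solution} for the specific function $f(t)=|t-1|/2$.

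The core computation is the Fenchel conjugate $f^*(s)=\sup_{t\ge 0}\{st - |t-1|/2\}$. I would split the supremum at $t=1$: for $t\ge 1$ the objective is $st - (t-1)/2 = (s-1/2)t + 1/2$, which is bounded above in $t$ iff $s\le 1/2$, giving value $s+1/2$ at $t=1$; for $0\le t\le 1$ the objective is $st + (t-1)/2 = (s+1/2)t - 1/2$, whose sup over $[0,1]$ is $-1/2$ if $s\le -1/2$ (attained at $t=0$) and $s$ if $s\ge -1/2$ (attained at $t=1$). Combining, $f^*(s) = s + 1/2$ for $-1/2 \le s \le 1/2$, equals $-1/2$ for $s<-1/2$, and $+\infty$ for $s>1/2$; more compactly, on the relevant domain $f^*(s)=\max\{-1/2,\; s+1/2\}$ subject to $s\le 1/2$, i.e. $f^*(s) = (s+1/2)_+ - 1/2$ when $s\le 1/2$. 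Substituting $s=(-l(X)-\eta)/\lambda$ into $\lambda f^*(s)$ and simplifying the $(\cdot)_+$ term yields $\lambda f^*((-l(X)-\eta)/\lambda) = ( -l(X) - \eta + \lambda/2 )_+ - \lambda/2$ on the feasible region, where feasibility $s\le 1/2$ translates to the constraint $-l(X)-\eta \le \lambda/2$ holding $P^o$-a.s., equivalently $\lambda \ge 2(\,-\eta - \inf_x l(x)\,)$ and $\lambda \ge 2(-\eta-l(X))$ pointwise.

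Plugging this back into \eqref{eq:dro-inner-problem-solution} gives $\sup_{D_f(P\|P^o)\le\rho}\E_P[-l(X)] = \inf_{\lambda,\eta}\; \E_{P^o}[(-l(X)-\eta+\lambda/2)_+] - \lambda/2 + \lambda\rho + \eta$ over the feasible $(\lambda,\eta)$. The next step is to optimize out $\lambda$: after the substitution $\eta \mapsto -\eta$ (renaming), the objective becomes linear-plus-convex in $\lambda$ and, using that $\E_{P^o}[(\eta - l(X) + \lambda/2)_+]$ is nondecreasing and $1$-Lipschitz-ish in $\lambda/2$, one checks the optimal $\lambda$ sits at the boundary $\lambda = 2(\eta - \inf_x l(x))_+$ when $\rho \le 1/2$ (which is the relevant regime for a TV ball, since $D_f \le 1$ always). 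Substituting this optimal $\lambda$ collapses the $(\cdot)_+$ inside the expectation to $(\eta - l(X))_+$ and turns the remaining $\lambda$ terms into $\rho\,(\eta-\inf_x l(x))_+$, leaving $\sup_{D_f\le\rho}\E_P[-l(X)] = \inf_{\eta\in\R}\; \E_{P^o}[(\eta-l(X))_+] + \rho(\eta-\inf_x l(x))_+ - \eta$. Negating gives exactly the claimed identity.

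I expect the main obstacle to be the $\lambda$-optimization step: justifying rigorously that the infimum over $\lambda>0$ is attained at the stated boundary value requires care about the feasibility constraint $s\le 1/2$ (which couples $\lambda$ and $\eta$ and involves $\inf_x l(x)$, possibly an essential infimum w.r.t.\ $P^o$), and about monotonicity of the map $\lambda\mapsto \E_{P^o}[(\eta-l(X)+\lambda/2)_+] + \lambda(\rho-1/2)$ — specifically that its derivative $\frac12\pr_{P^o}(l(X) < \eta + \lambda/2) + \rho - \tfrac12 \ge 0$ once $\lambda \ge 2(\eta-\inf_x l(x))_+$, which uses $\rho \le 1/2$. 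The Fenchel-conjugate computation and the sign bookkeeping in the $\sup$-to-$\inf$ conversion are routine but error-prone, so I would carry those out explicitly and carefully rather than by appeal to a reference.
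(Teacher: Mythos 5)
Your overall skeleton is the paper's: negate to reduce the infimum to a supremum, apply Proposition~\ref{prop:dro-inner-problem-solution} with loss $-l$, and compute the Fenchel conjugate of $f(t)=|t-1|/2$. (Minor slip there: at $t=1$ the value is $s$, not $s+1/2$, so $f^*(s)=s$ on $[-1/2,1/2]$; your "compact" formula $(s+1/2)_+-1/2=\max\{s,-1/2\}$ is nevertheless the correct one and is what you use downstream, so this does not propagate.) The genuine gap is in the $\lambda$-elimination step, exactly the step you flagged as delicate. Your monotonicity claim --- that the derivative $\tfrac12\pr_{P^o}(l(X)<\eta+\lambda/2)+\rho-\tfrac12$ is nonnegative once $\lambda\ge 2(\eta-\inf_x l(x))_+$ --- is false: if $\eta<\inf_x l(x)$ the constraint is vacuous and for small $\lambda$ the probability is $0$, so the derivative is $\rho-\tfrac12<0$; even for $\eta>\inf_x l(x)$ the probability $\pr_{P^o}(l(X)<2\eta-\inf_x l(x))$ can be made arbitrarily small. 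Concretely, with $l\equiv 1$, $\eta=-10$, $\rho=0.1$, the inner minimum over feasible $\lambda$ is attained at $\lambda=22$, not at your boundary $2(\eta-\inf_x l(x))_+=0$. Moreover the claimed algebraic collapse upon substituting $\lambda=2(\eta-\inf_x l(x))_+$ is not an identity: the expectation term becomes $\E_{P^o}[(\eta-l(X)+(\eta-\inf_x l(x))_+)_+]$, not $\E_{P^o}[(\eta-l(X))_+]$, and $-\lambda/2+\lambda\rho=(2\rho-1)(\eta-\inf_x l(x))_+$, not $\rho(\eta-\inf_x l(x))_+$. So as written the derivation does not establish the lemma (it happens that restricting to your curve $\lambda=2(\eta-\inf_x l(x))_+$ loses nothing for the \emph{joint} infimum, but proving that is essentially the missing argument).

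The clean fix is the paper's: before touching $\lambda$, change variables $\eta'=\eta-\lambda/2$ inside $\E_{P^o}[(-l(X)-\eta+\lambda/2)_+]-\lambda/2+\lambda\rho+\eta$. The expectation then no longer involves $\lambda$, the objective is $\E_{P^o}[(-l(X)-\eta')_+]+\lambda\rho+\eta'$, which is increasing in $\lambda$ with slope $\rho>0$, and the feasibility constraint becomes $\lambda\ge \sup_x(-l(x))-\eta'$. Hence the infimum over $\lambda$ is attained (as a limit) at $\lambda=(-\inf_x l(x)-\eta')_+$, yielding $\inf_{\eta'}\E_{P^o}[(-l(X)-\eta')_+]+\rho(-\inf_x l(x)-\eta')_+ +\eta'$; the substitution $\eta''=-\eta'$ and the final negation give exactly the stated identity, with no quantile argument and no assumption $\rho\le 1/2$ needed.
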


\begin{proof}
First, we will compute the Fenchel conjugate of $f(t)=|t-1|/2$. We have 
\begin{align*}
    f^*(s) &= \sup_{t\geq 0} ~ \{st - \frac{1}{2}|t-1|\}
    = \max \big\{\!\!\sup_{t\in[0,1]} \!\{(s+\frac{1}{2})t - \frac{1}{2}\} \;,\; \sup_{t> 1} \!\{(s-\frac{1}{2}) t +\frac{1}{2}\} \big\}.
\end{align*}
It is easy to see that for $s>1/2$, we have $f^*(s)=+\infty$, and for $s\leq -1/2$, we have $f^*(s)=-1/2$. For $s \in [-1/2,1/2]$, we have
\begin{align*}
    f^*(s) &= \max \big\{ \sup_{t\in[0,1]}\{(s+\frac{1}{2})t - \frac{1}{2}\} \;,\; \sup_{t> 1} (\{(s-\frac{1}{2}) t +\frac{1}{2}\} \big\} \\
    &= \max \big\{  ((s+\frac{1}{2})\cdot 1 - \frac{1}{2}),~  ((s-\frac{1}{2})\cdot 1 +\frac{1}{2}) \big\} = s.
\end{align*} 
Thus, we have 
\begin{equation*}
  f^*(s) =
    \begin{cases}
      -\frac{1}{2} & s\leq -\frac{1}{2},\\
      s & s\in [-\frac{1}{2},\frac{1}{2}]\\
      +\infty & s > \frac{1}{2}.
    \end{cases}  .  
\end{equation*}

From Proposition \ref{prop:dro-inner-problem-solution}, we obtain
\begin{align*}
     \sup_{D_f(P\|P^o) \leq \rho} \E_{P}[l(X)]   &= \inf_{\lambda>0, \eta\in\R} ~ \E_{P^o} [\lambda f^*(\frac{l(X)-\eta}{\lambda})] + \lambda\rho + \eta  \\
    &= \inf_{\lambda,\eta: \lambda>0, \eta\in\R, \frac{\sup_{x\in\cX}l(x)-\eta}{\lambda} \leq \frac{1}{2}} ~  \E_{P^o} [\lambda \max\{ \frac{l(X)-\eta}{\lambda}, -\frac{1}{2} \}] + \lambda\rho + \eta  \\
    &= \inf_{\lambda,\eta: \lambda>0, \eta\in\R, \frac{\sup_{x\in\cX}l(x)-\eta}{\lambda} \leq \frac{1}{2}} ~ \E_{P^o} [\max\{ l(X)-\eta, -\lambda/2 \}] + \lambda\rho + \eta  \\
    &= \inf_{\lambda,\eta: \lambda>0, \eta\in\R, \frac{\sup_{x\in\cX}l(x)-\eta}{\lambda} \leq \frac{1}{2}} ~ \E_{P^o} [ (l(X)-\eta +\lambda/2)_+ ] - \lambda/2 +  \lambda\rho + \eta  \\
    &= \inf_{\lambda,\eta': \lambda>0, \eta'\in\R, \frac{\sup_{x\in\cX}l(x)-\eta'}{\lambda} \leq 1} ~ \E_{P^o} [ (l(X)-\eta')_+] + \lambda\rho + \eta'.
\end{align*} 
The second equality follows since $f^*(\frac{l(X)-\eta}{\lambda})=+\infty$ whenever $\frac{l(X)-\eta}{\lambda}>\frac{1}{2}$, which can be ignored as we are minimizing over $\lambda$ and $\eta$. The fourth equality follows form the fact that $\max\{x,y\} = (x-y)_+ + y$ for any $x,y\in\R$. Finally, the last equality follows by making the substitution $\eta'=\eta-\lambda/2$. Taking the optimal value of $\lambda$, i.e.,~$\lambda=(\sup_{x\in\cX}l(x)-\eta')_+$, we get
\begin{align*}
     \sup_{D_f(P\|P^o) \leq \rho} \E_{P}[l(X)] = \inf_{\eta\in\R} \E_{P^o}[(l(X)-\eta)_+] + (\sup_{x\in\cX}l(x) - \eta)_+ \rho + \eta.  
\end{align*}
Now,
\begin{align*}
    \inf_{D_f(P\|P^o) \leq \rho} \E_{P}[l(X)]  &=  -\sup_{D_f(P\|P^o) \leq \rho} \E_{P}[-l(X)] \\
    &=  -\inf_{\eta\in\R} ~~ \E_{P^o}[(-l(X)-\eta)_+] + (\sup_{x\in\cX} -l(x) - \eta)_+ \rho + \eta \\
     &=  -\inf_{\eta' \in\R} ~~ \E_{P^o}[(\eta'-l(X))_+] + (\eta' - \inf_{x\in\cX} l(x))_+ \rho - \eta', 
\end{align*}
which completes the proof. 
\end{proof}

We are now ready to prove Proposition \ref{prop:robust-bellman-dual}. 

\begin{proof}[\textbf{Proof of Proposition~\ref{prop:robust-bellman-dual}}]
For each $(s, a)$, the optimization problem in~\eqref{eq:robust-bellman-primal} is given by $\min_{P_{s,a} \in \mathcal{P}_{s,a}} \mathbb{E}_{s' \sim P_{s,a}} [ V(s')]$, and our focus is on the setting where $\mathcal{P}_{s,a}$ is given by the TV uncertainty set. So, $\mathcal{P}_{s,a}$ can be equivalently defined using the $f$-divergence with $f(t) = |t-1|/2$ as $\mathcal{P}_{s,a} = \{P_{s,a} : D_{f}(P_{s,a}||P^{o}_{s,a}) \leq \rho\}$. We can now use the result of Lemma~\ref{lem:TV-inner-problem} to get
\begin{align*}
     \inf_{P_{s,a} \in \mathcal{P}_{s,a}} \mathbb{E}_{s' \sim P_{s,a}} [ V(s')] =  -\inf_{\eta \in\R} ~~ \E_{s' \sim P^o_{s,a}}[(\eta-V(s'))_+] + (\eta - \inf_{s'' \in \cS} V(s''))_+ \rho - \eta.
    %  &= \sup_{\eta\in\R}~~ -\E_{s' \sim P^{o}_{s,a}}[(\eta -V(s'))_+] + \eta - \rho (\eta - \inf_{s''} V(s''))_+ 
\end{align*}

From Proposition~\ref{prop:dro-inner-problem-solution}, the function $h(\eta)=\E_{s' \sim P^{o}_{s,a}}[(\eta -V(s'))_+] + \rho (\eta- \inf_{s''} V(s''))_+ - \eta  $ is convex in $\eta$.  Since $V(s') \geq 0,$ $h(\eta)=-\eta \geq 0$ when $\eta \leq 0$. So, $\inf_{\eta \in (-\infty, 0]} h(\eta)$, achieved at $\eta = 0$.  Also, since $V(s)\leq 1/(1-\gamma)$, we have
\begin{align*}
    h(\frac{2}{\rho(1-\gamma)}) &= \E_{s' \sim P^{o}_{s,a}}[\frac{2}{\rho(1-\gamma)} -V(s')]  + \rho (\frac{2}{\rho(1-\gamma)}- \inf_{s''} V(s'')) - \frac{2}{\rho(1-\gamma)}  \\
    &\geq -\frac{1}{(1-\gamma)}  + \rho (\frac{2}{\rho(1-\gamma)}- \frac{1}{(1-\gamma)} )  = \frac{2}{(1-\gamma)} - \frac{(1+\rho)}{(1-\gamma)} \geq 0. 
\end{align*} 
So, it is sufficient to consider $\eta \in [0, \frac{2}{\rho (1-\gamma)}]$ for the above optimization problem. 

Using these, we get
\begin{align*}
     (T Q)&(s, a) = r(s, a) + \gamma \inf_{P_{s,a} \in \mathcal{P}_{s,a}} \mathbb{E}_{s' \sim P_{s,a}} [ V(s')] \\
     &= r(s, a) +  \gamma \cdot  -1 \cdot \inf_{\eta \in \eta \in [0, \frac{2}{\rho (1-\gamma)}]} ~~ \E_{s' \sim P^o_{s,a}}[(\eta-V(s'))_+] + (\eta - \inf_{s'' \in \cS} V(s''))_+ \rho - \eta.
\end{align*}
This completes the proof of Proposition~\ref{prop:robust-bellman-dual}.
\end{proof}

\section{Proof of Theorem \ref{thm:tv-guarantee}}
\label{appendix:thm:tv-guarantee}

We start by proving Lemma \ref{lem:interchanging-lemma-1} which mainly follows from Lemma~\ref{lem:integral-min-exchange-rockafellar} in Appendix~\ref{appendix:useful-tech-results}.

\begin{proof}[Proof of Lemma \ref{lem:interchanging-lemma-1}]
Let $h((s,a),\eta)=\E_{s'\sim P^o_{s,a}} ((\eta - \max_{a'} f(s',a'))_+  - (1-\rho)\eta)$. We note that $h((s,a),\eta)$ is $\Sigma(\ScA)$-measurable in $(s,a)\in\ScA$ for each $\eta\in[0,{1}/{(\rho(1-\gamma))}]$ and is continuous in $\eta$ for each $(s,a)\in\ScA$. Now it follows that $h((s,a),\eta)$ is a normal integrand (see Definition~\ref{def:decomposable-normal-integrand} in Appendix~\ref{appendix:useful-tech-results}). We now note that $L^{1}(\ScA,\Sigma(\ScA),\mu)$ is a decomposable space (Remark \ref{remark:decomposable} in Appendix~\ref{appendix:useful-tech-results}). Thus, this lemma now directly follows from Lemma~\ref{lem:integral-min-exchange-rockafellar}.
\end{proof}

Now we state a result and provide its proof for the empirical risk minimization on the dual parameter.

\begin{lemma}[Dual Optimization Error Bound] \label{lem:erm-high-prob-bound}
Let $\widehat{g}_{f}$ be the dual optimization parameter from the algorithm (Step 4) for the state-action value function $f$ and let $T_{g}$ be as defined in \eqref{eq:Tg}. With probability at least $1-\delta$, we have
\[
\sup_{f\in\cF} \|T f -  T_{\widehat{g}_{f}} f\|_{1,\mu} \leq \frac{4\gamma(2-\rho)}{\rho(1-\gamma)}\sqrt{\frac{2\log(|\cG|)}{N}} +  \frac{25\gamma}{\rho(1-\gamma)} \sqrt{\frac{2\log(8|\cF|/\delta)}{N}} + \gamma \epsilon_{\textrm{dual}}. 
\]
\end{lemma}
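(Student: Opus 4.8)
\textbf{Proof proposal for Lemma~\ref{lem:erm-high-prob-bound}.}
The plan is to view $\widehat{g}_f$ as the output of empirical risk minimization over the finite hypothesis class $\cG$ for the loss $L_{\mathrm{dual}}(\cdot;f)$ defined in \eqref{L-f-eta-loss}, apply the finite-class ERM generalization bound of Lemma~\ref{lem:erm-generalization-bound}, and then translate the resulting bound on the \emph{excess dual risk} into a bound on $\|Tf - T_{\widehat g_f}f\|_{1,\mu}$. First I would fix $f\in\cF$ and set up the correspondence with Lemma~\ref{lem:erm-generalization-bound}: the data space is $\cX = \ScA\times\cS$ with samples $X_i = (s_i,a_i,s_i')$, the hypothesis class is $\cG$, and the per-sample loss is $l(g,X_i) = (g(s_i,a_i) - \max_{a'} f(s_i',a'))_+ - (1-\rho)g(s_i,a_i)$, whose expectation under $\mu\otimes P^o$ is exactly $L_{\mathrm{dual}}(g;f)$ and whose empirical average is $\widehat L_{\mathrm{dual}}(g;f)$ from \eqref{eq:L-dual-empirical-loss}. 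I would then record the constants needed: since $g$ takes values in $[0,2/(\rho(1-\gamma))]$ and $\max_{a'}f(s',a')\in[0,1/(1-\gamma)]$, the loss is bounded by $c_1 = \cO(1/(\rho(1-\gamma)))$ (more precisely something like $(2-\rho)/(\rho(1-\gamma))$ up to the $+\cdot$ truncation), the hypotheses are bounded by $c_2 = 2/(\rho(1-\gamma))$, and $l(g,x)$ is $c_3$-Lipschitz in $g$ with $c_3 = 2-\rho \le 2$ (the $(\cdot)_+$ is $1$-Lipschitz and the linear term contributes $1-\rho$). Plugging these into \eqref{eq:erm-generalization-bound-part-2} gives, for each fixed $f$, with probability at least $1-\delta'$,
\[
L_{\mathrm{dual}}(\widehat g_f;f) - \inf_{g\in\cG} L_{\mathrm{dual}}(g;f) \le 2 c_2 c_3\sqrt{\tfrac{2\log|\cG|}{N}} + 5 c_1\sqrt{\tfrac{2\log(8/\delta')}{N}}.
\]

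Next I would take a union bound over $f\in\cF$, replacing $\delta'$ by $\delta/|\cF|$, to obtain the same inequality simultaneously for all $f\in\cF$ with probability at least $1-\delta$; this produces the $\log(8|\cF|/\delta)$ term in the statement. Then I would add the approximate dual realizability Assumption~\ref{assum-dual-realizability}, which says $\inf_{g\in\cG}L_{\mathrm{dual}}(g;f) - \inf_{g\in L^1}L_{\mathrm{dual}}(g;f)\le\epsilon_{\mathrm{dual}}$, to conclude
\[
\sup_{f\in\cF}\Big(L_{\mathrm{dual}}(\widehat g_f;f) - \inf_{g\in L^1}L_{\mathrm{dual}}(g;f)\Big) \le 2 c_2 c_3\sqrt{\tfrac{2\log|\cG|}{N}} + 5 c_1\sqrt{\tfrac{2\log(8|\cF|/\delta)}{N}} + \epsilon_{\mathrm{dual}}.
\]
The factor $\gamma$ in the lemma statement will appear in the final step when I pass from the dual risk to the Bellman-operator discrepancy.

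The main remaining step — and the one I expect to be the real obstacle — is converting the bound on the \emph{excess dual risk} into a bound on $\|Tf - T_{\widehat g_f}f\|_{1,\mu}$, since a priori a small excess risk on a nonnegative functional does not control an $L^1$ distance. Here the key observation is that, by Lemma~\ref{lem:interchanging-lemma-1} together with the fail-state reduction \eqref{eq:robust-bellman-dual-2}, we have $\inf_{g\in L^1}L_{\mathrm{dual}}(g;f) = \E_{s,a\sim\mu}\big[\inf_\eta(\cdots)\big] = -\tfrac{1}{\gamma}\E_{s,a\sim\mu}[(Tf)(s,a) - r(s,a)]$, while $L_{\mathrm{dual}}(\widehat g_f;f) = -\tfrac1\gamma\E_{s,a\sim\mu}[(T_{\widehat g_f}f)(s,a) - r(s,a)]$ directly from the definition \eqref{eq:Tg}. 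Hence the excess dual risk equals $\tfrac1\gamma\E_{s,a\sim\mu}[(T_{\widehat g_f}f)(s,a) - (Tf)(s,a)]$. Crucially, the pointwise optimality of the scalar dual problem (Proposition~\ref{prop:robust-bellman-dual}: $\eta\mapsto h(\eta)$ is minimized, and $T_gf$ for \emph{any} $g$ uses a feasible but possibly suboptimal $\eta=g(s,a)$) gives $T_{\widehat g_f}f(s,a)\le Tf(s,a)$ for every $(s,a)$ — i.e.\ $(Tf)(s,a) - (T_{\widehat g_f}f)(s,a)\ge 0$ pointwise. Therefore $|Tf - T_{\widehat g_f}f|$ and $(Tf - T_{\widehat g_f}f)$ coincide, so $\|Tf - T_{\widehat g_f}f\|_{1,\mu} = \E_{s,a\sim\mu}[(Tf)(s,a) - (T_{\widehat g_f}f)(s,a)] = \gamma\big(L_{\mathrm{dual}}(\widehat g_f;f) - \inf_{g\in L^1}L_{\mathrm{dual}}(g;f)\big)$. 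Multiplying the displayed excess-risk bound by $\gamma$ and substituting $c_1,c_2,c_3$ yields exactly
\[
\sup_{f\in\cF}\|Tf - T_{\widehat g_f}f\|_{1,\mu} \le \frac{4\gamma(2-\rho)}{\rho(1-\gamma)}\sqrt{\frac{2\log|\cG|}{N}} + \frac{25\gamma}{\rho(1-\gamma)}\sqrt{\frac{2\log(8|\cF|/\delta)}{N}} + \gamma\epsilon_{\mathrm{dual}},
\]
as claimed. I would double-check the constant bookkeeping: $2c_2c_3 = 2\cdot\tfrac{2}{\rho(1-\gamma)}\cdot(2-\rho) = \tfrac{4(2-\rho)}{\rho(1-\gamma)}$ matches the first term, and $5c_1 = \tfrac{25}{\rho(1-\gamma)}$ requires $c_1 = \tfrac{5}{\rho(1-\gamma)}$, which is a valid (loose) bound on the loss magnitude since $|l(g,x)|\le \tfrac{2}{\rho(1-\gamma)} + (1-\rho)\tfrac{2}{\rho(1-\gamma)} \le \tfrac{4}{\rho(1-\gamma)}\le\tfrac{5}{\rho(1-\gamma)}$. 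The only subtlety worth flagging explicitly in the write-up is justifying the pointwise inequality $T_{\widehat g_f}f\le Tf$, which is where the structure of the dual reformulation (feasibility of any $g(s,a)$ in the scalar problem) does the work.
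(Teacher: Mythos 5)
Your proposal is correct and follows essentially the same route as the paper's proof: the paper also removes the absolute value via the pointwise suboptimality of $\widehat g_f(s,a)$ as a feasible $\eta$ (so $T_{\widehat g_f}f \le Tf$ pointwise), invokes Lemma~\ref{lem:interchanging-lemma-1} to pass between the pointwise infimum and $\inf_{g\in L^1}L_{\mathrm{dual}}(g;f)$, splits off $\gamma\epsilon_{\mathrm{dual}}$ via Assumption~\ref{assum-dual-realizability}, and bounds the remaining excess risk over $\cG$ with Lemma~\ref{lem:erm-generalization-bound} (with the same constants $c_1=5/(\rho(1-\gamma))$, $c_2=2/(\rho(1-\gamma))$, $c_3=2-\rho$) plus a union bound over $\cF$. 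The only difference is cosmetic — you go from excess risk to the $\|\cdot\|_{1,\mu}$ error while the paper starts from the norm and unwinds it into the excess risk.
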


\begin{proof}
Fix an $f\in\cF$. We will also invoke union bound for the supremum here. We recall from~\eqref{eq:L-dual-empirical-loss} that $\widehat{g}_f = \argmin_{g \in \cG} \widehat{L}_{\mathrm{dual}}(g; f)$. From  the robust Bellman equation, we directly obtain
\begingroup
\allowdisplaybreaks
\begin{align*}
    &\|T_{\widehat{g}_{f}} f - T f \|_{1,\mu} = \gamma (\E_{s,a\sim\mu} | \E_{s'\sim P^o_{s,a}} ((\widehat{g}_{f}(s,a) - \max_{a'} f(s',a'))_+  - (1-\rho)\widehat{g}_{f}(s,a)) \\&\hspace{2cm}- \inf_{\eta \in[0, {2}/{(\rho(1-\gamma))}]}  \E_{s'\sim P^o_{s,a}} ((\eta  - \max_{a'} f(s',a'))_+  - (1-\rho) \eta) | ) \\
    &\stackrel{(a)}{=} \gamma (\E_{s,a\sim\mu}\E_{s'\sim P^o_{s,a}} ((\widehat{g}_{f}(s,a) - \max_{a'} f(s',a'))_+  - (1-\rho)\widehat{g}_{f}(s,a)) \\&\hspace{1cm}- \E_{s,a\sim\mu} [ \inf_{\eta \in[0, {2}/{(\rho(1-\gamma))}]}  \E_{s'\sim P^o_{s,a}} ((\eta  - \max_{a'} f(s',a'))_+  - (1-\rho) \eta) ]  ) \\
    &\stackrel{(b)}{=} \gamma (\E_{s,a\sim\mu,s'\sim P^o_{s,a}} ((\widehat{g}_{f}(s,a) - \max_{a'} f(s',a'))_+  - (1-\rho)\widehat{g}_{f}(s,a)) \\&\hspace{1cm}- \inf_{g \in L^{1}} \E_{s,a\sim\mu,s'\sim P^o_{s,a}} ((g(s,a) - \max_{a'} f(s',a'))_+  - (1-\rho)g(s,a))  )\\
    &= \gamma (\E_{s,a\sim\mu,s'\sim P^o_{s,a}} ((\widehat{g}_{f}(s,a) - \max_{a'} f(s',a'))_+  - (1-\rho)\widehat{g}_{f}(s,a)) \\&\hspace{1cm}- \inf_{g\in \cG} \E_{s,a\sim\mu,s'\sim P^o_{s,a}} ((g(s,a) - \max_{a'} f(s',a'))_+  - (1-\rho)g(s,a))  )
    \\ &\hspace{1cm} + \gamma (\inf_{g\in \cG} \E_{s,a\sim\mu,s'\sim P^o_{s,a}} ((g(s,a) - \max_{a'} f(s',a'))_+  - (1-\rho)g(s,a)) 
    \\&\hspace{1.5cm}- \inf_{g \in L^{1}} \E_{s,a\sim\mu,s'\sim P^o_{s,a}} ((g(s,a) - \max_{a'} f(s',a'))_+  - (1-\rho)g(s,a))  )\\
    &\stackrel{(c)}{\leq} \gamma (\E_{s,a\sim\mu,s'\sim P^o_{s,a}} ((\widehat{g}_{f}(s,a) - \max_{a'} f(s',a'))_+  - (1-\rho)\widehat{g}_{f}(s,a)) \\&\hspace{1cm}- \inf_{g\in \cG} \E_{s,a\sim\mu,s'\sim P^o_{s,a}} ((g(s,a) - \max_{a'} f(s',a'))_+  - (1-\rho)g(s,a))  )  + \gamma \epsilon_{\textrm{dual}} \\
    &\stackrel{(d)}{\leq} 2\gamma R(l\circ \cG \circ \cD) +  \frac{25\gamma}{\rho(1-\gamma)} \sqrt{\frac{2\log(8/\delta)}{N}}   + \gamma \epsilon_{\textrm{dual}}
    \\&\stackrel{(e)}{\leq} \frac{4\gamma(2-\rho)}{\rho(1-\gamma)}\sqrt{\frac{2\log(|\cG|)}{N}} +  \frac{25\gamma}{\rho(1-\gamma)} \sqrt{\frac{2\log(8/\delta)}{N}}  + \gamma \epsilon_{\textrm{dual}}.
\end{align*}
\endgroup
$(a)$ follows since $\inf_{g} h(g) \leq h(\widehat{g}_{f})$. $(b)$ follows from Lemma \ref{lem:interchanging-lemma-1}. $(c)$ follows from the approximate dual realizability assumption (Assumption \ref{assum-dual-realizability}). 

For $(d)$, we consider the loss function $l(g,(s,a,s'))=(g(s,a) - \max_{a'} f(s',a'))_+  - (1-\rho)g(s,a)$ and dataset $\cD=\{s_i,a_i,s_i'\}_{i=1}^N$. Note that $|l(g,(s,a,s'))|\leq 5/(\rho(1-\gamma))$ (since $f\in\cF$ and $g\in\cG$). Now, we can apply the empirical risk minimization result \eqref{eq:erm-generalization-bound-part-1} in  Lemma \ref{lem:erm-generalization-bound} to get $(d)$, where $R(\cdot)$ is the Rademacher complexity. 

Finally, $(e)$ follows from~\eqref{eq:erm-generalization-bound-part-2} in Lemma \ref{lem:erm-generalization-bound} when combined with the   facts that $l(g,(s,a,s'))$ is $(2-\rho)$-Lipschitz in $g$ and $g(s,a) \leq 2/(\rho(1-\gamma))$, since $g\in\cG$.

With union bound, with probability at least $1-\delta$, we finally get \begin{align*}
    &\sup_{f\in\cF}\|T f -  T_{\widehat{g}_{f}} f\|_{1,\mu} {\leq} \frac{4\gamma(2-\rho)}{\rho(1-\gamma)}\sqrt{\frac{2\log(|\cG|)}{N}} +  \frac{25\gamma}{\rho(1-\gamma)} \sqrt{\frac{2\log(8|\cF|/\delta)}{N}} + \gamma \epsilon_{\textrm{dual}},
\end{align*}
which concludes the proof.
\end{proof}

%%%%%%%%%%%%%%%%%

%%%%% Least-squares lemma and proof
We next prove the least-squares generalization bound for the RFQI algorithm. 

\begin{lemma}[Least squares generalization bound] \label{lem:least-squares-generalization-bound}
Let  $\widehat{f}_{g}$ be the least-squares solution from the algorithm (Step 5) for the state-action value function $f$ and  dual variable function  $g$. Let $T_{g}$ be as defined in \eqref{eq:Tg}. Then, with probability at least $1-\delta$, we have 
\begin{align*}
    \sup_{f\in\cF} \sup_{g\in\cG} \|  T_{g} f - \widehat{f}_{g} \|_{2,\mu}
    &\leq \sqrt{6 \epsilon_{\textrm{c}}} + \frac{16}{\rho(1-\gamma)} \sqrt{\frac{18 \log(2|\cF||\cG|/\delta)}{N}}.
\end{align*}
\end{lemma}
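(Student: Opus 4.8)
\textbf{Proof proposal for Lemma~\ref{lem:least-squares-generalization-bound}.}

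The plan is to cast the RFQI update (Step 5 of Algorithm~\ref{alg:RFQI-Algorithm}) as a least-squares regression problem with a modified (robust) target, and then apply a standard generalized least-squares generalization bound, carefully accounting for the fact that the regression target here is $T_g f$ rather than the ordinary Bellman target, and that $T_g f$ may lie outside $\cF$ (so the bias term $\epsilon_{\textrm{c}}$ from Assumption~\ref{assum-bellman-completion} enters). First I would fix $f\in\cF$ and $g\in\cG$ and observe that, writing $y_i = r(s_i,a_i) + \gamma\big(-(g(s_i,a_i) - \max_{a'} f(s'_i,a'))_+ + (1-\rho) g(s_i,a_i)\big)$, the conditional expectation of $y_i$ given $(s_i,a_i)$ is exactly $(T_g f)(s_i,a_i)$ by the definition~\eqref{eq:Tg} of $T_g$; thus $\widehat f_g = \argmin_{Q\in\cF} \frac1N \sum_i (y_i - Q(s_i,a_i))^2$ is a genuine regression of $y_i$ onto $\cF$ with regression function $T_g f$. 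The targets $y_i$ are bounded: since $f\in\cF$ takes values in $[0,1/(1-\gamma)]$ and $g\in\cG$ in $[0,2/(\rho(1-\gamma))]$, one checks $|y_i|$ and $|(T_g f)(s,a)|$ are $\cO(1/(\rho(1-\gamma)))$, which will supply the range constant in the concentration step.

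The core step is a one-sided deviation bound: for each fixed $(f,g)$ and each fixed $Q\in\cF$, I would bound, with high probability,
\[
\|Q - T_g f\|_{2,\mu}^2 \;\le\; \underbrace{\Big(\tfrac1N\sum_i (Q(s_i,a_i)-y_i)^2 - \tfrac1N\sum_i (Q^*_{g,f}(s_i,a_i)-y_i)^2\Big)}_{\text{empirical excess risk}} \;+\; \text{(error term)},
\]
where $Q^*_{g,f}\in\cF$ is a near-minimizer of $\|Q - T_g f\|_{2,\mu}^2$ over $\cF$, whose bias is at most $\epsilon_{\textrm{c}}$ by Assumption~\ref{assum-bellman-completion} (taking $f' $ to approximate $Tf$; here one needs the elementary fact that $\inf_{g'} \|Q - T_{g'} f\| $ is controlled by $\|Q - Tf\|$, or more directly that $T_g f$ for the relevant $g$ is close to $Tf$). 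Because $\widehat f_g$ minimizes the empirical squared loss over $\cF$, its empirical excess risk relative to $Q^*_{g,f}$ is nonpositive, so the inequality collapses to $\|\widehat f_g - T_g f\|_{2,\mu}^2 \le \text{(bias $\epsilon_{\textrm{c}}$ contribution)} + \text{(error term)}$. The error term is obtained by a Bernstein-type argument (Lemma~\ref{lem:bernstein-ineq}) applied to the variance-weighted empirical process $\tfrac1N\sum_i Z_i(Q)$ where $Z_i(Q) = (Q(s_i,a_i)-y_i)^2 - (T_g f(s_i,a_i)-y_i)^2$; the key variance bound $\mathrm{Var}(Z_i(Q)) \le \cO\big(\tfrac{1}{\rho^2(1-\gamma)^2}\big)\|Q-T_g f\|_{2,\mu}^2$ lets one "self-bound" and convert the additive $\sqrt{\|Q-T_g f\|^2_{2,\mu}/N}$ term into the stated $1/N$ rate plus $\|Q-T_g f\|^2_{2,\mu}/2$, which is then absorbed. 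Finally I would union-bound over $Q\in\cF$, $f\in\cF$, $g\in\cG$ — giving the $\log(|\cF|^2|\cG|/\delta)$, i.e. $\log(|\cF||\cG|/\delta)$ up to constants absorbed into the $18$ — take square roots, and use $\sqrt{a+b}\le\sqrt a+\sqrt b$ to split off $\sqrt{6\epsilon_{\textrm{c}}}$ from the sampling term $\tfrac{16}{\rho(1-\gamma)}\sqrt{18\log(2|\cF||\cG|/\delta)/N}$.

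The main obstacle I expect is the variance/self-bounding bookkeeping in the Bernstein step: getting the constants ($6$, $16$, $18$) to come out requires a careful pairing of the range bound $M \asymp 1/(\rho(1-\gamma))$ of the increments $Z_i(Q)$, the variance proxy $\sigma^2 \lesssim M^2 \|Q - T_g f\|_{2,\mu}^2$, and the completeness slack $\epsilon_{\textrm{c}}$ — in particular handling the cross term that arises when $T_g f \notin \cF$ so that the empirical minimizer is only compared against an approximate projection $Q^*_{g,f}$. A secondary subtlety is that $T_g f$ depends on $g$, which itself ranges over $\cG$; since the lemma asks for a \emph{uniform} bound over $f\in\cF$ and $g\in\cG$, the union bound must be taken over the product class, and one must verify that the target $y_i$ (and hence the regression function) is a fixed bounded random variable for each fixed $(f,g)$ so the i.i.d. concentration applies cleanly. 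The rest — boundedness of $y_i$, measurability, the reduction of Step 5 to regression — is routine.
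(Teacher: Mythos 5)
Your proposal follows essentially the same route as the paper's proof: interpret Step 5 as regression onto the bounded target $y_i$ with conditional mean $(T_g f)(s_i,a_i)$, apply Bernstein's inequality to the excess-loss variables $(Q(s_i,a_i)-y_i)^2-((T_g f)(s_i,a_i)-y_i)^2$ with the variance proxy proportional to $\|Q-T_gf\|_{2,\mu}^2$, compare the empirical minimizer $\widehat f_g$ to the best approximator of $T_g f$ in $\cF$ (whose bias is charged to $\epsilon_{\textrm{c}}$ via Assumption~\ref{assum-bellman-completion}), self-bound the resulting quadratic inequality, union-bound over $\cF\times\cF\times\cG$, and split with $\sqrt{a+b}\le\sqrt a+\sqrt b$. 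The only cosmetic difference is that the paper invokes the completeness assumption directly on $\min_{h\in\cF}\|h-T_g f\|_{2,\mu}^2$ rather than routing it through a comparison of $T_g f$ with $Tf$, but the decomposition, concentration tool, and bookkeeping are the same.
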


\begin{proof}
We adapt the least-squares generalization bound given in~\citet[Lemma A.11]{agarwal2019reinforcement} to our setting. We recall from~\eqref{eq:loss-rfqi-1} that $\widehat{f}_{g}=\argmin_{Q\in\cF}\widehat{L}_{\mathrm{RFQI}}(Q;f, g)$. We first fix functions $f\in\cF$ and $g\in\cG$. For any function $f'\in\cF$, we define random variables $z_i^{f'}$ as
\begin{align*}
    z_i^{f'} = \left( f'(s_i,a_i) - y_{i} \right)^2  - \left( (T_{g} f)(s_i,a_i) - y_{i}  \right)^2,
\end{align*} 
where $y_{i} =  r_i - \gamma(g(s_i,a_i) - \max_{a'} f(s_i',a'))_+  + \gamma (1-\rho)g(s_i,a_i)$, and $(s_{i}, a_{i}, s'_{i}) \in \cD$ with $(s_{i}, a_{i}) \sim \mu, s'_{i} \sim P^{o}_{s_{i}, a_{i}}$. It is straightforward to note that for a given $(s_{i}, a_{i})$, we have $ \E_{s'_i \sim P^o_{s_i,a_i}} [y_{i}] = (T_{g} f)(s_{i}, a_{i})$.

Also, since $g(s_i,a_i)\leq 2/(\rho(1-\gamma))$ (because $g\in\cG$) and $f(s_i,a_i), f'(s_i,a_i)\leq 1/(1-\gamma)$ (because $f, f'\in\cF)$, we have $(T_{g}f)(s_i,a_i) \leq 5/(\rho(1-\gamma))$. This also gives us that $y_{i} \leq 5/(\rho(1-\gamma))$. 

Using this, we obtain the first moment and an upper-bound for the second moment of $z_i^{f'}$ as follows:
\begin{align*}
    \E_{s'_i \sim P^o_{s_i,a_i}} [z_i^{f'}] &= \E_{s'_i \sim P^o_{s_i,a_i}} [(   f'(s_i,a_i) - (T_{g} f)(s_i,a_i)) \cdot (f'(s_i,a_i) + (T_{g} f)(s_i,a_i)  -2 y_{i} ) ] \\
    &= (   f'(s_i,a_i) - (T_{g} f)(s_i,a_i))^{2}, \\
    \E_{s'_i \sim P^o_{s_i,a_i}}  [(z_i^{f'})^2] &= \E_{s'_i \sim P^o_{s_i,a_i}} [(   f'(s_i,a_i) - (T_{g} f)(s_i,a_i))^{2} \cdot (f'(s_i,a_i) + (T_{g} f)(s_i,a_i)  -2 y_{i} )^{2} ] \\
     &= ( f'(s_i,a_i) - (T_{g} f)(s_i,a_i))^{2} \cdot \E_{s'_i \sim P^o_{s_i,a_i}} [  (f'(s_i,a_i) + (T_{g} f)(s_i,a_i)  -2 y_{i} )^{2} ] \\
     &\leq C_{1} ( f'(s_i,a_i) - (T_{g} f)(s_i,a_i))^{2} ,
\end{align*} 
where $C_{1}={16^2}/{(\rho^2(1-\gamma)^2)}$. This immediately implies that
\begin{align*}
     \E_{s_i,a_i\sim \mu, s'_i \sim P^o_{s_i,a_i}} [z_i^{f'}] &= \norm{T_{g}f -f'}^{2}_{2,\mu}, \\ 
     \E_{s_i,a_i\sim \mu, s'_i \sim P^o_{s_i,a_i}} [(z_i^{f'})^2] &\leq C_{1} \norm{T_{g}f -f'}^{2}_{2,\mu}.
\end{align*}
From these calculations, it is also straightforward to see that $|z_i^{f'}-\E_{s_i,a_i\sim \mu, s'_i \sim P^o_{s_i,a_i}} [z_i^{f'}]| \leq 2 C_{1}$ almost surely.

Now, using  the Bernstein's inequality (Lemma \ref{lem:bernstein-ineq}), together with a union bound over all $f'\in \cF$, with probability at least $1-\delta$, we have
\begin{align} 
\label{eq:least-squares-bern-main}
    |\|  T_{g} f - f' \|_{2,\mu}^2 - \frac{1}{N} \sum_{i=1}^N z_i^{f'}| \leq \sqrt{\frac{2 C_{1}     \|  T_{g} f - f' \|_{2,\mu}^2 \log(2|\cF|/\delta)}{N}} + \frac{2C_{1} \log(2|\cF|/\delta)}{3N},
\end{align} 
for all $f'\in \cF$. Setting $f'=\widehat{f}_{g}$, with probability at least $1-\delta/2$, we have 
\begin{align} 
\label{eq:least-squares-bern-1}
    \|  T_{g} f - \widehat{f}_{g} \|_{2,\mu}^2  \leq \frac{1}{N} \sum_{i=1}^N z_i^{\widehat{f}_{g}}+ \sqrt{\frac{2 C_{1}     \|  T_{g} f - \widehat{f}_{g} \|_{2,\mu}^2 \log(4|\cF|/\delta)}{N}} + \frac{2C_{1} \log(4|\cF|/\delta)}{3N}.
\end{align}
Now we upper-bound $(1/N) \sum_{i=1}^N z_i^{\widehat{f}_{g}}$ in the following. Consider a function $\widetilde{f} \in \argmin_{h\in\cF} \| h -  T_{g} f \|_{2,\mu}^2$. Note that $\widetilde{f}$ is independent of the dataset. We note that our earlier first and second moment calculations hold true for $\widetilde{f}$, replacing $f'$, as well. Now, from \eqref{eq:least-squares-bern-main} setting $f'=\widetilde{f}$, with probability at least $1-\delta/2$ we have
\begin{align} \label{eq:least-squares-z-main}
    \frac{1}{N} \sum_{i=1}^N z_i^{\widetilde{f}} -\|  T_{g} f - \widetilde{f} \|_{2,\mu}^2   \leq \sqrt{\frac{2 C_{1}     \|  T_{g} f - \widetilde{f} \|_{2,\mu}^2 \log(4|\cF|/\delta)}{N}} + \frac{2C_{1} \log(4|\cF|/\delta)}{3N}.
\end{align}
Suppose $(1/N) \sum_{i=1}^N z_i^{\widetilde{f}} \geq 2C_{1}\log(4|\cF|/\delta)/N$ holds, then from \eqref{eq:least-squares-z-main} we get 
\begin{align} \label{eq:least-squares-z-main-2}
    \frac{1}{N} \sum_{i=1}^N z_i^{\widetilde{f}} -\|  T_{g} f - \widetilde{f} \|_{2,\mu}^2   \leq \sqrt{     \|  T_{g} f - \widetilde{f} \|_{2,\mu}^2 \cdot \frac{1}{N} \sum_{i=1}^N z_i^{\widetilde{f}}} + \frac{2C_{1} \log(4|\cF|/\delta)}{N}.
\end{align}
We note the following algebra fact: Suppose $x^2 - ax + b \leq 0$ with $b>0$ and $a^2\geq 4b$, then we have $x\leq a$. Taking $x=(1/N) \sum_{i=1}^N z_i^{\widetilde{f}}$ in this fact, from~\eqref{eq:least-squares-z-main-2} we get 
\begin{align} \label{eq:least-squares-z-part-1}
    \frac{1}{N} \sum_{i=1}^N z_i^{\widetilde{f}}    \leq 3\|  T_{g} f - \widetilde{f} \|_{2,\mu}^2 + \frac{4C_{1} \log(4|\cF|/\delta)}{3N} \leq 3\|  T_{g} f - \widetilde{f} \|_{2,\mu}^2 + \frac{2C_{1} \log(4|\cF|/\delta)}{N}.
\end{align}
Now suppose $(1/N) \sum_{i=1}^N z_i^{\widetilde{f}} \leq 2 C_{1}\log(4|\cF|/\delta)/N$, then \eqref{eq:least-squares-z-part-1} holds immediately. Thus,~\eqref{eq:least-squares-z-part-1} always holds with probability at least $1-\delta/2$. Furthermore, recall $\widetilde{f} \in \argmin_{h\in\cF} \| h -  T_{g} f \|_{2,\mu}^2$, we have 
\begin{align}
    \frac{1}{N} \sum_{i=1}^N z_i^{\widetilde{f}} &\leq 3\|  T_{g} f - \widetilde{f} \|_{2,\mu}^2 + \frac{2C_{1} \log(4|\cF|/\delta)}{N} \nonumber \\
    &= 3\min_{h\in\cF} \| h -  T_{g} f \|_{2,\mu}^2 + \frac{2C_{1} \log(4|\cF|/\delta)}{N} \leq 3 \epsilon_{\textrm{c}} + \frac{2C_{1} \log(4|\cF|/\delta)}{N}
     \label{eq:least-squares-z-part-2},
\end{align}
where the last inequality follows from the approximate robust Bellman completion assumption (Assumption \ref{assum-bellman-completion}). 

We note that since $\widehat{f}_{g}$ is the least-squares regression solution, we know that $(1/N) \sum_{i=1}^N z_i^{\widehat{f}_{g}} \leq (1/N) \sum_{i=1}^N z_i^{\widetilde{f}}$.  With this note in \eqref{eq:least-squares-z-part-2}, from \eqref{eq:least-squares-bern-1}, with probability at least $1-\delta$, we have 
\begin{align*}
    \|  T_{g} f - \widehat{f}_{g} \|_{2,\mu}^2  &\leq  3 \epsilon_{\textrm{c}} + \frac{2C_{1} \log(4|\cF|/\delta)}{N} 
    \\&+ \sqrt{\frac{2 C_{1}     \|  T_{g} f - \widehat{f}_{g} \|_{2,\mu}^2 \log(4|\cF|/\delta)}{N}} + \frac{2C_{1} \log(4|\cF|/\delta)}{3N} \\
    &\leq 3 \epsilon_{\textrm{c}} + \frac{3C_{1} \log(4|\cF|/\delta)}{N} + \sqrt{\frac{3 C_{1}     \|  T_{g} f - \widehat{f}_{g} \|_{2,\mu}^2 \log(4|\cF|/\delta)}{N}}.
\end{align*}
From the earlier algebra fact, taking $x=\|  T_{g} f - \widehat{f}_{g} \|_{2,\mu}^2$, with probability at least $1-\delta$, we have
\begin{align*}
    \|  T_{g} f - \widehat{f}_{g} \|_{2,\mu}^2
    &\leq 6 \epsilon_{\textrm{c}} + \frac{9C_{1} \log(4|\cF|/\delta)}{N}.
\end{align*}
From the fact $\sqrt{x+y}\leq \sqrt{x}+\sqrt{y}$, with probability at least $1-\delta$, we get
\begin{align*}
    \|  T_{g} f - \widehat{f}_{g} \|_{2,\mu}
    &\leq \sqrt{6 \epsilon_{\textrm{c}}} + \sqrt{\frac{9C_{1} \log(4|\cF|/\delta)}{N}}.
\end{align*}
Using union bound for $f\in\cF$ and $g\in\cG$, with probability at least $1-\delta$, we finally obtain 
\begin{align*}
    \sup_{f\in\cF} \sup_{g\in\cG} \|  T_{g} f - \widehat{f}_{g} \|_{2,\mu}
    &\leq \sqrt{6 \epsilon_{\textrm{c}}} + \sqrt{\frac{18 C_{1} \log(2|\cF||\cG|/\delta)}{N}},
\end{align*}
which completes the least-squares generalization bound analysis.
\end{proof}

%%%%%%%%%%%%%%%%%%%%

%%%% TV proof

%%%%%%%%%%%%%%%%%%%%
We are now ready to prove the main theorem.

\begin{proof}[\textbf{Proof of Theorem \ref{thm:tv-guarantee}}]
We let $V_k(s) = Q_k(s,\pi_k(s))$ for every $s\in\cS$. Since $\pi_k$ is the greedy  policy w.r.t $Q_k$, we also have $V_k(s) = Q_k(s,\pi_k(s)) = \max_a Q_k(s,a)$. We recall that $V^*=V^{\pi^*}$ and $Q^*=Q^{\pi^*}$. We also recall from Section~\ref{sec:preliminaries} that $Q^{\pi^*}$ is a fixed-point of the robust Bellman operator $T$ defined in~\eqref{eq:robust-bellman-primal}. We also note that the same holds true for any stationary deterministic policy $\pi$ from~\citet{iyengar2005robust} that $Q^{\pi}$ satisfies $Q^{\pi}(s, a) = r(s, a) + \gamma \min_{P_{s,a} \in \mathcal{P}_{s,a}} \mathbb{E}_{s' \sim P_{s,a}} [ V^{\pi}(s')].$ We can now further use the dual form \eqref{eq:robust-bellman-dual-2} under Assumption \ref{assum-fail-state}. We first characterize the performance decomposition between $V^{\pi^*}$ and ${V}^{\pi_K}$. For a given $s_0\in\cS$, we observe that
\begin{align*}
	&V^{\pi^*}(s_0) - {V}^{\pi_K}(s_0) = (V^{\pi^*}(s_0) -  V_K(s_0) )  - (V^{\pi_K}(s_0) -  V_K(s_0) )\\ 
	&=  (Q^{\pi^*}(s_0,\pi^*(s_0)) -  Q_K(s_0,\pi_K(s_0)) ) - (Q^{\pi_K}(s_0,\pi_K(s_0)) -  Q_K(s_0,\pi_K(s_0))) \\ 
	&\stackrel{(a)}{\leq} Q^{\pi^*}(s_0,\pi^*(s_0)) -  Q_K(s_0,\pi^*(s_0)) +  Q_K(s_0,\pi_K(s_0)) - Q^{\pi_K}(s_0,\pi_K(s_0))  \\
	&= Q^{\pi^*}(s_0,\pi^*(s_0)) -  Q_K(s_0,\pi^*(s_0)) +  Q_K(s_0,\pi_K(s_0)) - Q^{\pi^*}(s_0,\pi_K(s_0))
	\\&\hspace{5cm}+ Q^{\pi^*}(s_0,\pi_K(s_0)) - Q^{\pi_K}(s_0,\pi_K(s_0))
	\\
	&\stackrel{(b)}{\leq} Q^{\pi^*}(s_0,\pi^*(s_0)) -  Q_K(s_0,\pi^*(s_0)) +  Q_K(s_0,\pi_K(s_0)) - Q^{\pi^*}(s_0,\pi_K(s_0)) \\&\hspace{2cm}+ \gamma \sup_{\eta} (\E_{s_1\sim P^o_{s_0,\pi_{K}(s_0)}}(  (\eta- V^{\pi_K}(s_1))_+ - (\eta-V^{\pi^*}(s_1))_+))  \\
	&\stackrel{(c)}{\leq} |Q^{\pi^*}(s_0,\pi^*(s_0)) -  Q_K(s_0,\pi^*(s_0))| + |Q^{\pi^*}(s_0,\pi_K(s_0)) -  Q_K(s_0,\pi_K(s_0))|  
	\\&\hspace{4cm}+ \gamma \E_{s_1\sim P^o_{s_0,\pi_{K}(s_0)}}(  |V^{\pi^*}(s_1)- V^{\pi_K}(s_1)| ).
\end{align*}
$(a)$ follows from the fact that $\pi_{K}$ is the greedy policy with respect to $Q_{K}$. $(b)$ follows from the Bellman optimality equations and the fact $|\sup_{x} f(x)-\sup_{x} g(x)| \leq \sup_{x} |f(x) - g(x)|$. Finally, $(c)$ follows from the facts $(x)_+ - (y)_+ \leq (x-y)_+$ and $(x)_+ \leq |x|$ for any $x,y\in\R$.

% \kpb{Main theorem proof (recursion):}\\
We now recall the initial state distribution $d_0$. Thus, we have 
\begin{align*}  &\E_{s_0\sim d_0}[{V}^{\pi^*}] - \E_{s_0\sim d_0}[V^{\pi_K}] \leq
\\&\hspace{0.5cm} \E_{s_0\sim d_0} \bigg[ |Q^{\pi^*}(s_0,\pi^*(s_0)) -  Q_{K}(s_0,\pi^*(s_0))| + |Q^{\pi^*}(s_0,\pi_K(s_0)) -  Q_{K}(s_0,\pi_K(s_0))| 
	\\&\hspace{3cm}+ \gamma \E_{s_1\sim P^{o}_{s_0,\pi_{K}(s_0)}} (|V^{\pi^*}(s_1) - {V}^{\pi_K}(s_1) |) \bigg]. 
\end{align*}
Since $V^{\pi^*}(s) \geq V^{\pi_K}(s)$ for any $s\in\cS$, by telescoping we get 
\begin{align} 
&\E_{s_0\sim d_0}[{V}^{\pi^*}] - \E_{s_0\sim d_0}[V^{\pi_K}] \leq \sum_{h=0}^\infty \gamma^h \times \nonumber \\
&\hspace{1cm} \bigg(\E_{s\sim d_{h,\pi_{K}}} [ |Q^{\pi^*}(s,\pi^*(s)) -  Q_{K}(s,\pi^*(s))| + |Q^{\pi^*}(s,\pi_K(s)) -  Q_{K}(s,\pi_K(s))| ] \bigg),  \label{eq:perf-diff-1} 
\end{align} 
where $d_{h,\pi_K}\in \Delta(\cS)$ for all natural numbers $h\geq 0$ is defined as
\begin{equation*}
  d_{h,\pi_{K}} =
    \begin{cases}
      d_0 & \text{if $h=0$},\\
      P^{o}_{s',\pi_{K}(s')} & \text{otherwise, with } s'\sim d_{h-1,\pi_{K}}.
    \end{cases}       
\end{equation*}
We emphasize that the state distribution $d_{h,\pi_K}$'s are different from the discounted state-action occupancy distributions. We note that a similar state distribution proof idea is used in \citet{agarwal2019reinforcement}.

Recall $\|f\|_{p,\nu}^2 = (\E_{s,a\sim\nu} |f(s,a)|^p)^{1/p}$, where $\nu\in\Delta(\ScA)$. With this we have 
\begin{align}  
\label{eq:thm-bound-part-1}
&\E_{s_0\sim d_0}[{V}^{\pi^*}] - \E_{s_0\sim d_0}[V^{\pi_K}] \leq \sum_{h=0}^\infty \gamma^h \bigg(    \|Q^{\pi^*} -  Q_{K}\|_{1,d_{h,\pi_{K}}\circ \pi^* } + \|Q^{\pi^*} -  Q_{K}\|_{1,d_{h,\pi_{K}}\circ \pi_K } \bigg), 
\end{align} 
where the state-action distributions $d_{h,\pi_{K}}\circ \pi^*(s,a)\propto d_{h,\pi_{K}}(s) \mathbbm{1}\{ a=\pi^*(s)\}$ and $d_{h,\pi_{K}}\circ \pi_K(s,a)\propto d_{h,\pi_{K}}(s) \mathbbm{1}\{ a=\pi_K(s)\}$ directly follows by comparing with \eqref{eq:perf-diff-1}.

We now bound one of the RHS terms above by bounding for any state-action distribution $\nu$ satisfying Assumption \ref{assum-concentra-condition} (in particular the following bound is true for $d_{h,\pi_{K}}\circ \pi^*$ or $d_{h,\pi_{K}}\circ \pi_K$ in \eqref{eq:perf-diff-1}): 
\begin{align}
    &\|Q^{\pi^*} -  Q_{K}\|_{1,\nu} \leq \|Q^{\pi^*} -  T Q_{K-1}\|_{1,\nu} + \|T Q_{K-1} -  Q_{K}\|_{1,\nu} \nonumber \\
    &\stackrel{(a)}{\leq} \|Q^{\pi^*} -  T Q_{K-1}\|_{1,\nu} + \sqrt{C}\|T Q_{K-1} -  Q_{K}\|_{1,\mu} \nonumber \\
    &= (\E_{s,a\sim\nu} |Q^{\pi^*}(s,a) -  T Q_{K-1}(s,a)|) + \sqrt{C}\|T Q_{K-1} -  Q_{K}\|_{1,\mu} \nonumber \\
    &\stackrel{(b)}{\leq} (\E_{s,a\sim\nu}   \gamma \sup_{\eta}|
    \E_{s'\sim P^o_{s,a}} (  (\eta-\max_{a'} Q_{K-1}(s',a'))_+ - (\eta-\max_{a'} Q^{\pi^*}(s',a'))_+) 
    |)
    \nonumber \\&\hspace{6cm}+ \sqrt{C}\|T Q_{K-1} -  Q_{K}\|_{1,\mu} \nonumber \\
    &\stackrel{(c)}{\leq} (\E_{s,a\sim\nu}  | 
    \E_{s'\sim P^o_{s,a}}   (\max_{a'} Q^{\pi^*}(s',a')-\max_{a'} Q_{K-1}(s',a'))_+|) + \sqrt{C}\|T Q_{K-1} -  Q_{K}\|_{1,\mu} \nonumber \\
    &\stackrel{(d)}{\leq} \gamma (\E_{s,a\sim\nu} \E_{s'\sim {P}^o_{s,a}} \max_{a'} |Q^{\pi^*}(s',a') -  Q_{K-1}(s',a')| )     + \sqrt{C}\|T Q_{K-1} -  Q_{K}\|_{1,\mu} \nonumber\\
    &\stackrel{(e)}{\leq} \gamma \|Q^{\pi^*} -  Q_{K-1}\|_{1,\nu'} + \sqrt{C}\|T Q_{K-1} -  Q_{K}\|_{1,\mu} \nonumber  \\
    \label{eq:pf-sketech-stp4-1}
    &\stackrel{(f)}{\leq} \gamma \|Q^{\pi^*} -  Q_{K-1}\|_{1,\nu'} + \sqrt{C}\|T_{g_{K-1}} Q_{K-1} -  Q_{K}\|_{2,\mu}+ \sqrt{C}\|T Q_{K-1} -  T_{g_{K-1}} Q_{K-1}\|_{1,\mu},
\end{align} 
where $(a)$ follows by the concentratability assumption (Assumption \ref{assum-concentra-condition}), $(b)$ from Bellman equation, operator $T$, and the fact $|\sup_{x} p(x)-\sup_{x} q(x)| \leq \sup_{x} |p(x) - q(x)|$,
$(c)$ from the fact $|(x)_+ - (y)_+| \leq |(x-y)_+|$ for any $x,y\in\R$,
$(d)$ follows by Jensen's inequality and by the facts $|\sup_{x} p(x)-\sup_{x} q(x)| \leq \sup_{x} |p(x) - q(x)|$ and $(x)_+ \leq |x|$ for any $x,y\in\R$, and $(e)$ by defining the distribution $\nu'$ as $\nu'(s',a')=\sum_{s,a} \nu(s,a) {P}^o_{s,a}(s') \mathbbm{1} \{ a'= \argmax_b |Q^{\pi^*}(s',b)- Q_{K-1}(s',b)| \}$, and $(f)$ using the fact that $\|\cdot\|_{1,\mu}\leq \|\cdot\|_{2,\mu}$.

Now, by recursion until iteration 0, we get \begin{align} \|&Q^{\pi^*} -  Q_{K}\|_{1,\nu} \leq \gamma^K \sup_{\bar{\nu}} \|Q^{\pi^*} -  Q_{0}\|_{1,\bar{\nu}} + \sqrt{C} \sum_{t=0}^{K-1} \gamma^t \|T Q_{K-1-t} -  T_{g_{K-1-t}} Q_{K-1-t}\|_{1,\mu} \nonumber
\\&\hspace{4cm}+ \sqrt{C} \sum_{t=0}^{K-1} \gamma^t  \| T_{g_{K-1-t}} Q_{K-1-t} -  Q_{K-t}\|_{2,\mu}
\nonumber\\
&\stackrel{(a)}{\leq} \frac{\gamma^K }{1-\gamma} + \sqrt{C} \sum_{t=0}^{K-1} \gamma^t \|T Q_{K-1-t} -  T_{g_{K-1-t}} Q_{K-1-t}\|_{1,\mu}
\nonumber\\&\hspace{4cm}+ \sqrt{C} \sum_{t=0}^{K-1} \gamma^t  \| T_{g_{K-1-t}} Q_{K-1-t} -  Q_{K-t}\|_{2,\mu}
\nonumber\\&\stackrel{(b)}{\leq} \frac{\gamma^K }{1-\gamma} + \frac{\sqrt{C} }{1-\gamma} \sup_{f\in\cF} \|T f -  T_{\widehat{g}_{f}} f\|_{1,\mu}
+ \frac{\sqrt{C} }{1-\gamma} \sup_{f\in\cF} \| T_{\widehat{g}_{f}} f -  \widehat{f}_{\widehat{g}_{f}}\|_{2,\mu}
\nonumber\\&\leq \frac{\gamma^K }{1-\gamma} + \frac{\sqrt{C} }{1-\gamma} \sup_{f\in\cF} \|T f -  T_{\widehat{g}_{f}} f\|_{1,\mu}
+ \frac{\sqrt{C} }{1-\gamma} \sup_{f\in\cF} \sup_{g\in\cG} \| T_{g} f -  \widehat{f}_{g}\|_{2,\mu}.  \label{eq:thm-bound-part-2} \end{align} where $(a)$ follows since $|Q^{\pi^*}(s,a)|\leq 1/(1-\gamma), Q_{0}(s,a)= 0$, and $(b)$ follows since $\widehat{g}_{f}$ is the dual variable function from the algorithm for the state-action value function $f$ and $\widehat{f}_{g}$ as the least squares solution from the algorithm for the state-action value function $f$ and dual variable function $g$ pair.

The proof is now complete combining \eqref{eq:thm-bound-part-1} and \eqref{eq:thm-bound-part-2} with Lemma \ref{lem:erm-high-prob-bound} and Lemma \ref{lem:least-squares-generalization-bound}.
\end{proof}

\section{Related Works}
\label{sec:app:related-works}

Here we provide a more detailed description of the related work to complement what we listed in the introduction (Section~\ref{sec:introduction}).

\textbf{Offline RL:} The problem of learning the optimal policy only using an offline dataset is first addressed under the generative model assumption \citep{singh1994upper, AzarMK13, haskell2016empirical, sidford2018near, agarwal2020model,li2020breaking, kalathil2021empirical}. This assumption requires generating the same uniform number of next-state samples for each and every state-action pairs.
To account for large state spaces, there are number of works \citep{antos2008learning, bertsekas2011approximate,  lange2012batch, chen2019information, xie2020q, levine2020offline, xie2021bellman} that utilize function approximation under similar assumption, concentratability assumption \citep{chen2019information} in which the data distribution $\mu$ sufficiently covers the discounted state-action occupancy. There is rich literature \citep{munos08a, farahmand2010error, lazaric2012finite, chen2019information, liu2020provably, xie2021bellman} in the conquest of identifying and improving these necessary and sufficient assumptions for offline RL that use variations of Fitted Q-Iteration (FQI) algorithm \citep{gordon1995stable, ernst2005tree}.
There is also rich literature \citep{fujimoto2019off, kumar2019stabilizing, kumar2020conservative, yu2020mopo, zhang2021towards} that develop offline deep RL algorithms focusing on the algorithmic and empirical aspects and propose multitude heuristic approaches to advance the field. All these results assume that the offline data is generated according to a single model and the goal is to find the optimal policy for the MDP with the same model. In particular, none of these works consider the \textit{offline robust RL problem} where the offline data is generated according to a (training) model which can be different from the one in testing, and the goal is to learn a policy that is robust w.r.t.~an uncertainty set.

\textbf{Robust RL:} To address the parameter uncertainty problem,~\citet{iyengar2005robust} and~\citet{nilim2005robust} introduced the RMDP framework.~\citet{iyengar2005robust} showed that the optimal robust value function and policy can be computed using the robust counterparts of the standard value iteration and policy iteration algorithms. To tackle the parameter uncertainty problem, other works considered distributionally robust setting \citep{xu2010distributionally}, modified policy iteration \citep{kaufman2013robust}, and {more general uncertainty set} \citep{wiesemann2013robust}. These initial works mainly focused on the planning problem (known transition probability dynamics) in the tabular setting.~\citet{tamar2014scaling} proposed linear function approximation method to solve large RMDPs. Though this work suggests a sampling based approach, a general model-free learning algorithm and analysis was not included.~\citet{roy2017reinforcement} proposed the robust versions of the classical model-free reinforcement learning algorithms, such as Q-learning, SARSA, and TD-learning in the tabular setting. They also proposed function approximation based algorithms for  the policy evaluation. However,  this work does not have a policy iteration algorithm with provable guarantees for learning the optimal robust policy.~\citet{derman2018soft} introduced soft-robust actor-critic algorithms using neural networks, but does not provide any global convergence guarantees for the learned policy.~\citet{tessler2019action} proposed a min-max game framework to address the robust learning problem focusing on the tabular setting.~\citet{lim2019kernel} proposed a kernel-based RL algorithm for finding the robust value function in a batch learning setting.~\citet{Mankowitz2020Robust} employed an entropy-regularized policy optimization algorithm for continuous control using neural network, but does not provide any provable guarantees for the learned policy.~\citet{panaganti2020robust} proposed least-squares policy iteration method to handle large state-action space in robust RL, but only provide asymptotic policy evaluation convergence guarantees whereas \citet{panaganti2020robust} provide finite time convergence for the policy iteration to optimal robust value.

\textbf{Other robust RL related works:} Robust control is a well-studied area in the classical control theory~\citep{zhou1996robust, dullerud2013course}. Recently, there are some interesting works that address the robust RL  problem using this framework, especially focusing on the linear quadratic regulator setting \citep{zhang2020policy}.  Risk sensitive RL algorithms \citep{borkar2002q,prashanth2016variance,fei2021exponential} and adversarial RL algorithms \citep{pinto2017robust,zhang2021provably,huang2022robust}  also address the robustness problem implicitly under different frameworks which are independent from RMDPs. \citep{zhang2022corruption} addresses the problem of \textit{corruption-robust} offline RL problem, where an adversary is allowed to change a fraction of the samples of an offline RL dataset and the goal is to find the optimal policy for the nominal linear MDP model (according to which the offline data is generated).
Our framework  and approach of robust MDP is significantly different from these line of works.

\textbf{This work:} The works that are closest to ours are by \citet{zhou2021finite,yang2021towards,panaganti22a} that address the robust RL problem in a tabular setting under the generative model assumption. Due to the generative model assumption, the offline data has the same uniform number of samples corresponding to each and every state-action pair, and tabular setting allows the estimation of the uncertainty set followed by solving the planning problem. Our work is significantly different from these in the following way: $(i)$ we consider a robust RL problem with arbitrary large state space, instead of the small tabular setting, $(ii)$ we consider a true offline RL setting where the state-action pairs are sampled according to an arbitrary distribution, instead of using the generative model assumption, $(iii)$ we focus on a function approximation approach where the goal is to directly learn optimal robust value/policy using function approximation techniques, instead of solving the  tabular planning problem with the estimated model. \textit{To the best of our knowledge, this is the first work that addresses the offline robust RL problem with arbitrary large state space  using function approximation, with provable guarantees on the performance of the learned policy.}

\section{Experiment Details}
\label{appendix:simulations}

We provide more detailed and practical version of our RFQI algorithm (Algorithm \ref{alg:RFQI-Algorithm}) in this section. We also provide more experimental results evaluated on \textit{Cartpole}, \textit{Hopper}, and \textit{Half-Cheetah} OpenAI Gym Mujoco \citep{brockman2016openai} environments.

We provide our code in \textbf{github webpage} \url{https://github.com/zaiyan-x/RFQI} containing instructions to reproduce all results in this paper.
We implemented our RFQI algorithm based on the architecture of Batch Constrained deep Q-learning (BCQ) algorithm \citep{fujimoto2019off} \footnote{Available at \url{https://github.com/sfujim/BCQ}}  and  Pessimistic Q-learning (PQL) algorithm \citep{liu2020provably} \footnote{Available at \url{https://github.com/yaoliucs/PQL}}. We note that PQL algorithm (with $b=0$ filtration thresholding \citep{liu2020provably}) and BCQ algorithm  are the practical versions of FQI algorithm with neural network architecture. 

\subsection{RFQI Practical Algorithm} \label{appendix-sub:rfqi-prac-implementation}
We provide the practical version of our RFQI algorithm in Algorithm \ref{alg:prac-RFQI-Algorithm} and highlight the difference with BCQ and PQL algorithms in \bluetext{blue} (steps 8 and 9).

\begin{algorithm}[t]
	\caption{RFQI Practical Algorithm}	
	\label{alg:prac-RFQI-Algorithm}
	\begin{algorithmic}[1]
		\STATE \textbf{Input:} Offline dataset $\cD$, radius of robustness $\rho$, maximum perturbation $\Phi$, target update rate $\tau$, mini-batch size $N$, maximum number of iterations $K$, number of actions $u$. 
		\STATE \textbf{Initialize:} Two state-action neural networks $Q_{\theta_1}$ and $Q_{\theta_2}$, \bluetext{one dual neural network $g_{\theta_3}$} \\policy (perturbation) model: $\xi_{\phi} \in [-\Phi, \Phi]),$ and action VAE $G^a_{\omega}$,  with random parameters $\theta_1$, $\theta_2$, $\phi$, $\omega$, and target networks $Q_{\theta'_1}, Q_{\theta'_2}$, $\xi_{\phi'}$ with $\theta'_1 \leftarrow \theta_1, \theta'_2 \leftarrow \theta_2$, $\phi' \leftarrow \phi$.% and state VAE $G^s_{\omega_2}$.
% 		\STATE \textbf{Pretrain $G^s_{\omega_2}$:} $\omega_2 \leftarrow\argmin_{\omega_2} ELBO(B;G^s_{\omega_2}).$
		\FOR {$k=1,\cdots,K$ } 
		\STATE Sample a minibatch $B$ with $N$ samples from $\cD$. 
		\STATE Train $\omega \leftarrow\argmin_{\omega} ELBO(B;G^a_{\omega}).$ Sample $u$ actions $a_i'$ from $G^a_{\omega} (s')$ for each $s'$. 
		\STATE Perturb $u$ actions $a_i'=a_i' + \xi_{\phi}(s',a_i')$.
		\STATE Compute next-state value target for each $s'$ in $B$: $$V_t = \max_{a'_i} (0.75\cdot \min\{ Q_{\theta'_1}, Q_{\theta'_2} \} + 0.25\cdot \max\{ Q_{\theta'_1}, Q_{\theta'_2} \}).$$
		\STATE \vspace{-0.3cm} \bluetext{$\theta_3 \leftarrow \argmin_{\theta} \sum [\max\{g_{\theta}(s,a) - V_t(s'), 0\}  - (1-\rho) g_{\theta}(s,a)].$}
		\STATE \bluetext{Compute next-state Q target for each $(s, a, r, s' )$ pair in $B$:  $$ Q_t(s,a) = r - \gamma \cdot \max\{g_{\theta_3}(s,a) - V_t(s'), 0\}  + \gamma (1-\rho) g_{\theta_3}(s,a). $$}
        \STATE \vspace{-0.3cm} $\theta \leftarrow \argmin_{\theta} \sum (Q_t(s,a)  - Q_{\theta}(s,a))^2.$
        \STATE Sample $u$ actions $a_i$ from $G^a_{\omega} (s)$ for each $s$. 
        \STATE $\phi \leftarrow \argmax_{\phi} \sum \max_{a_i} Q_{\theta_1}(s,a_i+\xi_{\phi}(s,a_i))$.
        \STATE Update target network: $\theta'=(1-\tau)\theta' + \tau\theta, \phi'=(1-\tau)\phi' + \tau\phi$.
		\ENDFOR
		\STATE \textbf{Output policy:} Given $s$, sample $u$ actions $a_i$ from $G^a_{\omega} (s)$. Select action $a  = \argmax_{a_i} Q_{\theta_1}(s,a_i+\xi_{\phi}(s,a_i))$.
	\end{algorithmic}
\end{algorithm}

\textbf{RFQI algorithm implementation details}: The Variational Auto-Encoder (VAE) $G^a_\omega$ \citep{kingma2013auto} is defined by two networks, an encoder $E_{\omega_1}(s,a)$ and decoder $D_{\omega_2}(s, z)$, where $\omega = \{\omega_1, \omega_2\}$. The encoder outputs mean and standard deviation, $(\mu, \sigma) = E_{\omega_1}(s,a)$, of a normal distribution. 
A latent vector $z$ is sampled from the standard normal distribution and for a state $s$, the decoder maps them to an action $D_{\omega_2}:(s, z)\mapsto\tilde{a}$. Then the evidence lower bound ($ELBO$) of VAE is given by $ELBO(B;G^a_{\omega})=\sum_{B} (a - \tilde{a})^2 + D_{\text{KL}}(\cN(\mu, \sigma),\cN(0,1)),$ where $\cN$ is the normal distribution with mean and standard deviation parameters. We refer to \citep{fujimoto2019off} for more details on VAE. We also use the default VAE architecture from BCQ algorithm \citep{fujimoto2019off} and PQL algorithm \citep{liu2020provably} in our RFQI algorithm.

We now focus on the additions described in blue (steps 8 and 9) in Algorithm \ref{alg:prac-RFQI-Algorithm}. For all the other networks we use default architecture from BCQ algorithm \citep{fujimoto2019off} and PQL algorithm \citep{liu2020provably} in our RFQI algorithm.\\
(1) In each iteration $k$, we solve the dual variable function $g_\theta$ optimization problem (step 4 in Algorithm \ref{alg:RFQI-Algorithm}, step 8 in Algorithm \ref{alg:prac-RFQI-Algorithm})  implemented by ADAM \citep{kingma2014adam} on the minibatch $B$ with the learning rate $l_1$ mentioned in Table \ref{tab:hp}.\\
(2) Our state-action value target function corresponds to the robust state-action value target function described in \eqref{eq:loss-rfqi-1}. This is reflected in step 9 of Algorithm \ref{alg:prac-RFQI-Algorithm}. The state-action value function $Q_\theta$ optimization problem (step 5 in Algorithm \ref{alg:RFQI-Algorithm}, step 9 in Algorithm \ref{alg:prac-RFQI-Algorithm}) is  implemented by ADAM \citep{kingma2014adam} on the minibatch $B$ with the learning rate $l_2$ mentioned in Table \ref{tab:hp}.

\begin{table}[H]
	\begin{center}
		\begin{tabular}{|c|c|c|c|c|}
			\hline 
			Environment & Discount & Learning rates  & Q Neural nets & Dual Neural nets \\ 
			 & $\gamma$ & $[l_1,l_2]$  & $\theta_1=\theta_2=[h_1,h_2]$ & $\theta_3=[h_1,h_2]$ \\ 
			\hline & &&& \vspace{-0.3cm}\\
			CartPole & $0.99$ & $[10^{-3},10^{-3}]$  & $[400, 300]$ & $[64, 64]$\\
			Hopper & $0.99$ & $[10^{-3},8\times 10^{-4}]$  & $[400, 300]$ & $[64, 64]$\\
			 & & $[3\times 10^{-4},6\times10^{-4}]$  & & \\
			Half-Cheetah & $0.99$ & $[10^{-3},8\times 10^{-4}]$  & $[400, 300]$ & $[64, 64]$\\
			 & & $[3\times 10^{-4},6\times10^{-4}]$  & & \\
			\hline
		\end{tabular}
	\end{center}\caption{Details of hyper-parameters in FQI and RFQI algorithms experiments.} \label{tab:hp}
\end{table}
\textbf{Hyper-parameters details}: We now give the description of hyper-parameters used in our codebase  in Table \ref{tab:hp}. We use same hyper-parameters across different algorithms. Across all learning algorithms we use $\tau=0.005$ for the target network update, $K=5\times10^5$ for the maximum iterations, $|\cD|=10^6$ for the offline dataset, $|B|=1000$ for the minibatch size. We used grid-search for $\rho$ in $\{0.2,0.3,\cdots,0.6\}$. We also picked best of the two sets of learning rates mentioned in Table \ref{tab:hp}. For all the other hyper-parameters we use default values from BCQ algorithm \citep{fujimoto2019off} and PQL algorithm \citep{liu2020provably} in our RFQI algorithm that can be found in our code.

\textbf{Offline datasets}: Now we discuss the offline dataset used in the our training of FQI and RFQI algorithms. For the fair comparison in every plot, we train both FQI and RFQI algorithms on same offline datasets.

\textit{Cartpole dataset} $\cD_{\textrm{c}}$: We first train proximal policy optimization (PPO) \citep{schulman2017proximal} algorithm, under default RL baseline zoo \citep{rl-zoo3} parameters. We then generate the Cartpole dataset $\cD_{\textrm{c}}$ with $10^5$ samples  using  an $\epsilon$-greedy ($\epsilon=0.3$) version of this PPO trained policy. We note that this offline dataset contains non-expert behavior meeting the richness of the data-generating distribution assumption in practice. 

\textit{Mixed dataset} $\cD_{\textrm{m}}$: For the MuJoCo environments, \textit{Hopper} and \textit{Half-Cheetah},  we increase the richness of the dataset since these are high dimensional problems. We first train soft actor-critic (SAC) \citep{haarnoja2018soft} algorithm, under default RL baseline zoo \citep{rl-zoo3} parameters,   with replay buffer updated by a fixed $\epsilon$-greedy ($\epsilon=0.1$) policy with the model parameter \textit{actuator\_ctrlrange} set to $[-0.85,0.85]$. We then generate  the mixed dataset $\cD_{\textrm{m}}$ with $10^6$ samples from this $\epsilon$-greedy ($\epsilon=0.3$) SAC trained policy. We note that such a dataset generation gives more diverse set of observations than the process of $\cD_{\textrm{c}}$ generation for fair comparison between FQI and RFQI algorithms.

\textit{D4RL dataset} $\cD_{\textrm{d}}$: We consider the \textit{hopper-medium} and \textit{halfcheetah-medium} offline datasets in \citep{fu2020d4rl} which are benchmark datasets in offline RL literature \citep{fu2020d4rl,levine2020offline,liu2020provably}. These `medium' datasets are generated by first training a policy online using Soft Actor-Critic \citep{haarnoja2018soft}, early-stopping the training, and collecting $10^6$ samples from this partially-trained policy. We refer to \citep{fu2020d4rl} for more details.

We end this  section by mentioning the software and hardware configurations used. The training and evaluation is done using  three computers with the following configuration. Operating system is Ubuntu 18.04 and Lambda Stack; main softwares are PyTorch, Caffe, CUDA, cuDNN, Numpy, Matplotlib; processor is AMD Threadripper 3960X (24 Cores, 3.80 GHz); GPUs are 2x RTX 2080 Ti; memory is 128GB RAM; Operating System Drive is 1 TB SSD (NVMe); and Data Drive is 4TB HDD.

\subsection{More Experimental Results}

\begin{figure*}[ht]
\vspace{-0.5cm}
	\centering
	\begin{minipage}{.32\textwidth}
		\centering
		\includegraphics[width=\linewidth]{figures/CartPolePerturbed-v0_action.pdf}
% 		\captionof{figure}{Cartpole}
% 		\label{fig:cart_failprob}
	\end{minipage}
	\begin{minipage}{.32\textwidth}
		\centering
		\includegraphics[width=\linewidth]{figures/CartPolePerturbed-v0_force_mag.pdf}
% 		\captionof{figure}{Cartpole}
% 		\label{fig:cart_force}
	\end{minipage}
	\begin{minipage}{.32\textwidth}
		\centering
		\includegraphics[width=\linewidth]{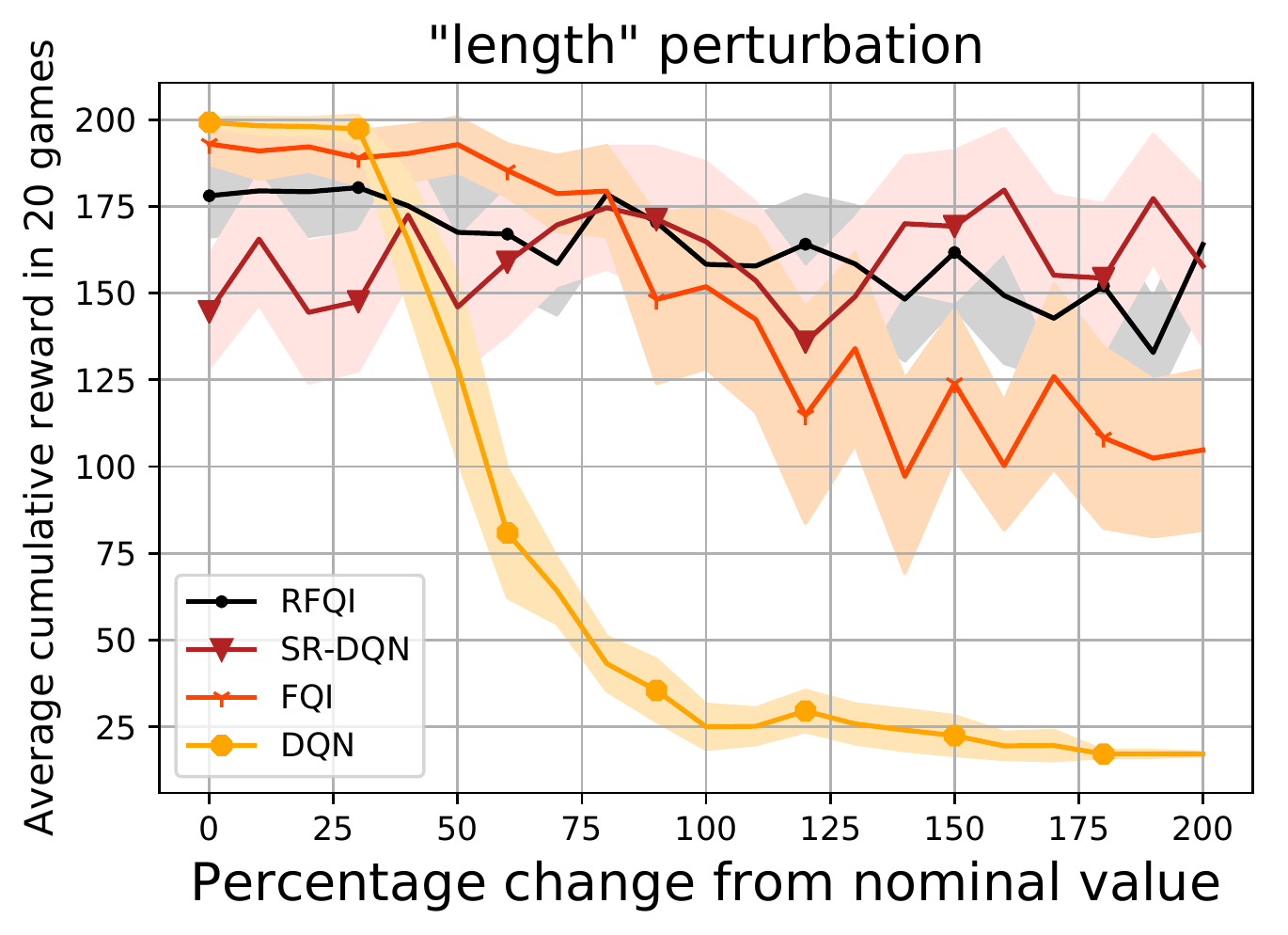}
% 		\captionof{figure}{Cartpole}
% 		\label{fig:cart_pole_length}
	\end{minipage}
	\captionof{figure}{ \textit{Cartpole  simulation results on offline dataset $\cD_{\textrm{c}}$.} Average cumulative reward in $20$ episodes versus different model parameter perturbations mentioned in the respective titles.}
	\label{fig:cartpole}
% \end{figure*}
% \begin{figure*}[t]
	\centering
	\begin{minipage}{.32\textwidth}
		\centering
		\includegraphics[width=\linewidth]{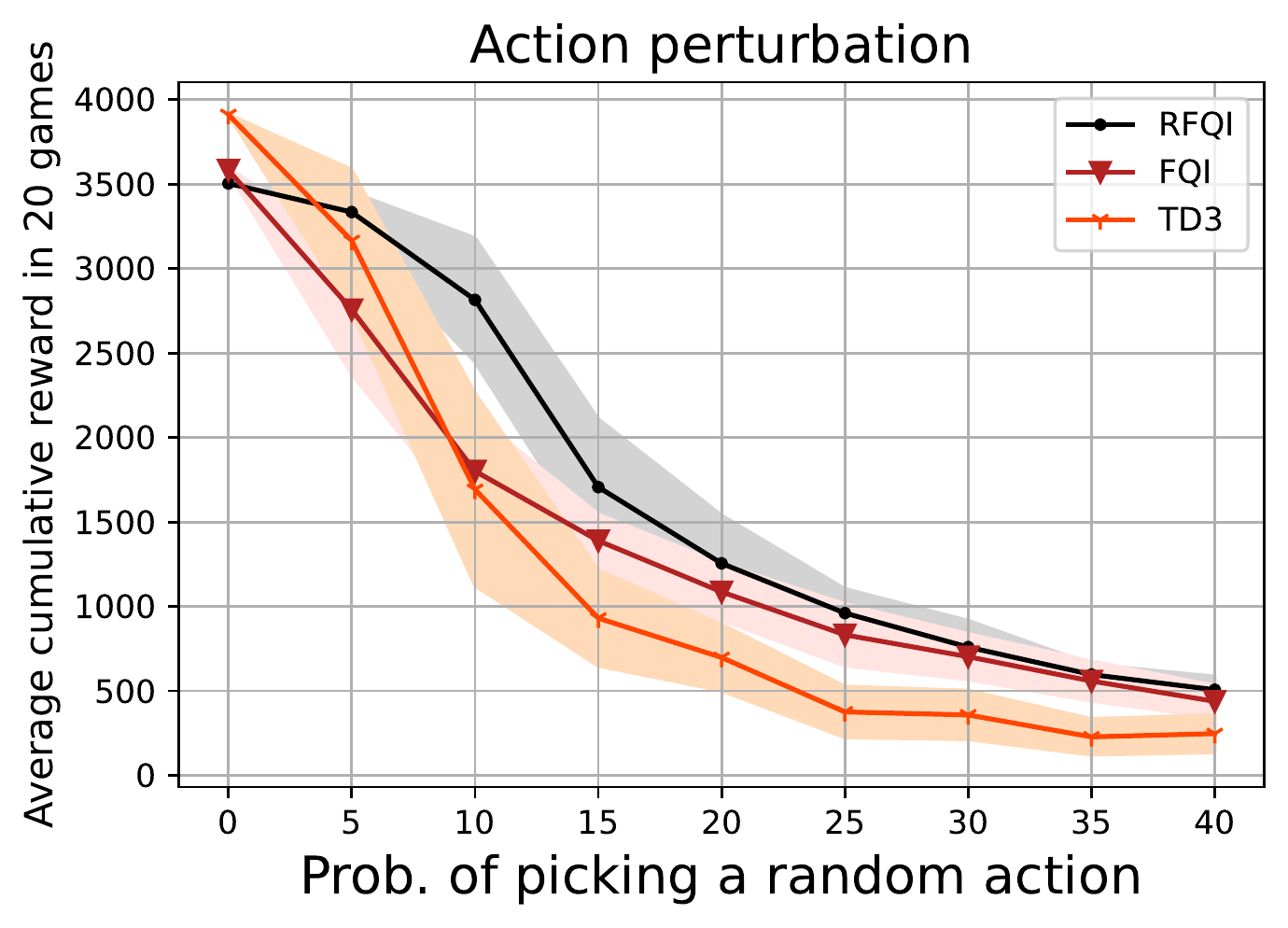}
% 		\captionof{figure}{Hopper}
% 		\label{fig:hopper_action}
	\end{minipage}
	\begin{minipage}{.32\textwidth}
		\centering
		\includegraphics[width=\linewidth]{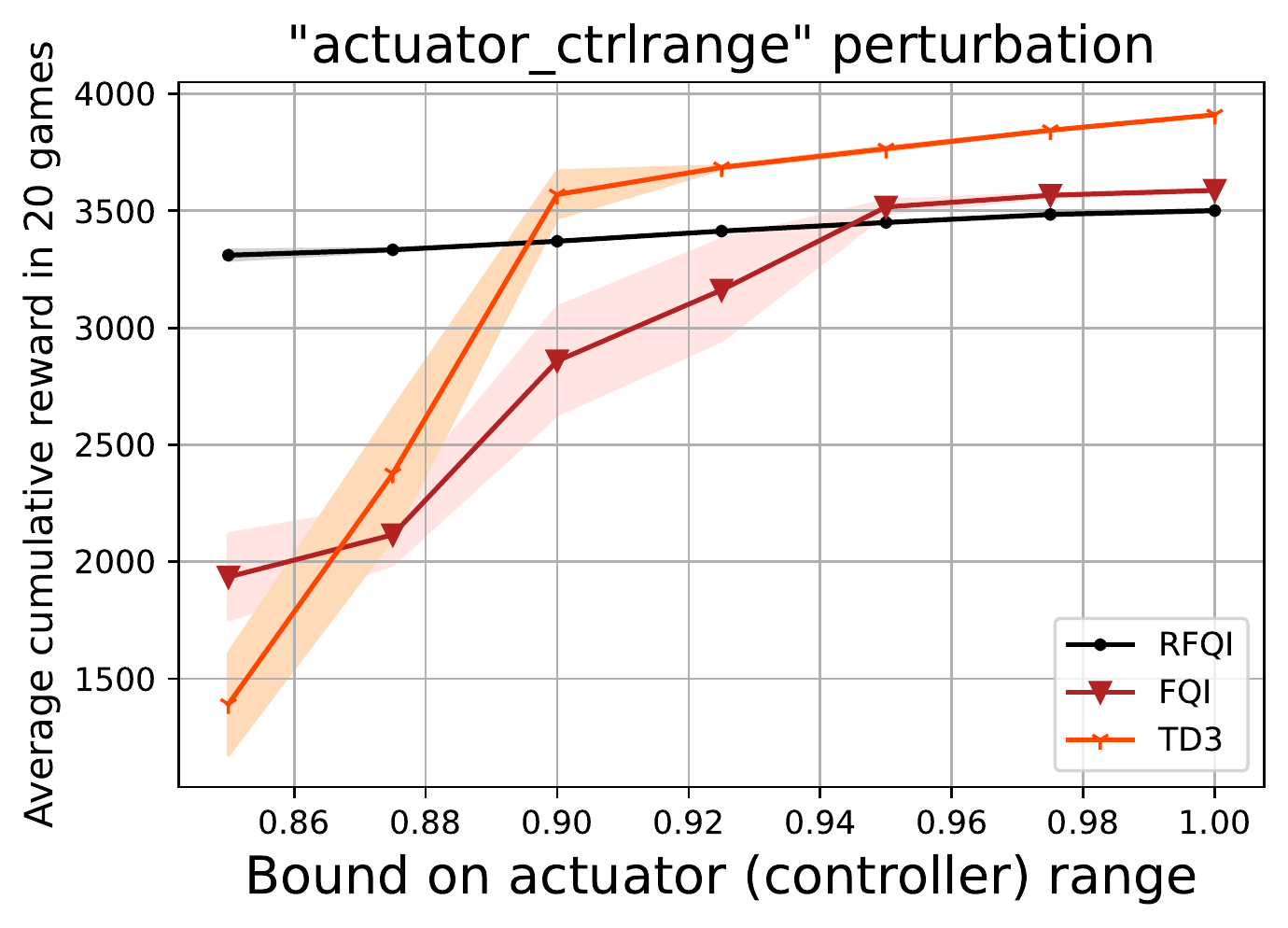}
% 		\captionof{figure}{Hopper}
% 		\label{fig:hopper_actuator_ctrlrange}
	\end{minipage}
	\begin{minipage}{.32\textwidth}
		\centering
		\includegraphics[width=\linewidth]{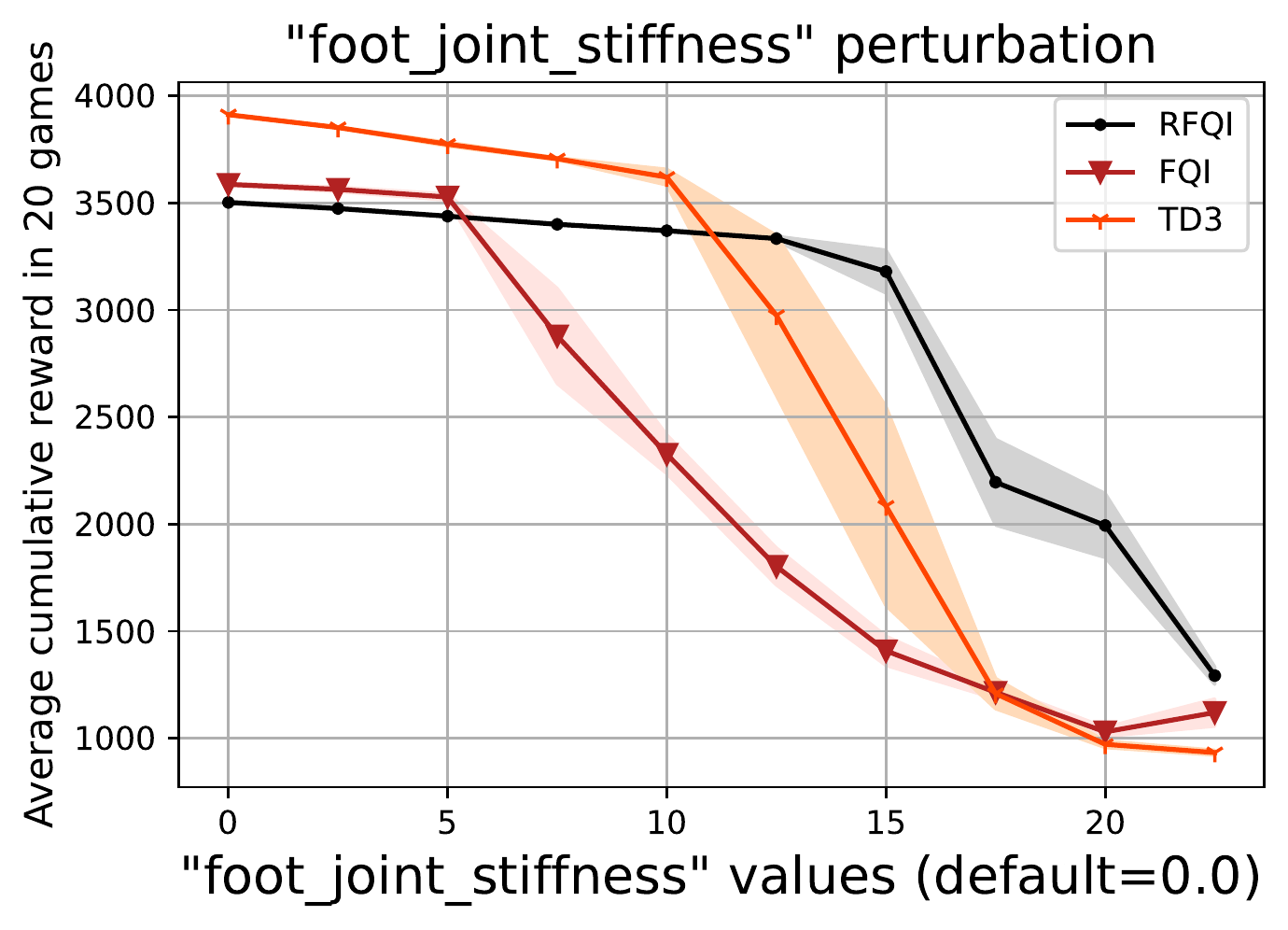}
	\end{minipage}
	\captionof{figure}{ \textit{Hopper  simulation results on offline dataset $\cD_{\textrm{m}}$.} Average cumulative reward in $20$ episodes versus different model parameter perturbations mentioned in the respective titles.}
	\label{fig:hopper-mixed}
\end{figure*}

Here we provide more experimental results and details in addition to Fig. \ref{fig:cart_force-in}-\ref{fig:hopper_leg} in Section \ref{sec:main-experiments}.

For the \textit{Cartpole}, we compare RFQI algorithm against the non-robust RL algorithms FQI and DQN, and the soft-robust RL algorithm proposed in \cite{derman2018soft}. We trained FQI and RFQI algorithms  on the dataset $\cD_{\textrm{c}}$ (a detailed description of data set is provided in Appendix \ref{appendix-sub:rfqi-prac-implementation}).
We test the robustness of the algorithms by changing the parameters \textit{force\_mag} (to model external force disturbance), \textit{length} (to model change in pole length), and also by introducing action perturbations (to model actuator noise). The nominal value of \textit{force\_mag} and \textit{length} parameters are $10$ and $0.5$ respectively. Fig. \ref{fig:cartpole} shows superior robust performance of RFQI compared to the non-robust FQI and DQN.
For example, consider the action perturbation performance plot in Fig. \ref{fig:cartpole} where RFQI algorithm improves by $75\%$ compared to FQI algorithm in average cumulative reward for a $40\%$ chance of action perturbation.
We note that we found $\rho=0.5$ is the best from grid-search for RFQI algorithm. The RFQI performance is similar to that of  soft-robust DQN. We note that soft-robust DQN algorithm is an online deep RL algorithm (and not an offline RL algorithm) and has no provable performance guarantee. Moreover,  soft-robust DQN algorithm requires generating  online data according a number of models in the uncertainty set, whereas RFQI only requires offline  data according to a single nominal training model.

Before we proceed to describe our results on the OpenAI Gym MuJoCo \citep{brockman2016openai} environments \textit{Hopper} and \textit{Half-Cheetah}, we first mention their model parameters and its corresponding nominal values in Table \ref{tab:model-parameters}. The model parameter names are self-explanatory, for example, stiffness control on the leg joint is the \textit{leg\_joint\_stiffness}, range of actuator values is  the \textit{actuator\_ctrlrange}. The front and back parameters in Half-Cheetah are for the front and back legs. We refer to the perturbed environments provided in our code and the \textit{hopper.xml, halfcheetah.xml} files in the environment assets of OpenAI Gym MuJoCo \citep{brockman2016openai} for more information regarding these model parameters.

\begin{table}[H]
	\begin{center}
		\begin{tabular}{|c|c|c|}
			\hline 
			Environment & Model parameter & Nominal range/value  \\  
			\hline & & \vspace{-0.3cm}\\
			Hopper & \textit{actuator\_ctrlrange} & $[-1,1]$ \\
			 & \textit{foot\_joint\_stiffness} & $0$ \\
			 & \textit{leg\_joint\_stiffness} & $0$ \\
			 & \textit{thigh\_joint\_stiffness} & $0$ \\
			 & \textit{joint\_damping} & $1$ \\
			 & \textit{joint\_frictionloss} & $0$ \\
			\hline & & \vspace{-0.3cm}\\
			 & \textit{joint\_frictionloss} & $0$ \\
			Half-Cheetah & front \textit{actuator\_ctrlrange} & $[-1,1]$ \\
			& back \textit{actuator\_ctrlrange} & $[-1,1]$ \\
			& front \textit{joint\_stiffness} = (\textit{thigh\_joint\_stiffness}, & \\
			& \textit{shin\_joint\_stiffness}, \textit{foot\_joint\_stiffness}) & $(180,120,60)$ \\
			& back \textit{joint\_stiffness} = (\textit{thigh\_joint\_stiffness}, & \\
			& \textit{shin\_joint\_stiffness}, \textit{foot\_joint\_stiffness}) & $(240,180,120)$ \\
			& front \textit{joint\_damping} = (\textit{thigh\_joint\_damping}, & \\
			& \textit{shin\_joint\_damping}, \textit{foot\_joint\_damping}) & $(4.5,3.0,1.5)$ \\
			& back \textit{joint\_damping} = (\textit{thigh\_joint\_damping}, & \\
			& \textit{shin\_joint\_damping}, \textit{foot\_joint\_damping}) & $(6.0,4.5,3.0)$ \\
			\hline
		\end{tabular}
	\end{center}\caption{Details of model parameters for \textit{Hopper} and \textit{Half-Cheetah} environments.} \label{tab:model-parameters}
\end{table}
\vspace{-0.5cm}

For the \textit{Hopper}, we compare RFQI algorithm against the non-robust RL algorithms FQI and TD3 \citep{fujimoto2018addressing}. We trained FQI and RFQI algorithms on the mixed dataset $\cD_{\textrm{m}}$  (a detailed description of dataset provided in Appendix \ref{appendix-sub:rfqi-prac-implementation}). We note that we do not compare with soft robust RL algorithms because of its poor performance on MuJoCo environments in the rest of our figures. We test the robustness of the algorithm by introducing action perturbations, and by changing the model parameters  \textit{actuator\_ctrlrange}, \textit{foot\_joint\_stiffness}, and \textit{leg\_joint\_stiffness}. 
%The nominal range of \textit{actuator\_ctrlrange} is $[-1,1]$ and nominal values of both  \textit{foot\_joint\_stiffness} and \textit{leg\_joint\_stiffness} is $0$.  
Fig. \ref{fig:hopper_leg} and Fig. \ref{fig:hopper-mixed}  shows RFQI algorithm is consistently robust compared to the non-robust algorithms. We note that we found $\rho=0.5$ is the best from grid-search for RFQI algorithm. The  average episodic reward of RFQI remains almost the same initially, and later decays much less and gracefully   when compared to FQI and TD3 algorithms. For example, in plot 3 in Fig. \ref{fig:hopper-mixed}, at the \textit{foot\_joint\_stiffness} parameter value $15$, the episodic reward of FQI is only around $1400$ whereas RFQI achieves an episodic reward of $3200$. Similar robust performance of RFQI can be seen in other plots as well. We also note that TD3 \citep{fujimoto2019off} is a powerful off-policy policy gradient algorithm that relies on large $10^6$ replay buffer of online data collection, unsurprisingly performs well initially with less perturbation near the nominal models.

% \dk{Now discuss HC for $\cD_{m}$}. 

In order to verify the effectiveness and consistency of our algorithm across different offline dataset, we repeat the above experiments, on additional OpenAI Gym MuJoCo \citep{brockman2016openai} environment \textit{Half-Cheetah}, using  D4RL dataset $\cD_{\mathrm{d}}$ (a detailed description of dataset provided in Appendix \ref{appendix-sub:rfqi-prac-implementation}) which are benchmark in offline RL literature \citep{fu2020d4rl,levine2020offline,liu2020provably} than our mixed dataset $\cD_{\mathrm{m}}$. Since  D4RL dataset is a benchmark dataset for offline RL algorithms, here we focus only on the comparison between the two offline RL algorithms we consider, our  RFQI algorithm and its non-robust counterpart FQI algorithm. We now showcase the results on \textit{Hopper} and \textit{Half-Cheetah} for this setting.

\begin{figure*}[t]
	\centering
	\begin{minipage}{.32\textwidth}
		\centering
		\includegraphics[width=\linewidth]{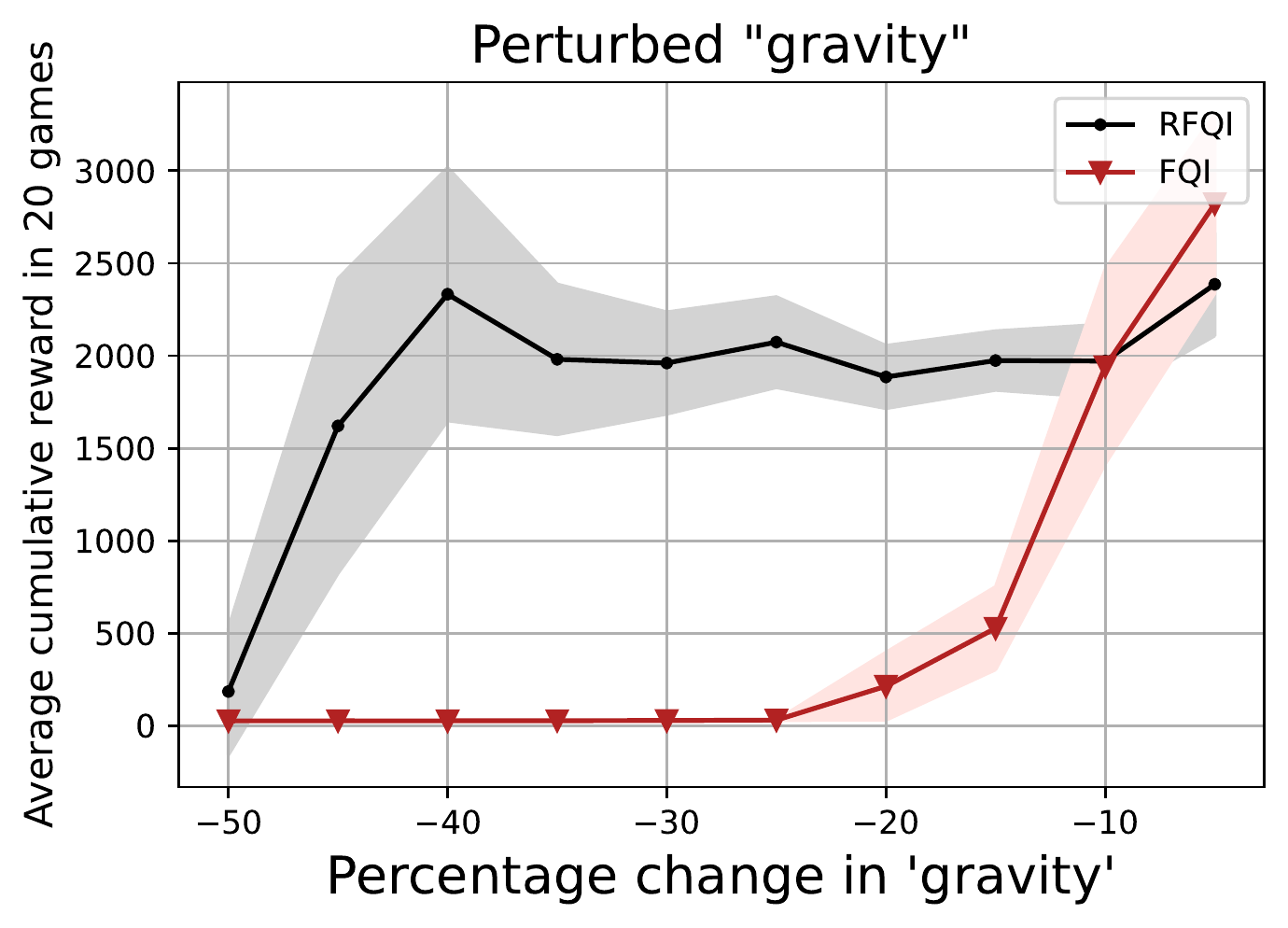}
% 		\captionof{figure}{Hopper}
% 		\label{fig:hopper_action}
	\end{minipage}
	\begin{minipage}{.32\textwidth}
		\centering
		\includegraphics[width=\linewidth]{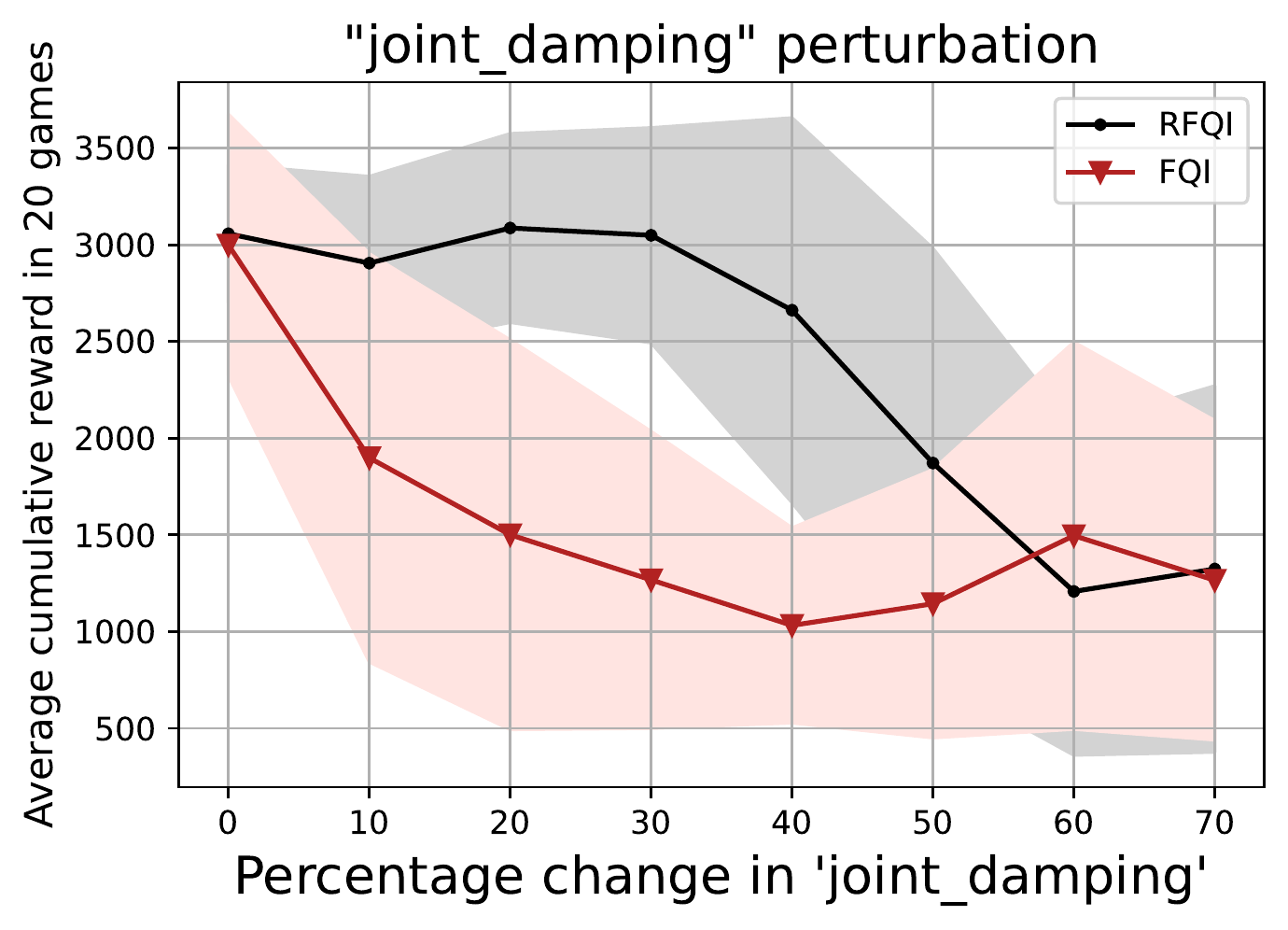}
% 		\captionof{figure}{Hopper}
% 		\label{fig:hopper_actuator_ctrlrange}
	\end{minipage}
	\begin{minipage}{.32\textwidth}
		\centering
		\includegraphics[width=\linewidth]{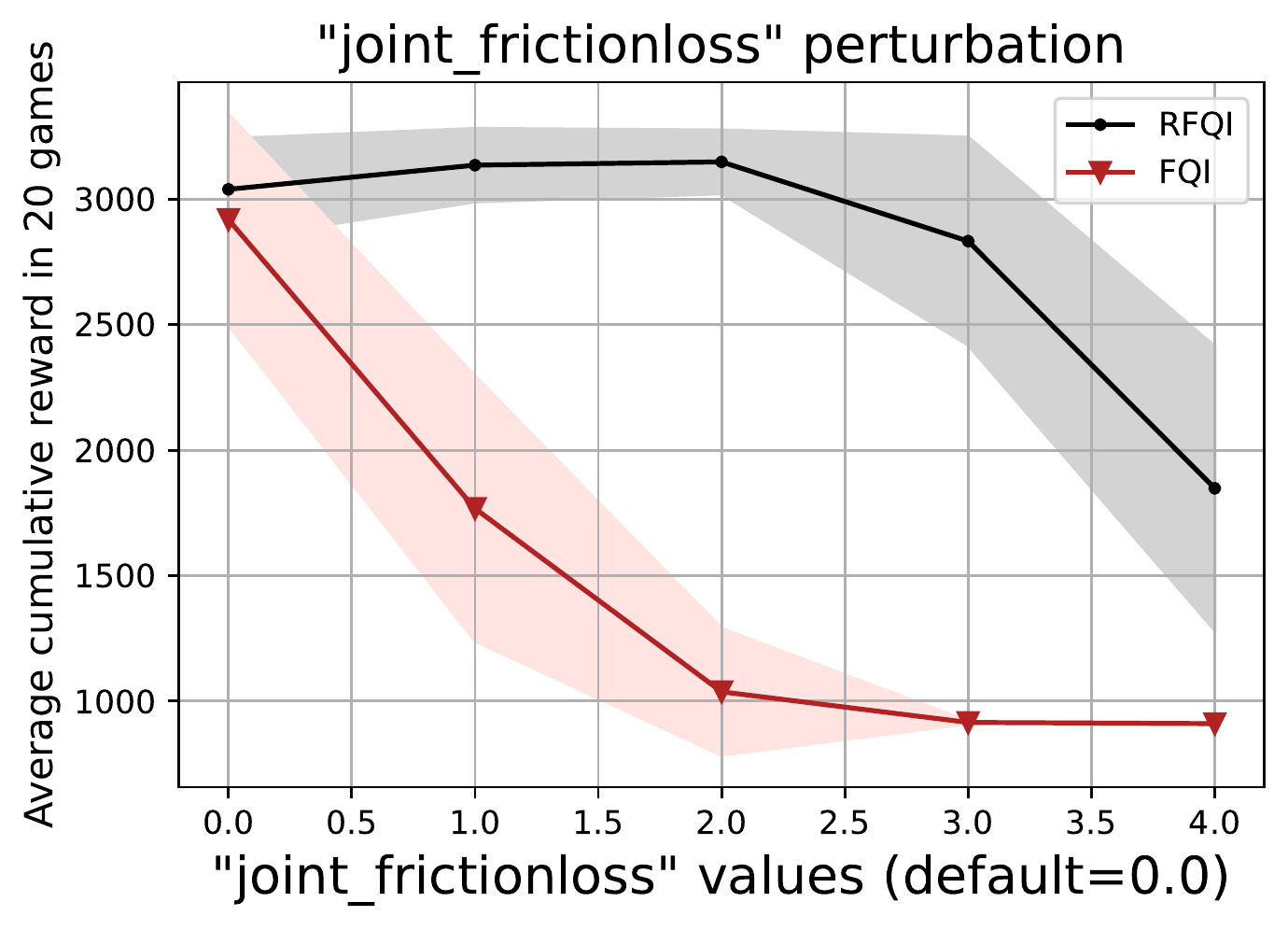}
% 		\captionof{figure}{Hopper}
% 		\label{fig:hopper_foot_joint_stiffness}
	\end{minipage}
	\captionof{figure}{ \textit{Hopper evaluation simulation results on offline dataset $\cD_{\textrm{d}}$.} Average cumulative reward in $20$ episodes versus different model parameter perturbations mentioned in the respective titles.}
	\label{fig:hopper-d4rl}
	\centering
	\begin{minipage}{.32\textwidth}
		\centering
		\includegraphics[width=\linewidth]{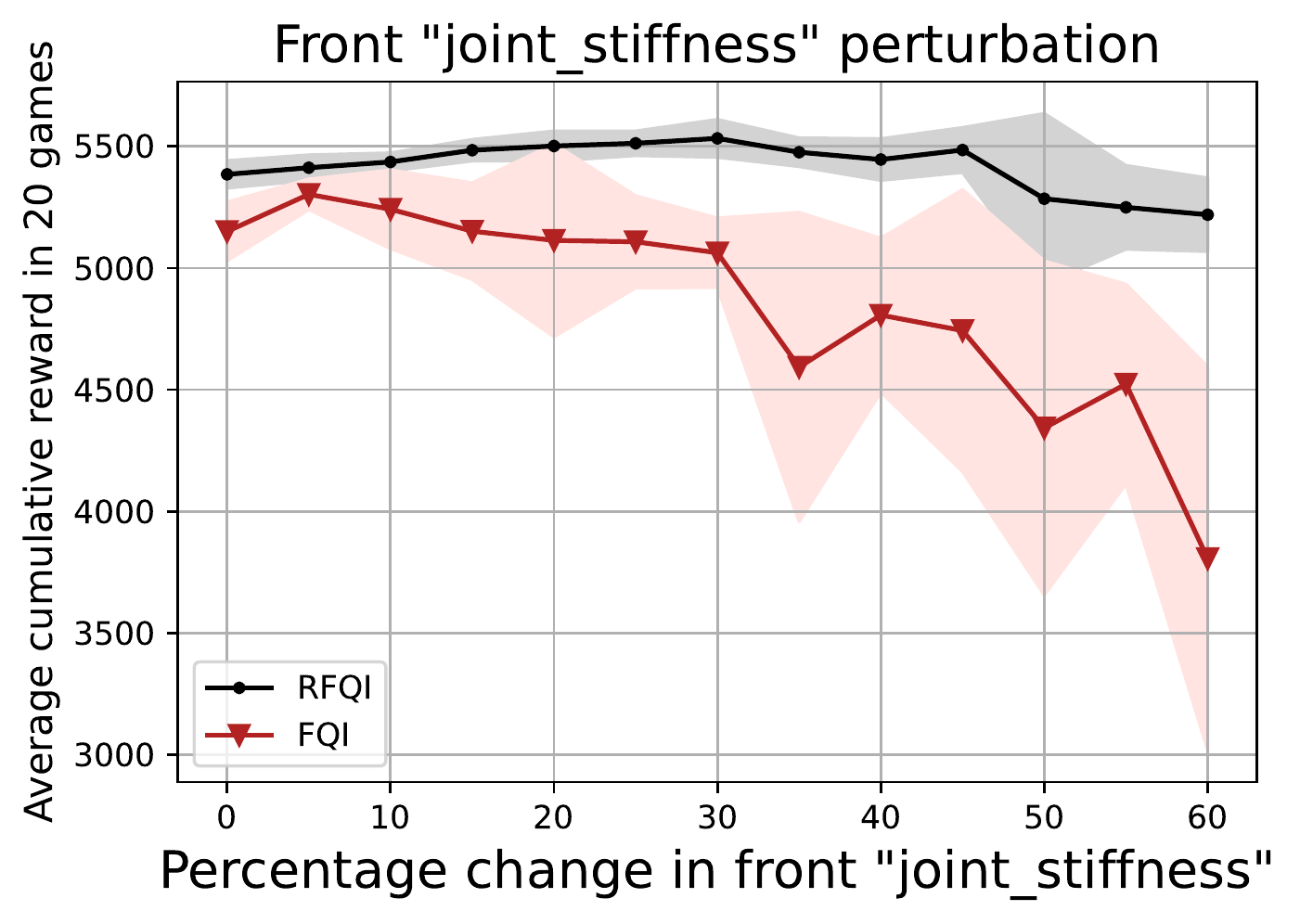}
% 		\captionof{figure}{Hopper}
% 		\label{fig:hopper_action}
	\end{minipage}
	\begin{minipage}{.32\textwidth}
		\centering
		\includegraphics[width=\linewidth]{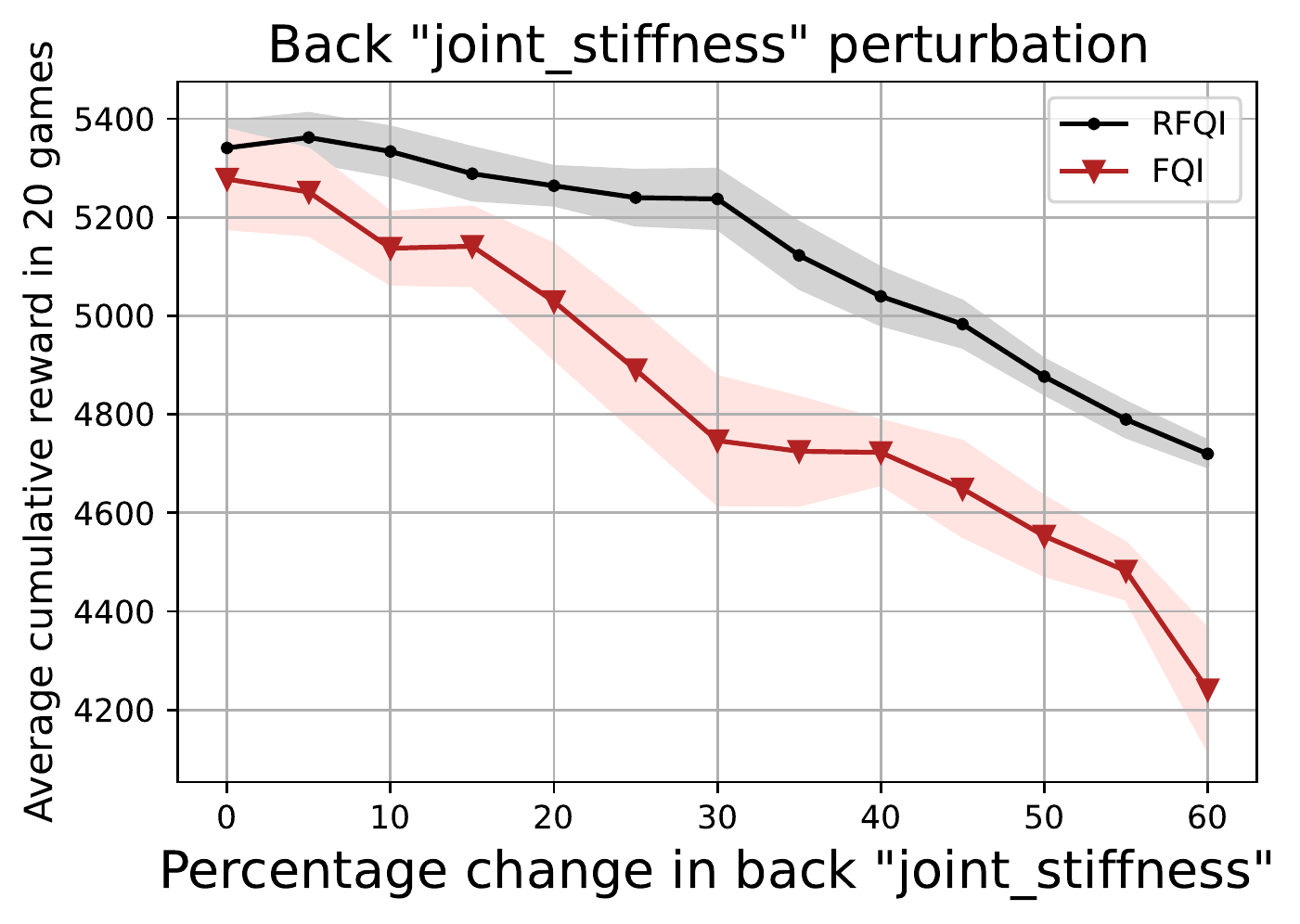}
% 		\captionof{figure}{Hopper}
% 		\label{fig:hopper_actuator_ctrlrange}
	\end{minipage}
	\begin{minipage}{.32\textwidth}
		\centering
		\includegraphics[width=\linewidth]{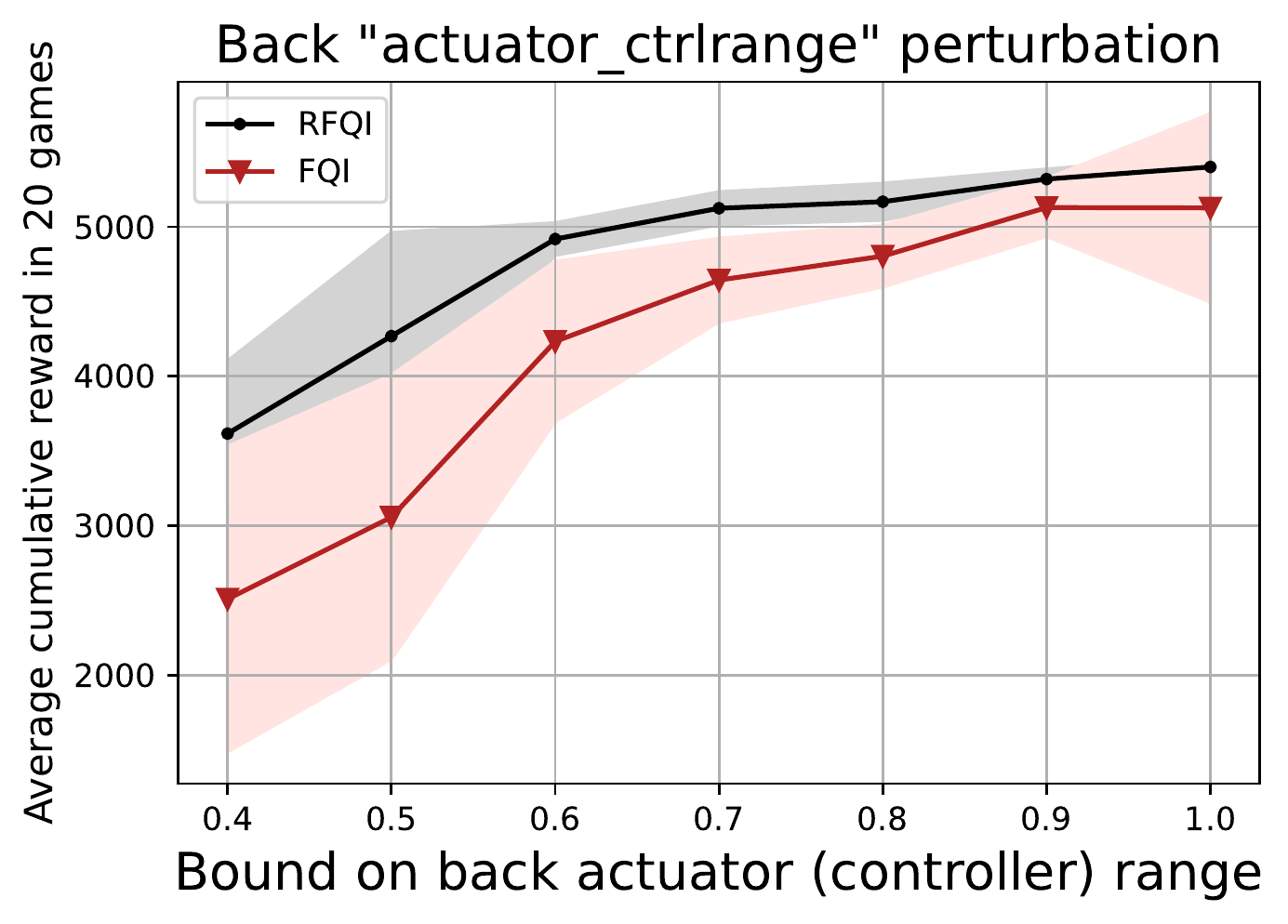}
% 		\captionof{figure}{Hopper}
% 		\label{fig:hopper_foot_joint_stiffness}
	\end{minipage}
	\captionof{figure}{ \textit{Half-Cheetah evaluation simulation results on offline dataset $\cD_{\textrm{d}}$.} Average cumulative reward in $20$ episodes versus different model parameter perturbations mentioned in the respective titles.}
	\label{fig:half-cheetah-d4rl}
\end{figure*}

For the \textit{Hopper}, we test the robustness by changing the model parameters  \textit{gravity}, \textit{joint\_damping}, and \textit{joint\_frictionloss}. 
% The nominal value of   \textit{gravity} is $9.81$.
%The nominal values of   \textit{gravity}, \textit{joint\_damping}, and \textit{joint\_frictionloss} are $9.81,1$ and $0$ respectively.  
Fig. \ref{fig:hopper-d4rl} shows
RFQI algorithm is consistently robust compared to the non-robust FQI algorithm. We note that we found $\rho=0.5$ is the best from grid-search for RFQI algorithm.
The  average episodic reward of RFQI remains almost the same initially, and later decays much less and gracefully   when compared to FQI algorithm. For example, in plot 2 in Fig. \ref{fig:hopper-d4rl}, for the $30\%$ change in \textit{joint\_damping} parameter, the episodic reward of FQI is only around $1400$ whereas RFQI achieves an episodic reward of $3000$ which is almost the same as for unperturbed model. Similar robust performance of RFQI can be seen in other plots as well.

For the \textit{Half-Cheetah}, we test the robustness by changing the model parameters  \textit{joint\_stiffness} of front and back joints, and \textit{actuator\_ctrlrange} of back joint. 
%The nominal range of \textit{actuator\_ctrlrange} is $[-1,1]$ and nominal value of both  \textit{foot\_joint\_stiffness} and \textit{leg\_joint\_stiffness} is $0$. 
Fig. \ref{fig:half-cheetah-d4rl} shows
RFQI algorithm is consistently robust compared to the non-robust FQI algorithm. We note that we found $\rho=0.3$ is the best from grid-search for RFQI algorithm. 
For example, in plot 1 in Fig. \ref{fig:half-cheetah-d4rl}, RFQI episodic reward stays at around $5500$ whereas FQI drops faster to $4300$ for more than $50\%$ change in the nominal value. Similar robust performance of RFQI can be seen in other plots as well.

\begin{figure*}[h]
    \begin{minipage}{.47\textwidth}
    \begin{tabular}{@{}c@{}}
     \includegraphics[width=0.5\linewidth]{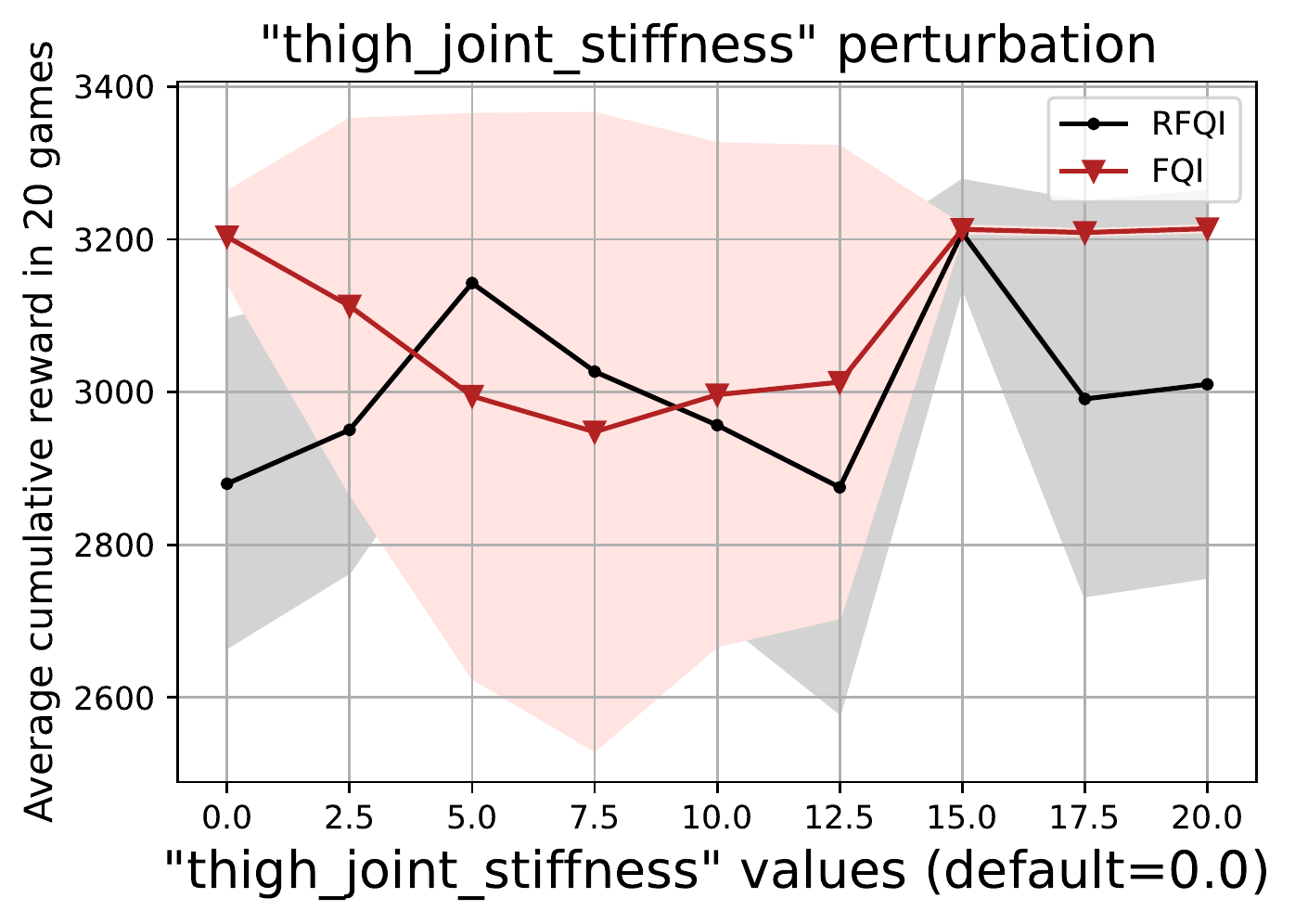}
  \includegraphics[width=0.5\linewidth]{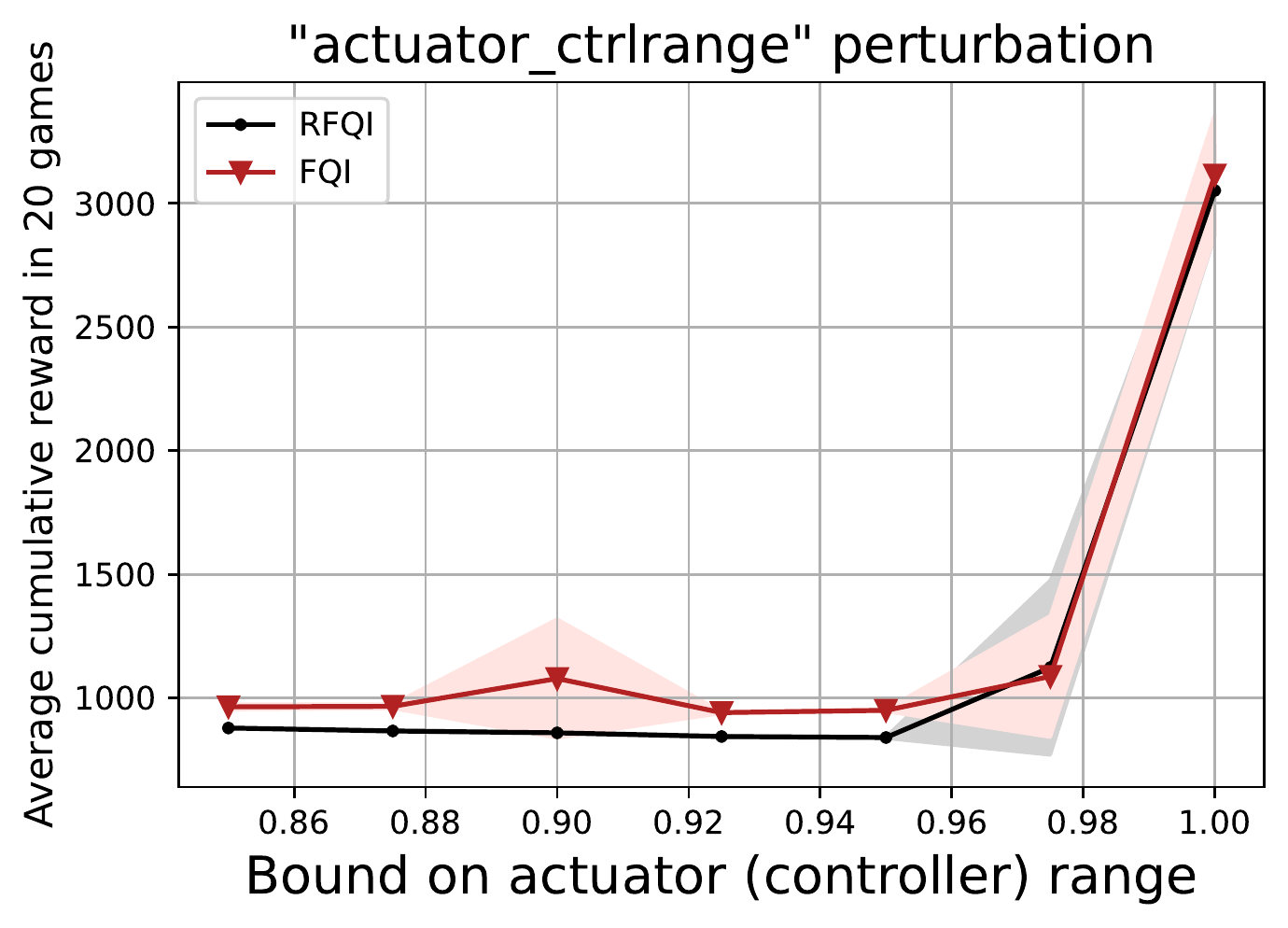}
   \end{tabular}
    \captionof{figure}{Similar performance of RFQI and FQI in  Hopper on dataset $\cD_{\textrm{d}}$ w.r.t. parameters \textit{actuator\_ctrlrange} and \textit{thigh\_joint\_stiffness}.}
	\label{fig:hopper-d4rl-same-robustness}
    \end{minipage}
    \hspace{0.5cm}
    \begin{minipage}{.47\textwidth}
    \begin{tabular}{@{}c@{}}
    % \begin{tabular}{@{}c@{}@{}c@{}}
  % Half-cheetah
  \includegraphics[width=0.5\linewidth]{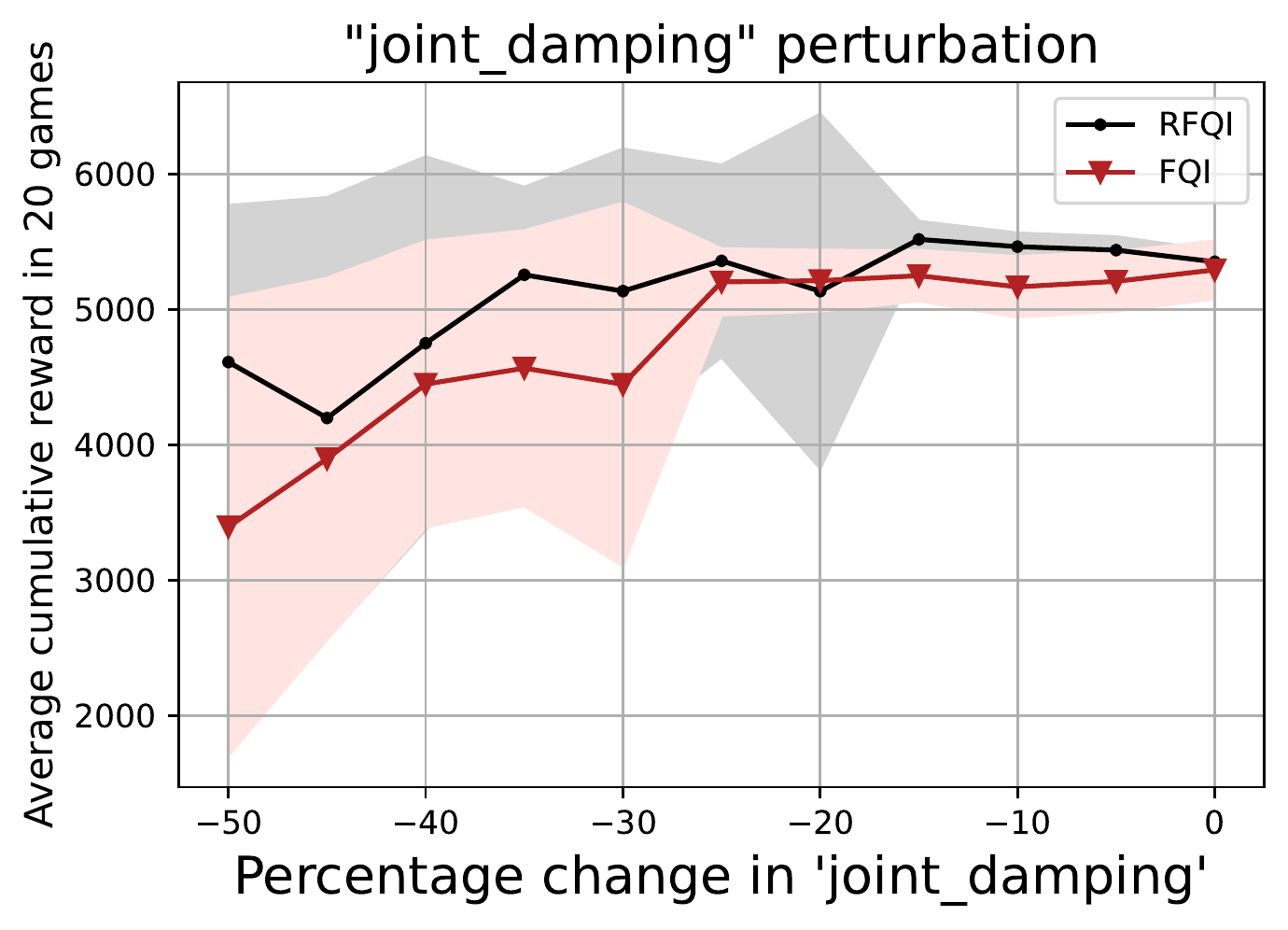}
  \includegraphics[width=0.5\linewidth]{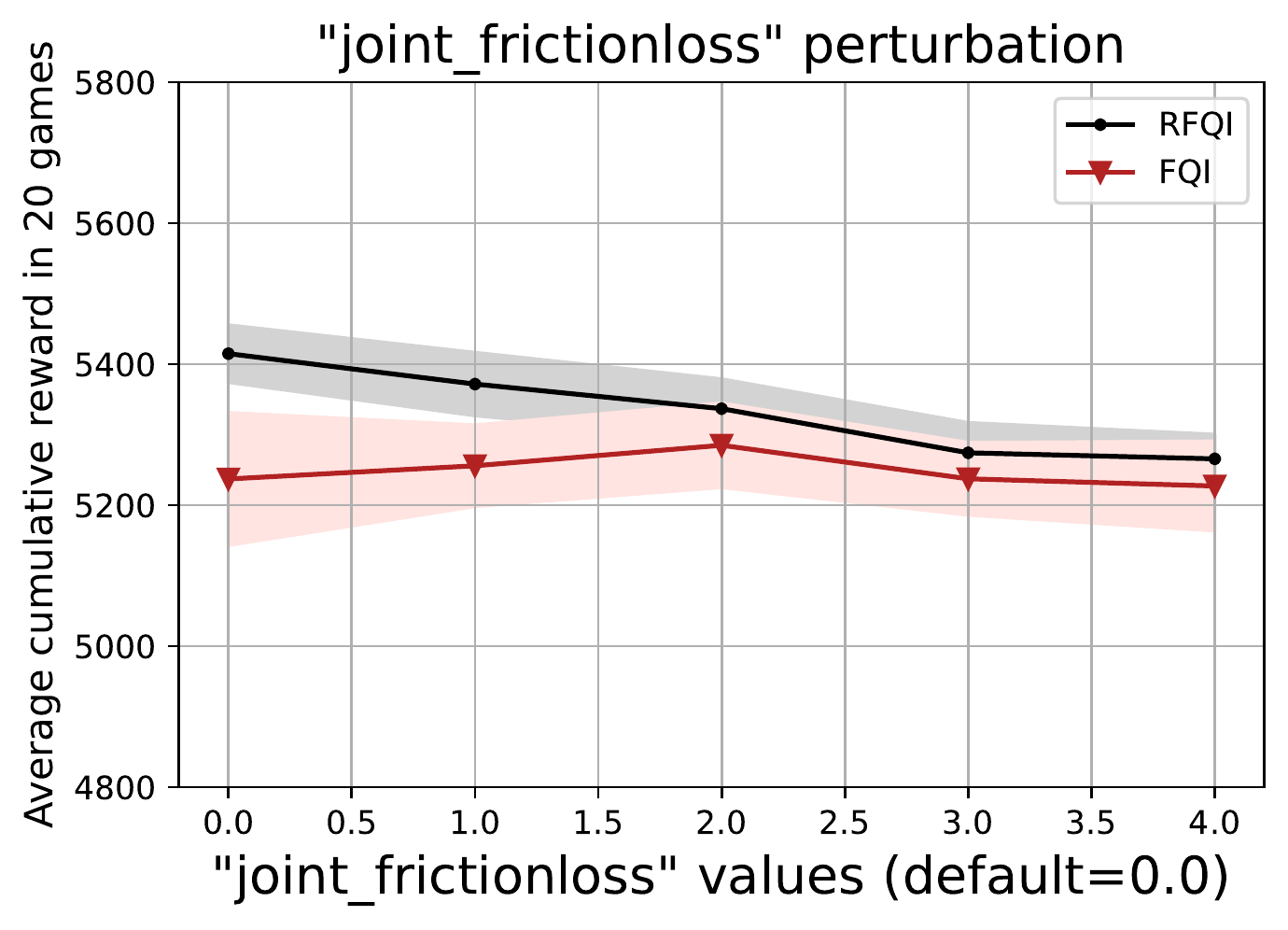}
    \end{tabular}
    \captionof{figure}{Similar performance of RFQI and FQI in  Half-Cheetah on dataset $\cD_{\textrm{d}}$ w.r.t. parameters \textit{joint\_damping} and \textit{joint\_frictionloss}.}
	\label{fig:half-cheetah-d4rl-same-robustness}
    \end{minipage}
%     \captionof{figure}{Hopper evaluation simulation results on dataset $\cD_{\textrm{d}}$.}
% 	\label{fig:hopper-d4rl-same-robustness}
% \end{wrapfigure}
\end{figure*}

As part of discussing the limitations of our work, we also provide two instances where RFQI and FQI algorithm behave similarly. RFQI and FQI algorithms trained on the D4RL dataset $\cD_{\textrm{d}}$ perform similarly under the perturbations of the \textit{Hopper} model parameters \textit{actuator\_ctrlrange} and \textit{thigh\_joint\_stiffness} as shown in Fig. \ref{fig:hopper-d4rl-same-robustness}. We also make similar observations under the perturbations of the \textit{Half-Cheetah} model parameters \textit{joint\_damping} (both front \textit{joint\_damping} and back \textit{joint\_damping}) and \textit{joint\_frictionloss} as shown in Fig. \ref{fig:half-cheetah-d4rl-same-robustness}. We observed that the robustness performance can depend on the offline data available, which was also observed for   non-robust offline RL algorithms \citep{liu2020provably,fu2020d4rl,levine2020offline}. Also, perturbing some parameters may make the problem really hard especially if the data is not representative with respect to that parameter. We believe that this is the  reason for the similar performance of  RFQI and FQI  w.r.t. some parameters. We believe that this opens up an exciting area of research on developing online policy gradient algorithms for robust RL, which may be able to overcome the restriction and challenges due to offline data.   We plan to pursue this goal in our future work.

\end{document}